\newcommand{\tabincell}[2]{\begin{tabular}{@{}#1@{}}#2\end{tabular}}
\newcommand\mgape[1]{\gape{$\vcenter{\hbox{#1}}$}}
\newtheorem{assumption}{\hspace{0pt}\bf Assumption}
\definecolor{mygray}{gray}{.9}
\definecolor{mypink}{rgb}{.99,.91,.95}
\definecolor{mycyan}{cmyk}{.3,0,0,0}
\definecolor{blush}{rgb}{0.87, 0.36, 0.51}
\definecolor{bluebell}{rgb}{0.64, 0.64, 0.82}
\definecolor{frenchlilac}{rgb}{0.53, 0.38, 0.56}
\begin{document}

\title{Machine Discovery of Partial Differential Equations from Spatiotemporal Data}

\author{\name Ye Yuan \\
        \addr School of Artificial Intelligence and Automation, State Key Laboratory of Digital Manufacturing Equipments and Technology, Huazhong University of Science and Technology, Wuhan 430074, P.R.~China\\
        \name Junlin Li \\
            \addr School of Artificial Intelligence and Automation,
         Key Laboratory of Image Processing and Intelligent Control,
        Huazhong University of Science and Technology, Wuhan 430074, P.R.~China\\
        \name Liang Li \\
            \addr School of Artificial Intelligence and Automation,
         Key Laboratory of Image Processing and Intelligent Control,
        Huazhong University of Science and Technology, Wuhan 430074, P.R.~China\\
         \name Frank Jiang  \\
        \addr School of Electrical Engineering and Computer Science,
        KTH Royal Institute of Technology, Stockholm 11428, Sweden \\
        \name Xiuchuan Tang \\
        \addr  School of Mechanical Science and Engineering, Huazhong University of Science and Technology, Wuhan 430074, P.R.~China  \\
        \name Fumin Zhang  \\
        \addr School of Electrical and Computer Engineering,
        Georgia Institute of Technology, Atlanta, 30332, USA \\
        \name Sheng Liu \\
        \addr School of Power and Mechanical Engineering, Wuhan University,
        Wuhan 430074, P.R. China \\
        \name Jorge Goncalves \\
        \addr Department of Engineering, University of Cambridge, Cambridge, CB2 1PZ, UK and the Luxembourg Centre for Systems Biomedicine, University of Luxembourg, Belvaux, L-4367, Luxembourg \\
        \name Henning U.Voss \\
        \addr Citigroup Biomedical Imaging Center, Weill Cornell Medical College, New York, 10021, USA \\
                       \name Xiuting Li \email xtingli@hust.edu.cn \\
            \addr School of Artificial Intelligence and Automation,
         Key Laboratory of Image Processing and Intelligent Control,
        Huazhong University of Science and Technology, Wuhan 430074, P.R.~China\\
        \name J\"{u}rgen Kurths \email kurths@pik-potsdam.de \\
        \addr Research Domain IV - Transdisciplinary Concepts \& Methods, Potsdam Institute for Climate Impact Research, Potsdam, 14473, Germany\\
         \name Han Ding  \\
        \addr School of Mechanical Science and Engineering, State Key Laboratory of Digital Manufacturing Equipments and Technology, Huazhong University of Science and Technology, Wuhan 430074, P.R.~China }

\editor{~}

\hyphenation{dictionary}
\maketitle
\clearpage
\begin{abstract}%
%Conventionally, researchers in physics discover partial differential equations (PDE) from  observed phenomena by combining existing physical laws that match collected measurement data. The history shows that this process of discovering and establishing physical laws in form of PDEs is arduous and often requires decades or even centuries of trial and error.
The study presents a general framework for discovering underlying Partial Differential Equations (PDEs) using measured spatiotemporal data. The method, called Sparse Spatiotemporal System Discovery ($\text{S}^3\text{d}$), decides which physical terms are necessary and which can be removed (because they are physically negligible in the sense that they do not affect the dynamics too much) from a pool of candidate functions. The method is built on the recent development of Sparse Bayesian Learning; which enforces the sparsity in the to-be-identified PDEs, and therefore can balance the model complexity and fitting error with theoretical guarantees. Without leveraging prior knowledge or assumptions in the discovery process, we use an automated approach to discover ten types of PDEs, including the famous Navier-Stokes and sine-Gordon equations, from simulation data alone. Moreover, we demonstrate our data-driven discovery process with the Complex Ginzburg-Landau Equation (CGLE) using data measured from a traveling-wave convection experiment. Our machine discovery approach presents solutions that has the potential to inspire, support and assist physicists for the establishment of physical laws from measured spatiotemporal data, especially in notorious fields that are often too complex to allow a straightforward establishment of physical law, such as biophysics, fluid dynamics, neuroscience or nonlinear optics.
\end{abstract}

\begin{keywords}
Data-driven Method, Partial Differential Equation, Machine Learning Application.
\end{keywords}

\section{Introduction}
The typical process of discovering physical laws involves collecting measurement data on physical phenomena and then using them to reconstruct and validate the synthesis of related, previously discovered physical laws. This foundational discovery approach is fueled by and leverages the ingenuity of physicists researching the phenomena. The physics community's careful and rigorous proposal and validation of physical law has successfully led to great achievements and progress towards understanding the physics of the universe \citep{Ara2002,KdV,SL, FN-1}. However, this process typically needs a rather long time. This work proposes a discovery tool that may help accelerate this discovery process, especially in the presence of complex dynamics that would otherwise take needlessly significant effort to begin uncovering and understanding \citep{Aqu2002, Aqu2009, Gre1982, Lai2011, Aih1990}. The tool automatically synthesizes the available physical features which dominate complex dynamics from data, and
incorporate them into model equations which enable physicists to study them both analytically and numerically.

Data-driven discovery of physical laws allows computational methods to learn representations from data. There have been reports of many successes using data-driven methods such as learning a nonlinear mapping from time series and predict  chaotic dynamics \citep{Far1987}, predicting catastrophes in nonlinear systems \citep{Wan2011}, equation-free modeling multi-scale/complex systems \citep{Kevrekidis2003},
inferring models of nonlinear coupled dynamical systems  \citep{Bon2007},
reconstructing networks with stochastic dynamical processes \citep{She2014}, extracting Koopman modes from data \citep{mezic2013analysis,WilliamsEDMD2015, tu2014dynamic,giannakis2017data}. In addition to these studies, other methods, such as symbolic regression \citep{Schmidt}, neural-network based tools \citep{Raissi1,Raissi3,raissi2017physics,raissi2018multistep}, make an effective attempt to data-driven discovery, though they have good prediction power, yet the learned models are hard to offer physical insights.  Thus, it is desirable to develop a general framework that specifically discovers a reasonably interpretable model directly from data.
{In the system identification community, a standard way is to transform the model discovery problem into a regression problem and design a good optimization algorithm to solve the formulated problem \citep{Ljung:1986:SIT:21413,lindquist2015linear}. More precisely, suppose the dynamics of the to-be-discovered system can be described by a partial differential equation (PDE) model of the general form:
\begin{eqnarray}\label{eq:PDE}
u_t(x,t) &=& F(x,t, u,u_x,u_{xx},u_{xxx},\ldots; \Theta),
\end{eqnarray}
where $u(x,t)$ is a state variable with respect to time $t$ and space $x\in\mathbb{R}$, $\Theta$ is
a parameter vector and F is an unknown function,  which depends on  $u$ and its partial derivatives.  To infer an accurate and interpretable representation of $F$ using measurement data of the state variable $u$,
a (large) set of dictionary with many  potential candidate terms; for instance, $\{u, u_x, u_{xx}, u^2,uu_x,uu_{xx},u_xu_{xx}, \ldots\}\triangleq \{\Phi_1,\ldots,\Phi_m\}$ (with cardinality $m$), will be created to represent $F$ via
$
F=\sum\limits_{i=1}^{m}\theta_i\Phi_i
$
and the values of the parameters/weights $\{\theta_i\}_{i=1}^{m}$ will be estimated by solving the following regression problem
\begin{eqnarray}\label{eq:linearregress}
u_t&=& \Phi\theta+\nu,
\end{eqnarray}
where $\Phi\in \mathbb{R}^{n\times m}$ is the dictionary matrix, $u_t\in \mathbb{R}^{n}$ is the time  derivatives of $u$, $\theta\in \mathbb{R}^{m}$ is the unknown vector to be learned and $\nu\in \mathbb{R}^{n}$ is the modelling error.

There are several numerical algorithms to find solutions to problems like Eq.~\eqref{eq:linearregress}. The first kind of such methods builds on the ordinary least squares regression. Examples include orthogonal reduction \citep{Xu,khanmohamadi2009spatiotemporal},
the backward elimination scheme \citep{Bar, Bill1}, least squares estimator \citep{muller2004parameter} and the alternating conditional expectation algorithm \citep{breiman1985estimating, Voss, voss1998identification}. These approaches do have an good fitting accuracy as described by \citep{tibshirani2015statistical} which, however, have two major issues: prediction accuracy and interpretability in modeling a wider range of data.
The second type of methods is the sparse regression approach, where a sparse priori assumption of underlying governing equations is incorporated  to tackle the outstanding issue, overfitting. The work of
\citep{brunton2016discovering} apply a $L_0$-norm penalty into the estimation problem of Eq.~\eqref{eq:linearregress} that reflects sparse priori. The resulting sparse identification of nonlinear dynamics (SINDy) algorithm selects the key feature terms
based on the value of least squares regression coefficients by a thresholding step \citep{mangan2016inferring,zhang2018convergence}.  This SINDy framework is recently extended to recover PDEs from spatiotemporal data \citep{Rudy} where not only time/spatial derivatives are estimated from noisy data but also
a modified algorithm, named by a Sequential Threshold Ridge regression (STRidge) algorithm, is developed to improve the robustness of the discovery. Instead of penalizing the $L_0$ norm of $\theta$, \citep{Schae2017} used lasso algorithm, a method that combines the least-squares loss with an $L_1$-constraint regularization of the coefficients, to learn active features and estimate coefficients in the underlying PDEs. In both $L_0$ and $L_1$ schemes,  the identified PDEs model is parsimonious and accurate, as the penalty term explicitly encourages the solution with few nonzero components. However, it is well known that solving the $L_0$ problem is NP-hard and hardly tractable with the number of features large due to the nonconvexity of the $L_0$ norm. The $L_1$ norm can provide a good approximation (convex relaxation) to the $L_0$ norm, but the resulting Lasso tends to over-shrink large coefficients, which leads to biased estimates \citep{fan2001variable}.

In this paper, the estimation problem in Eq. \eqref{eq:linearregress} with observations generated by spatiotemporal systems is solved in a Sparse Bayesian Learning (SBL) framework. Proposed in \citep{tipping2001sparse} and further investigated in \citep{faul2002analysis,rao2003subset,palmer2006variational}, the SBL framework treats the parameter $\theta$ as a random variable with sparse-inducing  priori distributions  determined by a set of hyperparameters. It has been shown to be more general and powerful in finding maximally sparse solution. For instance, the maximum a posteriori (MAP) estimation with a fixed weight priori is its special case \citep{rao2003subset}; the Lasso problem can be interpreted as the explicit MAP with Laplace priori \citep{figueiredo2002adaptive}; the $L_0$-norm  can be approached by a priori of the form: $p(\theta)\sim \exp(-\sum\limits_{i=1}^{m}|\theta_i|^p)$ with $p\rightarrow 0$ \citep{wipf2004sparse}. Specially, \citep{wipf2011latent}  establishes the equivalence of the SBL cost function and the objectives in canonical MAP-based sparse methods (e.g., the Lasso) via a dual-space analysis and shows that the SBL method maintains many desirable advantages over all possible MAP methods, such as using non-factorial coefficient priors  \citep{wipf2008new}, treating the  reconstruction problem with unfavorable restricted isometry
properties (RIP) \citep{wu2012dual}, implementing non-separable alternative via either $L_2$ or $L_1$ reweighting \citep{wipf2010iterative,pan2012reconstruction,chang2012data,wei,pan2018identification}. Recently, \citep{zhang2018robust} explores the advantage of Bayesian inference to provide error bars and robust discovery of governing physical laws from noisy data. However, this relatively new paradigm is still hindered by lack of the theoretical analysis and an effective optimization algorithm  which addresses issues brought by the SBL cost function.

Our main purpose in this paper is to present an effective SBL framework for locally recovering unknown PDEs from
spatiotemporal data and conduct some theoretical analysis on the sparse discovery for spatiotemporal systems.
For brevity, we refer to the proposed discovery approach as  the Sparse
Spatiotemporal System Discovery ($\text{S}^3\text{d}$) method.
The key contributions of the $\text{S}^3\text{d}$ framework include the following. First of all, we take a new perspective of the SBL algorithm. Specially, the existing methods optimizing the SBL cost function, such as the Expectation-Maximization \citep{wipf2004sparse} and MacKay update rules \citep{tipping2001sparse}, are expressed in either the coefficient space or  the latent space. Here we take the cost function expressed concurrently in the two space. The resulting difference of convex function (DC) programming is then solved by an iterative convex weighted lasso algorithm with concave-convex procedure (CCP). Secondly, many prototypical PDEs including the Navier-Stokes equation in fluid mechanics~\citep{NS}, the FitzHugh-Nagumo for nerve conduction~\citep{FN-1}, and the Schr$\ddot{\text{o}}$dinger equation in quantum mechanics~\citep{SL} are rediscovered from only spatiotemporal data, which cast off a great deal of trial and error.
Finally, we apply the proposed $\text{S}^3\text{d}$ method to
real measurements from a traveling-wave convection experiment with the discovery of the normal Complex Ginzburg-Landau Equation (CGLE) model. The numerical solution to the CGLE commendably reconstructs the observed experimental phenomenon.

Theoretically, the primary concern is the convergence and consistency of the sparse estimators with measurement data generated by spatiotemporal systems. The convergence and consistency analysis are conducted under the SBL cost function expressed jointly in the coefficient space and the latent variable space. We show that under reasonable assumptions on the optimization path in joint space, with the CCP, the SBL cost function has a local minimum and can produce the maximally sparse solution. When the cost function is separable,  we show that the DC programming formulation succeeds in recovering sparsity pattern and reaching small error with high probability.

The rest of this paper is organized as follows:
In the next subsection we summarize the notation and conventions used in this paper.
In Section~\ref{sec:S3d}, we introduce a bayesian framework for the problem of discovering PDEs from spatiotemporal data and estimation of coefficients. Our main results on the convergence and consistency of the parameter estimator are presented in Section~\ref{sec3},
with the proof included in the Appendix.
We then evaluate our method experimentally in Section~\ref{sec:experiment} with both synthetic data and real-life
data, which confirms the exact sparse recovery  of the theoretical results.
The comparison with the state-of-the-art algorithms are also provided in this section.
We conclude this paper with a discussion in Section~\ref{sec:conclusion}.}

\subsection{Notation}
Throughout this paper, we denote vectors by lowercase letters, e.g., $x$, and matrices by uppercase letters, e.g., $X$. For  a vector $x$ and a matrix $X$, we
denote its transpose by $x^{T}$ and $X^{T}$, respectively. Furthermore, $x_i$ refers to its $i$-th element  of $x$.
$\|x\|_2=\sqrt{x^T x}$ and
$\| x\|_1=\sum_{\ell} |x_{\ell}|$,
denotes its Euclidean norm and
$\ell_1$-norm.
Let $u:\Omega \rightarrow \mathbb{R}, x\in \Omega \subset \mathbb{R}^n$,
$\frac{\partial u}{\partial x_i}=\lim_{h\rightarrow 0}\frac{u(x_i+h)-u(x_i)}{h}$, provide this limit exists.
We usually write $u_t$ for $\partial u / \partial t$, and similarly $u_{x_i} \triangleq \partial u /\partial x_i$, $u_{x_ix_i} \triangleq \partial^2 u /\partial x_i^2,$
  $u_{x_ix_j} \triangleq \partial^2 u /\partial x_i \partial x_j$, etc..

\section{$\text{S}^3\text{d}$: A Bayesian Framework to Discover PDEs from Spatiotemporal Data}\label{sec:S3d}
{This section proposes a framework of the $\text{S}^3\text{d}$ method for data-driven discovery of PDEs. The proposed $\text{S}^3\text{d}$ framework
%is illustrated in Fig.~\ref{fig2}, which
includes the following three steps: Data collection, Model specification and SBL. We will consider the dynamics of a spatiotemporal system which is governed by a PDE of the
general form:
\begin{eqnarray}\label{eq:PDE}
\frac{\partial u(x,t)}{\partial t}= F\left(x,t,u(x,t),\frac{\partial u(x,t)}{\partial x_1},\cdots,\frac{\partial u(x,t)}{\partial x_n},\frac{\partial ^2 u(x,t)}{\partial x_1^2},\cdots,\frac{\partial ^2 u(x,t)}{\partial x_1\partial x_n},\cdots, \Theta\right),
\end{eqnarray}
where the dynamical variable, $u$, is $N$-component with temporal and  multi-dimensional spatial variables, $t$ and $x=(x_1,\ldots,x_n)^T \in \mathbb{R}^n$, respectively, defining the state. $\Theta$ is a parameter vector and $F$ is an unknown, $N$-dimensional function, which depends on the dynamical variable $u$ and its derivatives.
%\begin{figure}[!h]
%\includegraphics[scale=0.45]{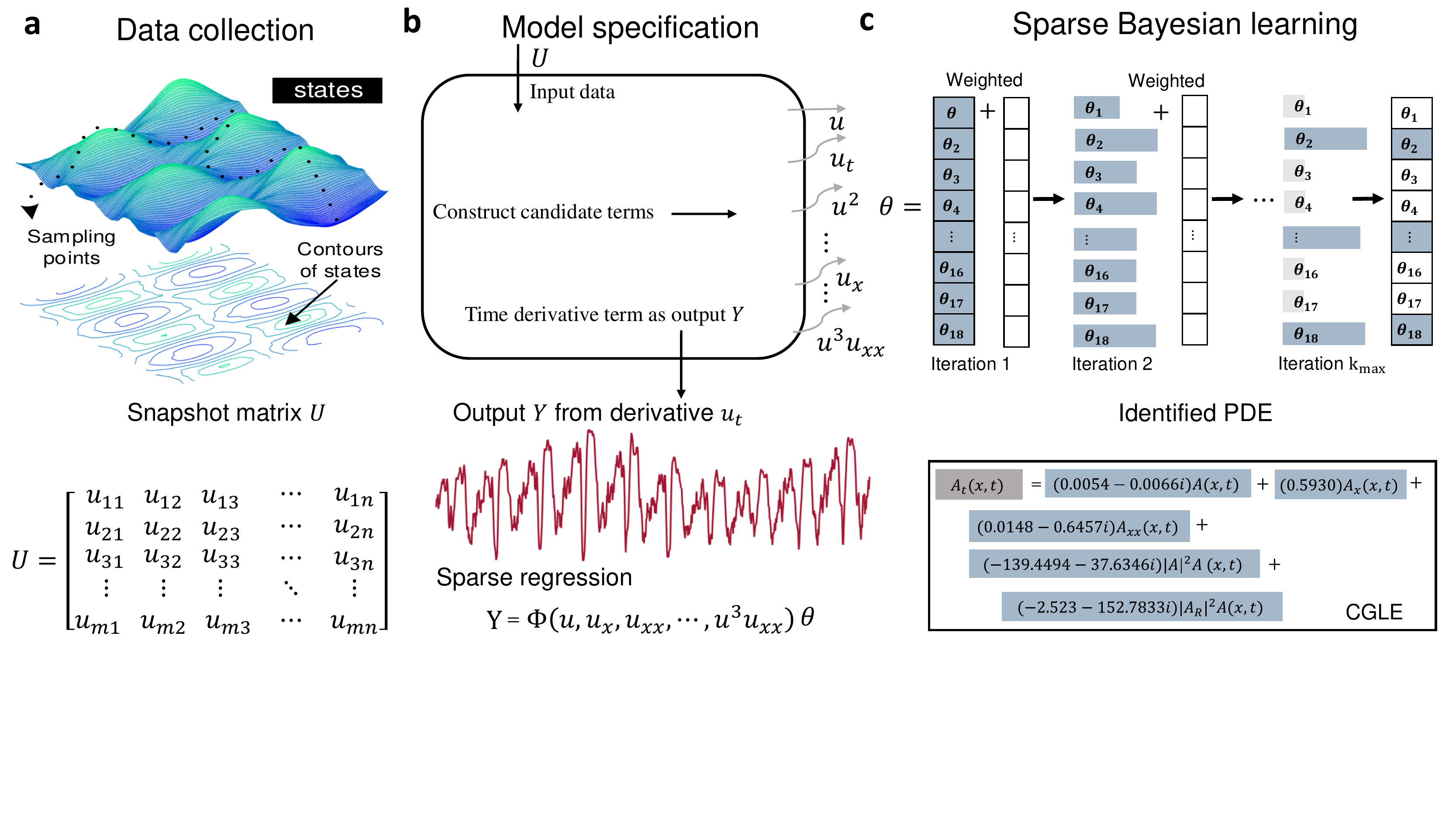}
%\caption{{\bf Schematics of the $\text{S}^3\text{d}$ method used to infer the CGLE equation from data.}
%\textbf{a}  Data collection. Data is collected from experiments of a real system (or from a numerical solution to a PDE). The collected data is organized into a snapshot matrix;
%\textbf{b}  Model specification. A large library of candidate terms including the potential features for the to-be-identified PDE is constructed. The candidate terms include the state $u$, its spatial derivatives up to 2-order and nonlinear coupling terms between them. The time derivative $u_t$ is regarded as the model output.  The collected data is used  to approximate the time derivative $u_t$ of the state $u$ and all the candidate terms.
%A set of linear equations (i.e., $y=\Phi \theta$) is then defined to represent the underlying PDE;
%\textbf{c}  Sparse Bayesian Learning. Sparse regression problem is solved using a iterative reweighted $\ell_1$-minimization algorithm. } \label{fig2}
%\end{figure}
}

\subsection{Model Specification}\label{sec:sparseregress}
In this section, we shall restrict the discussion to the case $N=1$ for notational simplicity, yet the proposed framework can be applied to general cases when $N>1$. We construct a large library of candidate terms that may appear in the function $F$ . We mainly take the model structure to be a $q^{th}$-order PDE:
\begin{eqnarray*}
u_t=\sum\limits_{i=1}^{m}\theta_i f_i\left(u^{p_1}\left(\frac{\partial u}{\partial x}\right)^{p_2}\cdots\left(\frac{\partial^q u}{\partial x^q}\right)^{p_{n+1}}\right),
\end{eqnarray*}
where $q\in \mathbb{N}^{+}$ represents the $n^{th}$-order PDE, $p=\sum\limits_{l=1
}^{n+1}p_l$ represents the $p$-th operator with $p_l\in \mathbb{Z}^{+}, \text{where } l=1,2,\ldots,n+1$ is the degree of nonlinearity, $f_i(\cdot)(i=1,2,\ldots,m)$ is an identical mapping and the term $u_t$ represents the output. The chosen model structure can pack many nonlinearities (e.g., polynomial $u^2, ~uu^2$, etc.), partial derivatives (e.g., $u_x,~u_{xx},~u_{xxx}$, etc.), their combinations (e.g., $u^2u_x,~u^3u_{xx}$, etc.). Some physics-based candidate terms (such as the reaction term $u_x-u_{xxx}$, the diffusion term $u(u-u_{xx})_x$, the convection term $u_x(u-u_{xx})$, etc.) also are included in our function library if needed. The PDE with the dictionary can be rewritten into the following regression form:
\begin{eqnarray}\label{eq-EQ}
u_{t}&=&
\left[ \begin{array}{cccccccccccc}
1&u&u_x&u_{xx}&u_{xxx}&u^2&uu_x&\cdots&u^2u_{xxx}\\
 \end{array} \right]\cdot \theta,
\end{eqnarray}
where we chose the order $q=3$ and the degree $p=3$ for clarity.  Eq.~\eqref{eq-EQ} then holds at all points in the domain of interest.

Now, we construct a feature matrix $\Phi$. Each column of $\Phi$ is a compilation of all the values of a specific candidate function on the right side of Eq.~\eqref{eq-EQ} over all grid points, and the output vector, $y$, contains the values of $u_t$ in the left side of Eq.~\eqref{eq-EQ} over all grid points. We state the set of linear equations representing our PDE as follows:
\begin{eqnarray}\label{regre}
\underset{y}{\underbrace{\left[ \begin{array}{cccccccc}
u_{t}(x_1,t_1)\\
u_{t}(x_2,t_1)\\
\vdots\\
%u_{t}(x_{n-1},t_m)\\
u_{t}(x_{n_x},t_{n_t})\\
 \end{array}
 \right]}}&=&
\underset{\text{Dictionary~matrix}~ \Phi}{\underbrace{ \left[ \begin{array}{cccccccc}
1&u(x_1,t_1)&u_x(x_1,t_1)&\cdots&u^2u_{xxx}(x_1,t_1)\\
1&u(x_2,t_1)&u_x(x_2,t_1)&\cdots&u^2u_{xxx}(x_2,t_1)\\
\vdots&\vdots&\vdots&\cdots&\vdots\\
%1&u(x_{n-1},t_m)&u_x(x_{n-1},t_m)&\cdots&u^5u_{xxx}(x_{n-1},t_m)&u|u|^2(x_{n-1},t_m)\\
1&u(x_{n_x},t_{n_t})&u_x(x_{n_x},t_{n_t})&\cdots&u^2u_{xxx}(x_{n_x},t_{n_t})\\
 \end{array} \right]}}
\underset{\theta}{\underbrace{ \left[ \begin{array}{cccccccc}
\theta_1\\
\theta_2\\
\vdots\\
%\\
\vdots\\
 \end{array}
 \right]}}.
\end{eqnarray}
To find the unknown coefficients $\theta$, we need to compute the time (and spatial) derivatives of $u$ and solve Eq.~\eqref{regre}.
Various numerical methods including the numerical differentiation \citep{FDM, Lichen, spec}, the symbolic or automatic differentiation \citep{baydin2018automatic} can be used for
the estimation of the derivatives in the dictionary matrix $\Phi$. In our framerwork, the polynomial approximation method is selected to estimate the derivatives from the noise-contained measurements. Indeed, the polynomial approximation method has been proved to be capable of effectively estimating the derivatives from the noise-contained data in \citep{Rudy}.  In Appendix~\ref{sec:Polyapp}, we explain the polynomial approximation method in detail.

A general solution to Eq.~\eqref{regre} can be obtained by using the least-squares method. For example, $\hat{\theta}$ is the minimizer of the following optimization problem:
\begin{eqnarray}\label{eq0}
\hat{\theta}=\arg \min_{\theta}\|y-\Phi \theta\|_2^2.
\end{eqnarray}
The solution, $\hat{\theta}$, is uniquely identified only if the number of grid points, $n=n_xn_t$, is larger than the number of columns in $\Phi$ and the matrix, $\Phi$, has full column rank. However, these two conditions, in general, are not always satisfied. In addition, if the dictionary is over-determined, the resulting optimization easily leads to overfitting.

\subsection{Sparse Bayesian Learning for $\text{S}^3\text{d}$}\label{subsec:SBL}
{This section proposes a SBL algorithm based CCP to solve Eq.~\eqref{regre} defined by $\text{S}^3\text{d}$. The SBL, initially introduced to solve regression and classification problems in seminal works  \citep{tipping2001sparse,wipf2004sparse},
performs parameter estimation via evidence maximization or type-$\Pi$ maximum likelihood \citep{tipping2000relevance}. From the SBL procedure, it can be seen that the SBL cost function can lead to sparse representations.
\cite{wipf2004sparse} employed the Expectation-Maximization (EM) algorithm to minimize the SBL cost function.
However, EM update rule has non-ideal convergence properties. Further, \cite{wipf2010iterative} proposed a re-weighted $L_1$ algorithm derived from the duality theory. Here,  a similar algorithm will be derived from a new perspective such that the convergence of the algorithm can be analyzed in a simpler way. We will use this algorithm to solve Eq.~\eqref{regre}.
Note that all the candidate terms in the dictionary matrix $\Phi$ and the output $y$ are approximated using the data by numerical methods.
We consider the following matrix form with the model error $\nu$,
\begin{eqnarray}\label{eq:sparserecovery}
y&=& \Phi\theta+\nu,
\end{eqnarray}
where the noise vector  $\nu$ is assumed to be i.i.d. Gaussian distributed with zero mean: $\mathcal{N}(0,\sigma^2I_n)$
with $I_n$ an $n\times n$ identity matrix, and $\theta$ is assumed to be sparse. Then the likelihood of the output $y$ given the coefficients $\theta$ is
\begin{equation}\label{eq1}
  p(y|\theta)=(2\pi\sigma^2)^{-\frac {n}{2}}\exp\left(-\frac {1}{2\sigma^2}\|y-\Phi\theta\|^2\right).
\end{equation}
To enforce the sparsity, we assume the following form prior distribution
\begin{equation}\label{eq2}
 p(\theta;\gamma)=\prod_{i=1}^{m}(2\pi\gamma_i)^{-\frac {1}{2}}\exp\left( -\frac {\theta_i^2}{2\gamma_i}\right),
\end{equation}
where $\gamma:=[\gamma_1,\gamma_2,\cdots,\gamma_m]^T$ is a vector of $m$ hyperparameters. Although the hyperparametric vector $\gamma$ is unknown, it determines the variance of each entry of $\theta$ and ultimately produces sparsity properties. In particular, if $\gamma_i=0$, one can get $\mathbb{P}(\theta_i=0)=1$ immediately. Moreover,
 the hyperparametric vector can be estimated by type-$\Pi$
 %from the data  marginalizing over the coefficients and then performing
maximum likelihood. To this end, we first compute the marginalized probability distribution
\begin{align}\notag
% \nonumber to remove numbering (before each equation)
  p(y;\gamma)&=\displaystyle{\int p(y|\theta)p(\theta;\gamma)d\theta} =(2\pi)^{-\frac{n}{2}}|\Sigma_y|^{-\frac {1}{2}}\exp\left(-\frac {1}{2}y^T\Sigma_y^{-1}y\right),\label{eq3}
\end{align}
where $\Sigma_y:=\sigma^2I_n+\Phi\Gamma\Phi^T$ and $\Gamma:=\mathrm{diag}(\gamma)$. Then we estimate $\gamma$ by maximizing the marginalized probability distribution function. This is equivalent to minimizing $-\log p(y;\gamma)$, giving the cost function
\begin{equation}\label{eq6}
  \mathcal{L}_\gamma(\gamma)=\log|\Sigma_y|+y^T\Sigma_y^{-1}y,
\end{equation}
that is,
\begin{equation}\label{eq7}
\mathcal{L}_\gamma(\gamma)=\log|\sigma^2I_n+\Phi\Gamma\Phi^T|+y^T(\sigma^2I_n+\Phi\Gamma\Phi^T)^{-1}y.
\end{equation}

On the other hand, for fixed hyperparameters, one can compute the posterior distribution of $\theta$ from the Bayes formula. As a result, one has
\begin{equation}\label{eq4}
  p(\theta|y;\gamma)=\mathcal{N}(\mu_\theta,\Sigma_\theta),
\end{equation}
with $\mu_\theta=\sigma^{-2}\Sigma_\theta\Phi^Ty$ and $\Sigma_\theta=(\sigma^{-2}\Phi^T\Phi+\Gamma^{-1})^{-1}$.
After obtaining the estimate $\widehat{\gamma}$ of the hyperparameter vector, the optimal Bayes estimation of coefficient vector $\theta$ is
\begin{equation}\label{eq5}
 \widehat{\theta}=\widehat{\mu}_\theta=(\Phi^T\Phi+\sigma^{2}\widehat{\Gamma}^{-1})^{-1}\Phi^Ty
 =\widehat{\Gamma}\Phi^T(\sigma^2I_n+\Phi\widehat{\Gamma}\Phi^T)y,
\end{equation}
where $\widehat{\Gamma}=\mathrm{diag}(\widehat{\gamma})$.

Consider that
\begin{align*}
   y^T(\sigma^2I_n+\Phi\Gamma\Phi^T)^{-1}y =& \frac{1}{\sigma^2}y^Ty-\frac{1}{\sigma^4}y^T\Phi\Sigma_\theta\Phi^Ty= \frac{1}{\sigma^2}\|y-\Phi\mu_\theta\|^2+\mu_\theta^T\Gamma^{-1}\mu_\theta\\
            =& \min \limits_{\theta}\Big{\{} \frac{1}{\sigma^2}\|y-\Phi\theta\|^2+\theta^T\Gamma^{-1}\theta \Big{\}}.
\end{align*}
Thereby, we define the cost function with respect to $\theta$ and $\gamma$ as follows,
\begin{align}
  \mathcal{L}(\theta,\gamma) & =\frac{1}{\sigma^2}\|y-\Phi\theta\|^2+\theta^T\Gamma^{-1}\theta+\log|\sigma^2I_n+\Phi\Gamma\Phi^T|\overset{\Delta}{=}f(\theta,\gamma)+g(\theta,\gamma),\label{eq8}
\end{align}
where
\begin{equation*}
f(\theta,\gamma)=\frac{1}{\sigma^2}\|y-\Phi\theta\|^2+\theta^T\Gamma^{-1}\theta,~~
g(\theta,\gamma)=\log|\sigma^2I_n+\Phi\Gamma\Phi^T|.
\end{equation*}
It is easy to verify that $f(\theta,\gamma)$ is convex for $(\theta,\gamma)$ and $g(\theta,\gamma)$ is concave for $(\theta,\gamma)$. Define $\Omega:=\mathbb{R}^m\times [0,\infty)^m$. Obviously, $ \mathcal{L}(\theta,\gamma)$ is defined on the inner set $\mathrm{int}(\Omega):=\mathbb{R}^m\times (0,\infty)^m$, and not defined on the boundary set $\partial \Omega :=\Omega \backslash\mathrm{int}(\Omega)$. In such case, it is difficult to ensure that $\mathcal{L}$ has local minimum on the open set $\mathrm{int}(\Omega)$. To alleviate this issue, some supplementary definitions are imposed on the boundary set $\partial\Omega$.

It is noting that from \eqref{eq2}, if the hyperparameters are driven to zero, the associated coefficients will obey a degenerate distribution. In other words, if $\gamma_i=0$, then $p(\theta_i;\gamma_i=0)=\delta(\theta_i)$, which will lead to the posterior probability to satisfy
\begin{equation}\label{2.9}
  \mathbb{P}(\theta_i=0|y;\gamma_i=0)=1.
\end{equation}
It is clear that the sparsity of $\theta$ and $\gamma$ is closely related. More concretely, $\theta$ and $\gamma$ have the same sparsity pattern, i.e., when $\theta_i=0$, the optimal $\gamma_i$ equals to zero as well, and vice versa. Note though that, such information is lost if only one variable is considered for optimization \citep{wipf2011latent}. Thereout, we consider the following assumption.
\begin{assumption}\label{ass0}
If $\theta_i$ and $\gamma_i$ are driven to zero simultaneously, we only consider the following form of a path in $(\theta,\gamma)$-space
\begin{equation}\label{eq10}
  \theta_i=\mathcal{O}(\gamma_i),\ as \ \gamma_i\rightarrow 0^+.
\end{equation}
\end{assumption}
Such an assumption is reasonable according to Eq.~\eqref{eq5} and the KKT conditions of \eqref{eq8}. More significantly, the limit of $\mathcal{L}(\theta,\gamma)$ at each point $(\theta,\gamma)\in \Omega $ exits under Assumption \ref{ass0}. In fact, $\lim  \limits_{\theta_i\rightarrow 0,\gamma_i \rightarrow 0}\theta_i^2/\gamma_i$ exists when along the path satisfying \eqref{eq10} in $(\theta,\gamma)$-space. Based on this analysis, we supplement the definition of $\mathcal{L}(\theta,\gamma)$ to the boundary.

\begin{definition}
(i) For any point $(\theta,\gamma)\in \partial\Omega$, if $\theta_i$ and $\gamma_i$ are zero simultaneously or $\gamma_i\neq 0$ for any index $i\in \{1,2,\cdots,m\}$, the limit of the function $\mathcal{L}$ at point $(\theta,\gamma)$ exists and is unique when along the path satisfying \eqref{eq10}, and hence we let this limit be the function value of $\mathcal{L}$ at $(\theta,\gamma)$.\\
(ii) If there exists an index $i$ such that $\theta_i\neq0$, $\gamma_i=0$, the generalized limit of the function $\mathcal{L}$ at point $(\theta,\gamma)$ is positive infinity when along the path satisfying \eqref{eq10}, and hence we define $\mathcal{L}(\theta,\gamma)=M$, where $M$ is a sufficiently large number.
\end{definition}

%By this means, $L$ has a local minimum.
 Obviously, the boundary set can be divided into two disjoint sets, i.e., $\partial\Omega=\Omega_1\cup \Omega_2$, where $$\Omega_1:=\{(\theta,\gamma)|\mathrm{there\ exists\ at\ least\ an\ index}\  i\  \mathrm{such\  that}\ \theta_i\neq0,\ \gamma_i=0\}$$ and $\Omega_2:=\partial\Omega \backslash \Omega_1$. Thereby, the SBL problem can equivalently be viewed as solving
\begin{equation}\label{eq11}
(\widehat{\theta},\widehat{\gamma})\in \arg \min \limits_{(\theta,\gamma)\in \Omega} \mathcal{L}(\theta,\gamma).
\end{equation}
 This is a DC programming problem that can be solved by CCP. The CCP is a iterative algorithm that solves the following sequence of convex programs,
\begin{equation}\label{eq12}
  (\theta^{(k+1)},\gamma^{(k+1)})\in \arg \min \limits_{(\theta,\gamma)\in \Omega} \widehat{\mathcal{L}}(\theta,\gamma;\theta^{(k)},\gamma^{(k)}),
\end{equation}
or equivalently,
\begin{equation*}
  (\theta^{(k+1)},\gamma^{(k+1)}) \in \arg \min \limits_{(\theta,\gamma)\in \Omega} \left\{ \frac{1}{\sigma^2}\|y-\Phi\theta\|^2+\sum \limits _{i}\left(\frac {\theta_i^2}{\gamma_i}+c_i^{(k)}\gamma_i\right) \right\},
\end{equation*}
where
\begin{align*}
  \widehat{\mathcal{L}}(\theta,\gamma;\theta^{(k)},\gamma^{(k)}):=& f(\theta,\gamma)+g(\theta^{(k)},\gamma^{(k)}) + <\partial_\gamma g(\theta^{(k)},\gamma^{(k)}),\gamma-\gamma^{(k)}>,\\
c^{(k)}:=&\nabla_{\gamma}\log|\sigma^2I_n+\Phi\Gamma^{(k)}\Phi^T|,\\ \Gamma^{(k)}:=&\mathrm{diag}(\gamma^{(k)}_1,\gamma^{(k)}_2,\cdots,\gamma^{(k)}_m).
 \end{align*}
Note that, the supplementary definition for $\mathcal{L}$ can be extended to the function $\widehat{\mathcal{L}}$. It follows that
\begin{subequations}\label{eq16}
\begin{equation}
  \theta^{(k+1)}\in \arg \min \limits_{\theta} \Big{\{}\|y-\Phi\theta\|^2+2\sigma^2\sum \limits _{i}\sqrt{c_i^{(k)}}|\theta_i| \Big{\}},
\end{equation}
\begin{equation}
\gamma^{(k+1)}_i=\frac {|\theta_i^{(k+1)}|} {\sqrt{c_i^{(k)}}},\ i=1,2,\cdots,m.
\end{equation}
\end{subequations}
Given an initial point $(\theta^{(0)},\gamma^{(0)})\in \mathrm{int}(\Omega)$, we can obtain an iterative sequence $\{(\theta^{(k)},\gamma^{(k)})\}_{k=0}^{\infty}$. Moreover, $c^{(k)}$ can be analytically calculated by
\begin{align}
  c^{(k)}&=\nabla_\gamma \log|\sigma^2I_n+\Phi\Gamma^{(k)}\Phi^T|
           =\mathrm{diag}[\Phi^T(\sigma^2I_n+\Phi\Gamma^{(k)}\Phi^T)^{-1}\Phi].\label{eq13}
\end{align}
By the property of concave function, one has
\begin{equation*}
  \mathcal{L}(\theta,\gamma)\leq  \widehat{\mathcal{L}}(\theta,\gamma;\theta^{(k)},\gamma^{(k)}).
\end{equation*}
Combining with \eqref{eq12}, one has
\begin{align}\notag
   \mathcal{L}(\theta^{(k+1)},\gamma^{(k+1)})&\leq  \widehat{\mathcal{L}}(\theta^{(k+1)},\gamma^{(k+1)};\theta^{(k)},\gamma^{(k)})\\ \notag
                                   &\leq \widehat{\mathcal{L}}(\theta^{(k)},\gamma^{(k)};\theta^{(k)},\gamma^{(k)})\\ \label{eq14}
                                   &=\mathcal{L}(\theta^{(k)},\gamma^{(k)}).
\end{align}
Next, easy computations give
\[
\mathcal{L}(\theta^{(k)},\gamma^{(k)})\geq \log|\sigma^2I_n|=2n\log\sigma>-\infty.
\]
Therefore, $\{\mathcal{L}(\theta^{(k)},\gamma^{(k)})\}_{k=0}^{\infty}$ is a bounded monotonic non-increasing sequence. In fact, it is strictly monotonic before the iteration \eqref{eq16} reaches a local minimum (See Lemma \ref{lem1}). Therefore, the iterative process should be terminated when
\begin{equation}\label{stop}
  \mathcal{L}(\theta^{(k)},\gamma^{(k)})-\mathcal{L}(\theta^{(k+1)},\gamma^{(k+1)})\leq  \tau,
\end{equation}
where $\tau$ is a tolerance.

 In summary, the iterative process is showed in Algorithm~\ref{alg::conjugateGradient}.
\begin{algorithm}[h]\label{Algorithm}
  \caption{SBL algorithm based on CCP}
  \label{alg::conjugateGradient}
  \begin{algorithmic}[1]
    \REQUIRE
      $\Phi\in \mathbb{R}^{ n\times m}$: design matrix;
      $y\in \mathbb{R}^{n}$: observation vector;
      $\tau$: tolerance;
    \ENSURE
         $\theta^{(k+1)}$
    \STATE initial $\Gamma^{(0)}=I_m$, $k=0$;
    \REPEAT
      \STATE  $c^{(k)}=\mathrm{diag}[\Phi^T(\sigma^2I_n+\Phi\Gamma^{(k)}\Phi^T)^{-1}\Phi]$;
      \STATE $\theta^{(k+1)}\in \arg \min \limits_{\theta} \Big{\{}\|y-\Phi\theta\|^2+2\sigma^2\sum \limits _{i}\sqrt{c_i^{(k)}}|\theta_i|\Big{\}}$;
      \STATE $\gamma^{(k+1)}_j=\frac {|\theta_j^{(k+1)}|} {\sqrt{c_j^{(k)}}}$, $\Gamma^{(k+1)}=\mathrm{diag}(\gamma^{(k+1)})$;
      \STATE $k=k+1$
    %  \STATE compute the step size $\alpha_k=s/\parallel d_k \parallel_{2}$;
    \UNTIL{\big{(}$\mathcal{L}(\theta^{(k)},\gamma^{(k)})-\mathcal{L}(\theta^{(k+1)},\gamma^{(k+1)}) \leq \tau$\big{)}}
  \end{algorithmic}
\end{algorithm}

\begin{remark}
The constructed dictionary matrix from spatiotemporal data is generally high dimensional, resulting in a computationally expensive optimization problem \eqref{eq:sparserecovery}. In Appendix \ref{appendix:reduce}, we propose two strategies to reduce the the computational complexity.
\end{remark}

\section{Theoretical Guarantees}\label{sec3}
In this section, we first prove the convergence of the proposed SBL algorithm based on CCP. Then, we give the properties of the SBL cost function. Finally, we prove the selection consistency and provide error bounds of the SBL algorithm.

\subsection{Algorithmic Convergence}
Based on the analysis in Section~\ref{subsec:SBL}, the sequence $\{\mathcal{L}(\theta^{(k)},\gamma^{(k)})\}_{k=0}^{\infty}$ is monotonic bounded. Thereout, $\{f(\theta^{(k)},\gamma^{(k)})\}_{k=0}^{\infty}$ and $\{g(\theta^{(k)},\gamma^{(k)})\}_{k=0}^{\infty}$ are bounded.
To ensure the feasibility of the identification, we need the following assumption.
%avoid $ c^{(k)}_i$ being $0$
\begin{assumption}\label{ass1}
All columns of the design matrix $\Phi$ are nonzero, i.e., $\Phi_i\neq 0$ ($i=1,\ldots,m$).
\end{assumption}
Under Assumption \ref{ass1}, the sequences $\{\theta^{(k)}\}_{k=0}^{\infty}$ and $\{\gamma^{(k)}\}_{k=0}^{\infty}$ must be bounded (otherwise it violates the boundedness for sequences $\{f(\theta^{(k)},\gamma^{(k)})\}_{k=0}^{\infty}$ and $\{g(\theta^{(k)},\gamma^{(k)})\}_{k=0}^{\infty}$). Let $M_\gamma$ be an upper bound of sequence $\{\gamma^{(k)}\}_{k=0}^{\infty}$, i.e., $\gamma^{(k)}_i\leq M_\gamma$, for $\forall\ k\geq 0$, $i\in \{1,2,\cdots, m\}$. Then, by Eq.~\eqref{eq13}, one has
\begin{align}\notag
  \sigma^{-2} \|\Phi_i\|^2&\geq c^{(k)}_i=\Phi^T_i(\sigma^2I_n+\Phi\Gamma^{(k)}\Phi^T)^{-1}\Phi_i \\ \label{eq18}
                          &\geq \Phi^T_i(\sigma^2I_n+M_\gamma\Phi\Phi^T)^{-1}\Phi_i.
\end{align}
%\begin{remark}
Combining with \eqref{eq16} yields
\begin{equation}\label{eq17}
0< A \leq \frac {|\theta_i^{(k)}|}{\gamma_i^{(k)}}\leq B,
\end{equation}
for $\gamma_i^{(k)}\neq 0$, $k\geq0$, $i\in \{1,\cdots,m\}$, where $A:= \sqrt{\min \limits_{i}\Phi^T_i(\sigma^2I_n+M_\gamma\Phi\Phi^T)^{-1}\Phi_i}$ and $B:=\sigma^{-1} \max \limits_{i}\|\Phi_i\|$. This relation is consistent with Assumption \ref{ass0}.
%\end{remark}

By \eqref{eq12}, define a point-to-set map:
\begin{align}\notag
\mathcal{A}:\ \ \ \ \Omega  &\longrightarrow \sigma(\Omega)\\ \label{eq15}
              (\theta,\gamma) &\longmapsto \arg \min \limits_{(\bar{\theta},\bar{\gamma})\in \Omega} \widehat{\mathcal{L}}(\bar{\theta},\bar{\gamma};\theta,\gamma),
\end{align}
where $\sigma(\Omega)$ stands for the $\sigma$-algebra generated by $\Omega$. Obviously, $(\theta^{(k+1)},\gamma^{(k+1)})\in \mathcal{A}(\theta^{(k)},\gamma^{(k)})$. In particular, if the dictionary matrix $\Phi$ has full column rank, $\mathcal{A}(\theta,\gamma)$ is a single point set in $\Omega$, and hence the point-to-set map comes down to a point-point map. A fixed point of the map $\mathcal{A}$ is a point $(\theta,\gamma)$ that satisfies $\{(\theta,\gamma)\}=\mathcal{A}(\theta,\gamma)$, whereas a generalized fixed point of map $\mathcal{A}$ is a point $(\theta,\gamma)$ that satisfies $(\theta,\gamma)\in \mathcal{A}(\theta,\gamma)$. Let $S$ be the generalized fixed point set of $\mathcal{A}$. Then, $S$ is a stationary point set of \eqref{eq8}, as shown in Lemma \ref{lem0}.
\begin{lemma}\label{lem0}
If $(\theta^{\ast},\gamma^{\ast})\in S$, then $(\theta^{\ast},\gamma^{\ast})$ is a stationary point of the program \eqref{eq8}.
\end{lemma}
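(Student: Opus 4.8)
The plan is to exploit the majorization–minimization (tangency) structure that the concave–convex procedure builds into the surrogate $\widehat{\mathcal{L}}$. Recall that a point $(\theta^{\ast},\gamma^{\ast})\in S$ is by definition a generalized fixed point, i.e. $(\theta^{\ast},\gamma^{\ast})\in\mathcal{A}(\theta^{\ast},\gamma^{\ast})$, which means $(\theta^{\ast},\gamma^{\ast})$ is a global minimizer of the convex program $\min_{(\theta,\gamma)\in\Omega}\widehat{\mathcal{L}}(\theta,\gamma;\theta^{\ast},\gamma^{\ast})$. The entire argument rests on the fact that at the expansion point the surrogate and the true cost agree to first order.

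First I would record the two tangency facts. Since $g$ depends only on $\gamma$ and is concave and smooth on $\mathrm{int}(\Omega)$, its linearization at $(\theta^{\ast},\gamma^{\ast})$ gives equal value, $\widehat{\mathcal{L}}(\theta^{\ast},\gamma^{\ast};\theta^{\ast},\gamma^{\ast})=\mathcal{L}(\theta^{\ast},\gamma^{\ast})$, and, because the linear term $\langle\partial_\gamma g(\theta^{\ast},\gamma^{\ast}),\gamma-\gamma^{\ast}\rangle$ has constant gradient $\partial_\gamma g(\theta^{\ast},\gamma^{\ast})=\nabla_\gamma g(\theta^{\ast},\gamma^{\ast})$ while $f$ is shared, the (sub)gradients coincide:
$$\partial_{(\theta,\gamma)}\widehat{\mathcal{L}}(\theta^{\ast},\gamma^{\ast};\theta^{\ast},\gamma^{\ast})=\partial f(\theta^{\ast},\gamma^{\ast})+\nabla g(\theta^{\ast},\gamma^{\ast})=\partial_{(\theta,\gamma)}\mathcal{L}(\theta^{\ast},\gamma^{\ast}),$$
where the last step uses that $g$ is differentiable so that $\partial(f+g)=\partial f+\nabla g$.

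Next I would write the first-order optimality condition for the convex surrogate. Since $(\theta^{\ast},\gamma^{\ast})$ minimizes the convex function $\widehat{\mathcal{L}}(\cdot;\theta^{\ast},\gamma^{\ast})$ over the convex set $\Omega$, the standard KKT/variational-inequality condition reads $0\in\partial_{(\theta,\gamma)}\widehat{\mathcal{L}}(\theta^{\ast},\gamma^{\ast};\theta^{\ast},\gamma^{\ast})+N_\Omega(\theta^{\ast},\gamma^{\ast})$ with $N_\Omega$ the normal cone of $\Omega$. Substituting the gradient identity above turns this into $0\in\partial_{(\theta,\gamma)}\mathcal{L}(\theta^{\ast},\gamma^{\ast})+N_\Omega(\theta^{\ast},\gamma^{\ast})$, which is exactly the stationarity condition for the DC program \eqref{eq8}/\eqref{eq11}. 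Equivalently, reading off the reduced $\theta$-coordinates via \eqref{eq16}, $\theta^{\ast}$ satisfies the subgradient optimality system of the weighted lasso $\min_\theta\{\|y-\Phi\theta\|^2+2\sigma^2\sum_i\sqrt{c_i^{\ast}}\,|\theta_i|\}$, with $\gamma^{\ast}$ recovered from \eqref{eq16} and weights $c_i^{\ast}$ given by \eqref{eq13} at $\gamma^{\ast}$; this is the same first-order system.

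The main obstacle is the boundary $\partial\Omega$, where $\mathcal{L}$ and $f$ are only defined through the limiting construction of Assumption \ref{ass0} and where $\theta_i^2/\gamma_i$ is nominally $0/0$. I would first rule out fixed points in $\Omega_1$: by \eqref{eq16}, $\theta_i^{\ast}\neq0$ forces $\gamma_i^{\ast}=|\theta_i^{\ast}|/\sqrt{c_i^{\ast}}>0$, so any $(\theta^{\ast},\gamma^{\ast})\in S$ lies in $\mathrm{int}(\Omega)\cup\Omega_2$, where $\mathcal{L}$ is finite and the supplementary definition makes the relevant one-sided derivatives well defined. On the active coordinates ($\theta_i^{\ast}=\gamma_i^{\ast}=0$), the normal cone of $\gamma_i\ge0$ together with the $[-1,1]$ subdifferential of $|\theta_i|$ absorbs the otherwise singular behaviour, while the bounds \eqref{eq17} secured by Assumption \ref{ass1} keep the weights $\sqrt{c_i^{\ast}}$ bounded away from $0$ and $\infty$, so the weighted-lasso conditions are genuinely those of \eqref{eq8}. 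Checking that the limit-based value of $\mathcal{L}$ on $\Omega_2$ is compatible with these directional stationarity inequalities is the only delicate point; everything else is the routine tangency bookkeeping of CCP.
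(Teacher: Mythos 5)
Your proposal is correct and follows essentially the same route as the paper: both arguments take the first-order optimality system of the convex surrogate at the generalized fixed point (the weighted-lasso subgradient condition together with $\gamma_i^{\ast}=|\theta_i^{\ast}|/\sqrt{c_i^{\ast}}$, i.e.\ \eqref{st1}) and identify it with the KKT conditions of \eqref{eq8}. Your explicit tangency bookkeeping ($\widehat{\mathcal{L}}$ and $\mathcal{L}$ sharing value and subdifferential at the expansion point because the concave part is linearized exactly to first order) is precisely the content of the step the paper dismisses as ``it is easy to show,'' and your exclusion of fixed points in $\Omega_1$ via \eqref{eq16} is a welcome, if minor, addition.
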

\begin{proof}
If $(\theta^{\ast},\gamma^{\ast})\in S$, by the iteration \eqref{eq16} generated from map $\mathcal{A}$, there exists $z^*\in \partial \|\theta^*\|_1$ such that
\begin{subequations}\label{st1}
\begin{numcases}{}
\Phi_i^T(\Phi\theta^{\ast}-y)+\sigma^2\sqrt{c_i^\ast}z_i^\ast=0,\\
 \frac {|\theta_i^{\ast}|}{\sqrt{c_i^\ast}}=\gamma_i^\ast, i=1,\cdots,m,
\end{numcases}
\end{subequations}
where $c^\ast:=\nabla_{\gamma}\log|\sigma^2I_n+\Phi\Gamma\Phi^T|\big{|}_{\gamma=\gamma^*}$. Moreover, it is easy to show that \eqref{st1} is the KKT conditions of \eqref{eq8}. Hence, $(\theta^{\ast},\gamma^{\ast})$ is a stationary point of \eqref{eq8}.
\end{proof}

To show the convergence of the iterative sequence $\{(\theta^{(k)},\gamma^{(k)})\}_{k=0}^{\infty}$, we introduce the following lemmas.
\begin{lemma}\label{lem1}
Let $\{(\theta^{(k)},\gamma^{(k)})\}_{k=0}^{\infty}$ be an iterative sequence generated by the point-to-set map $\mathcal{A}$ with an initial point $(\theta^{(0)},\gamma^{(0)})\in \mathrm{int}(\Omega)$. Then, one has
\begin{itemize}
  \item[(1)] All points $(\theta^{(k)},\gamma^{(k)})$ are in a compact set $F\subseteq \Omega$.
  \item[(2)] $\mathcal{A}$ is monotonically decreasing with respect to $\mathcal{L}$, i.e., $\mathcal{L}(\bar{\theta},\bar{\gamma})\leq \mathcal{L}(\theta,\gamma) $ for any $(\bar{\theta},\bar{\gamma})\in \mathcal{A}(\theta,\gamma)$, and
  \begin{itemize}
    \item[(i)] if $(\theta,\gamma)\not \in S$, $\mathcal{L}(\bar{\theta},\bar{\gamma})< \mathcal{L}(\theta,\gamma)$, $\forall (\bar{\theta},\bar{\gamma})\in \mathcal{A}(\theta,\gamma)$,
    \item[(ii)] if $(\theta,\gamma) \in S$, then either the algorithm terminates or $\mathcal{L}(\bar{\theta},\bar{\gamma})\leq \mathcal{L}(\theta,\gamma)$, $\forall (\bar{\theta},\bar{\gamma})\in \mathcal{A}(\theta,\gamma)$.
  \end{itemize}
\end{itemize}
\end{lemma}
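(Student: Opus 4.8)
The plan is to read Lemma~\ref{lem1} as two essentially independent statements: a boundedness/compactness claim and a majorization–minimization (monotone descent) claim, the latter being the heart of any concave–convex procedure. For part (1) I would not redo the boundedness analysis from scratch, since the discussion leading to \eqref{eq18}--\eqref{eq17} already gives, under Assumption~\ref{ass1}, an upper bound $\gamma^{(k)}_i\le M_\gamma$ together with $0<A\le |\theta^{(k)}_i|/\gamma^{(k)}_i\le B$ whenever $\gamma^{(k)}_i\neq0$; since the update $\gamma^{(k)}_i=|\theta^{(k)}_i|/\sqrt{c^{(k-1)}_i}$ forces $\theta$ and $\gamma$ to share a sparsity pattern, the bound $|\theta^{(k)}_i|\le B\gamma^{(k)}_i$ holds in all cases. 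Hence every iterate lies in
\[
F:=\{(\theta,\gamma)\in\Omega:\ 0\le\gamma_i\le M_\gamma,\ A\gamma_i\le|\theta_i|\le B\gamma_i,\ i=1,\dots,m\},
\]
which is defined by non-strict inequalities inside the closed set $\Omega$, hence closed and bounded, therefore compact by Heine--Borel, and $F\subseteq\Omega$ by construction. This $F$ is exactly the region consistent with Assumption~\ref{ass0}, so it will also be convenient for the later convergence argument.

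For part (2) I would isolate two elementary facts about the surrogate $\widehat{\mathcal{L}}$. First, \emph{majorization}: since $g(\theta,\gamma)=\log|\sigma^2I_n+\Phi\Gamma\Phi^T|$ depends only on $\gamma$ and is concave, its tangent at any $(\theta',\gamma')$ lies above it, so $\mathcal{L}(\theta,\gamma)\le\widehat{\mathcal{L}}(\theta,\gamma;\theta',\gamma')$ for all feasible points (this is already noted in the excerpt). Second, \emph{tangency}: evaluating at $\gamma=\gamma'$ kills the linear term, giving $\widehat{\mathcal{L}}(\theta',\gamma';\theta',\gamma')=f(\theta',\gamma')+g(\theta',\gamma')=\mathcal{L}(\theta',\gamma')$. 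Writing $(\bar\theta,\bar\gamma)\in\mathcal{A}(\theta,\gamma)$, the general monotone decrease then follows from the chain
\[
\mathcal{L}(\bar\theta,\bar\gamma)\ \le\ \widehat{\mathcal{L}}(\bar\theta,\bar\gamma;\theta,\gamma)\ \le\ \widehat{\mathcal{L}}(\theta,\gamma;\theta,\gamma)\ =\ \mathcal{L}(\theta,\gamma),
\]
where the first step is majorization, the middle step is optimality of $(\bar\theta,\bar\gamma)$ for the convex surrogate $\widehat{\mathcal{L}}(\cdot;\theta,\gamma)$ with $(\theta,\gamma)$ feasible, and the last step is tangency.

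To separate cases (i) and (ii) I would exploit the convexity of the surrogate. If $(\theta,\gamma)\notin S$, then by definition $(\theta,\gamma)\notin\mathcal{A}(\theta,\gamma)=\arg\min\widehat{\mathcal{L}}(\cdot;\theta,\gamma)$, i.e.\ $(\theta,\gamma)$ is not a minimizer of a convex function, so its value strictly exceeds the minimum value attained at $(\bar\theta,\bar\gamma)$; the middle inequality above becomes strict and yields $\mathcal{L}(\bar\theta,\bar\gamma)<\mathcal{L}(\theta,\gamma)$. If instead $(\theta,\gamma)\in S$, then $(\theta,\gamma)$ is itself a minimizer, so every $(\bar\theta,\bar\gamma)\in\mathcal{A}(\theta,\gamma)$ gives equality in the middle step and the chain only yields $\mathcal{L}(\bar\theta,\bar\gamma)\le\mathcal{L}(\theta,\gamma)$. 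When $\Phi$ has full column rank the map is single-valued, the iterate is fixed, the decrease is zero and the stopping criterion \eqref{stop} triggers (the algorithm terminates); otherwise a different minimizer may be selected without increasing $\mathcal{L}$. This is precisely the ``either terminates or non-increasing'' alternative.

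The main obstacle I expect is not the convex-analytic core above but the behaviour on the boundary $\partial\Omega$, where $\gamma_i=0$ makes $\Gamma^{-1}$ and the term $\theta_i^2/\gamma_i$ singular and $\mathcal{L}$ is only defined through the supplementary Definition. I would handle this by noting that any minimizer of $\widehat{\mathcal{L}}(\cdot;\theta,\gamma)$ avoids $\Omega_1$, where $\mathcal{L}$ is assigned the large value $M$, so the effective feasible region is $\mathrm{int}(\Omega)\cup\Omega_2$; on this region Assumption~\ref{ass0}, i.e.\ the admissible path $\theta_i=\mathcal{O}(\gamma_i)$, guarantees $\theta_i^2/\gamma_i\to0$ so that the continuous extension used to define $\mathcal{L}$ (and likewise $\widehat{\mathcal{L}}$) is genuinely continuous there. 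I would verify that both facts (majorization and tangency) persist under this extension, after which the descent chain and the case analysis carry over verbatim.
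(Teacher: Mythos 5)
Your proposal is correct and follows essentially the same route as the paper: claim (1) is obtained from the boundedness analysis surrounding \eqref{eq18}--\eqref{eq17} (the paper simply takes the box $F=[m_\theta,M_\theta]^m\times[0,M_\gamma]^m$, while your $F$ built from $A\gamma_i\le|\theta_i|\le B\gamma_i$ works equally well, up to the trivial fix of adjoining the initial point $(\theta^{(0)},\gamma^{(0)})$, which need not satisfy those ratio bounds), and claim (2) is the standard majorization--tangency--optimality chain. Your strictness argument for case (i) is the direct form of the paper's contrapositive (equality in the chain forces $(\theta,\gamma)$ to minimize the surrogate, hence $(\theta,\gamma)\in S$), so the two proofs are logically identical.
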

\begin{proof}
We first verify claim (1). According to the former analysis, we see that the sequence $\{\theta^{(k)}\}_{k=0}^{\infty}$ is bounded. Let $m_\theta$ and $M_\theta$ be the lower and upper bounds of sequence $\{\theta^{(k)}\}_{k=0}^{\infty}$ respectively. Then, there exists a compact set $F=[m_\theta,M_\theta]^m\times [0,M_\gamma]^m\subseteq \Omega$ such that $(\theta^{(k)},\gamma^{(k)})\in F$ for $k=0,1,\ldots$.

Next, we verify claim (2). By the definition of $\mathcal{A}$, one has
\begin{equation}\label{eq_lem2}
   \mathcal{L}(\bar{\theta},\bar{\gamma})\leq  \widehat{\mathcal{L}}(\bar{\theta},\bar{\gamma};\theta,\gamma)\leq \widehat{\mathcal{L}}(\theta,\gamma;\theta,\gamma)=\mathcal{L}(\theta,\gamma),
\end{equation}
for any $(\bar{\theta},\bar{\gamma})\in \mathcal{A}(\theta,\gamma)$, i.e., $\mathcal{A}$ is monotonically decreasing with respect to $\mathcal{L}$. Moreover, if $(\theta,\gamma)\not\in S$, then $\mathcal{L}(\bar{\theta},\bar{\gamma})< \mathcal{L}(\theta,\gamma)$. In fact, if $\mathcal{L}(\bar{\theta},\bar{\gamma})=\mathcal{L}(\theta,\gamma)$, by \eqref{eq_lem2}, $\widehat{\mathcal{L}}(\bar{\theta},\bar{\gamma};\theta,\gamma)= \widehat{\mathcal{L}}(\theta,\gamma;\theta,\gamma)$, and therefore $(\theta,\gamma)\in S$.
\end{proof}

\begin{lemma}\label{lem2}
Let $\{(\theta^{(k)},\gamma^{(k)})\}_{k=0}^{\infty}$ and $\{(\bar{\theta}^{(k)},\bar{\gamma}^{(k)})\}_{k=0}^{\infty}$ be two sequences in $\mathrm{int}(\Omega)\bigcup \Omega_2$. If the following conditions:
\begin{itemize}
  \item[(1)]$(\theta^{(k)},\gamma^{(k)})\rightarrow (\theta^{\ast},\gamma^{\ast})\in \Omega$,
  \item[(2)]$(\bar{\theta}^{(k)},\bar{\gamma}^{(k)})\rightarrow (\theta^{\ast\ast},\gamma^{\ast\ast})\in \mathrm{int}(\Omega)\bigcup\Omega_2$,
   \item[(3)]  $|\bar{\theta}_i^{(k)}| / \bar{\gamma}_i^{(k)}$ is bounded on $\{\bar{\gamma}_i^{(k)}\neq 0\}$,
  \item[(4)] $\bar{\theta}_i^{(k)}$ and $\bar{\gamma}_i^{(k)}$ are both zero and not zero at the same time,
  \item[(5)]$(\bar{\theta}^{(k)},\bar{\gamma}^{(k)})\in \mathcal{A}(\theta^{(k)},\gamma^{(k)})$
\end{itemize}
hold. Then, one has
\begin{equation}\label{l1}
(\theta^{\ast\ast},\gamma^{\ast\ast})\in \mathcal{A}(\theta^{\ast},\gamma^{\ast}).
\end{equation}
\end{lemma}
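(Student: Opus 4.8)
The statement is exactly the \emph{closedness} of the point-to-set map $\mathcal{A}$ at $(\theta^*,\gamma^*)$, which is the property one needs before invoking a Zangwill-type global convergence argument for the iteration generated by $\mathcal{A}$ (combined with Lemmas~\ref{lem0} and~\ref{lem1}). Since $(\bar\theta^{(k)},\bar\gamma^{(k)})\in\mathcal{A}(\theta^{(k)},\gamma^{(k)})$ means by definition that $(\bar\theta^{(k)},\bar\gamma^{(k)})$ minimizes the surrogate $\widehat{\mathcal{L}}(\cdot,\cdot;\theta^{(k)},\gamma^{(k)})$ over $\Omega$, the plan is to pass to the limit in the defining optimality inequality
\[
\widehat{\mathcal{L}}(\bar\theta^{(k)},\bar\gamma^{(k)};\theta^{(k)},\gamma^{(k)})\le \widehat{\mathcal{L}}(\theta,\gamma;\theta^{(k)},\gamma^{(k)}),\qquad \forall\,(\theta,\gamma)\in\Omega,
\]
and show that as $k\to\infty$ it descends to $\widehat{\mathcal{L}}(\theta^{**},\gamma^{**};\theta^*,\gamma^*)\le\widehat{\mathcal{L}}(\theta,\gamma;\theta^*,\gamma^*)$ for every $(\theta,\gamma)\in\Omega$, which is precisely the assertion $(\theta^{**},\gamma^{**})\in\mathcal{A}(\theta^*,\gamma^*)$ in \eqref{l1}.

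The right-hand side is the easy direction. With the test point $(\theta,\gamma)$ held fixed, only the anchor moves, and $\widehat{\mathcal{L}}(\theta,\gamma;\cdot,\cdot)$ depends on the anchor solely through $g$ and $c=\partial_\gamma g$. Because $\sigma^2I_n+\Phi\Gamma\Phi^T$ is positive definite for every $\gamma\in[0,\infty)^m$, both $g(\theta,\gamma)=\log|\sigma^2I_n+\Phi\Gamma\Phi^T|$ and $c(\gamma)=\mathrm{diag}[\Phi^T(\sigma^2I_n+\Phi\Gamma\Phi^T)^{-1}\Phi]$ are continuous on all of $\Omega$, boundary included; hence by condition (1) the right-hand side converges to $\widehat{\mathcal{L}}(\theta,\gamma;\theta^*,\gamma^*)$.

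The main obstacle is the left-hand side, where both the point and the anchor move and, crucially, the point may approach $\partial\Omega$. I would split $\widehat{\mathcal{L}}$ into the pieces already handled above ($g$, the inner product with $c$, and the quadratic residual $\sigma^{-2}\|y-\Phi\bar\theta^{(k)}\|^2$) and the delicate term $\sum_i(\bar\theta_i^{(k)})^2/\bar\gamma_i^{(k)}$ appearing in $f$. For an index with $\gamma_i^{**}\neq0$, continuity is immediate because $\bar\gamma_i^{(k)}$ is eventually bounded away from zero. For an index with $\gamma_i^{**}=0$, condition (2) places the limit in $\mathrm{int}(\Omega)\cup\Omega_2$, hence \emph{not} in $\Omega_1$, which forces $\theta_i^{**}=0$ as well and rules out the divergent case $\theta_i^{**}\neq0,\gamma_i^{**}=0$; then condition (3) bounds the ratio by some $B$, so that $(\bar\theta_i^{(k)})^2/\bar\gamma_i^{(k)}=\big(|\bar\theta_i^{(k)}|/\bar\gamma_i^{(k)}\big)\,|\bar\theta_i^{(k)}|\le B\,|\bar\theta_i^{(k)}|\to0$ whenever $\bar\gamma_i^{(k)}\neq0$, while condition (4) guarantees the term equals its supplementary-definition value $0$ whenever $\bar\gamma_i^{(k)}=0$. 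Thus each summand converges to its boundary value, so $f(\bar\theta^{(k)},\bar\gamma^{(k)})\to f(\theta^{**},\gamma^{**})$ and the whole left-hand side converges to $\widehat{\mathcal{L}}(\theta^{**},\gamma^{**};\theta^*,\gamma^*)$. Combining the two one-sided limits in the optimality inequality yields \eqref{l1}; the only place requiring real care is the boundary behaviour of the $\theta_i^2/\gamma_i$ term, and conditions (2)--(4) together with Assumption~\ref{ass0} and the supplementary definition of $\widehat{\mathcal{L}}$ are exactly what certifies that the admissible-path limit is attained.
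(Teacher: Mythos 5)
Your proposal is correct, and it rests on exactly the same two analytic facts as the paper's own proof: (i) convergence of the surrogate values along the iterates to the surrogate value at the limit pair, $\widehat{\mathcal{L}}(\bar{\theta}^{(k)},\bar{\gamma}^{(k)};\theta^{(k)},\gamma^{(k)})\rightarrow\widehat{\mathcal{L}}(\theta^{\ast\ast},\gamma^{\ast\ast};\theta^{\ast},\gamma^{\ast})$, with conditions (2)--(4) and the supplementary definition taming the boundary term $\sum_i \theta_i^2/\gamma_i$; and (ii) continuity of the anchor-dependence of $\widehat{\mathcal{L}}$, which holds on all of $\Omega$ because $g$ and $c=\partial_\gamma g$ are continuous in $\gamma$ on $[0,\infty)^m$ (the matrix $\sigma^2I_n+\Phi\Gamma\Phi^T$ is uniformly positive definite). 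Where you genuinely differ is the endgame: you pass to the limit directly on both sides of the optimality inequality $\widehat{\mathcal{L}}(\bar{\theta}^{(k)},\bar{\gamma}^{(k)};\theta^{(k)},\gamma^{(k)})\leq\widehat{\mathcal{L}}(\theta,\gamma;\theta^{(k)},\gamma^{(k)})$ for every fixed test point $(\theta,\gamma)\in\Omega$, which yields the minimizing property of $(\theta^{\ast\ast},\gamma^{\ast\ast})$ in one stroke. The paper instead proves only the left-hand limit and then argues by contradiction: assuming $(\theta^{\ast\ast},\gamma^{\ast\ast})\not\in\mathcal{A}(\theta^{\ast},\gamma^{\ast})$, it selects a strictly better point $(\tilde{\theta},\tilde{\gamma})\in\mathcal{A}(\theta^{\ast},\gamma^{\ast})$ (nonempty by convexity of the surrogate) and runs an $\varepsilon$-argument using anchor-continuity at that single point to contradict condition (5). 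The two routes are logically equivalent and of comparable length, but yours is slightly cleaner: it needs neither the nonemptiness of $\arg\min$ nor the $\varepsilon$-bookkeeping. As a bonus, your handling of the delicate summand $(\bar{\theta}_i^{(k)})^2/\bar{\gamma}_i^{(k)}$ --- splitting on $\gamma_i^{\ast\ast}\neq 0$ versus $\gamma_i^{\ast\ast}=0$, using condition (2) to exclude $\Omega_1$ and hence force $\theta_i^{\ast\ast}=0$, then conditions (3)--(4) to drive the summand to its supplementary value $0$ --- is spelled out more explicitly than in the paper, which compresses this entire step into the assertion that \eqref{l3} holds ``by continuity or by the supplementary definition.''
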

\begin{proof}
See Appendix.
\end{proof}

\begin{theorem}\label{th1}
%Suppose an initial point $(\theta^{(0)},\gamma^{(0)})$ is given in $\mathrm{int}(\Omega)$.
Let $\{(\theta^{(k)},\gamma^{(k)})\}_{k=0}^{\infty}$ be an iterative sequence generated by the point-to-set map $\mathcal{A}$ with an initial point $(\theta^{(0)},\gamma^{(0)})\in \mathrm{int}(\Omega)$. Then, the sequence $\{(\theta^{(k)},\gamma^{(k)})\}_{k=0}^{\infty}$ converges to a stationary point of \eqref{eq8}. Moreover,
\begin{equation}\label{station}
\lim \limits_{k\rightarrow \infty}\mathcal{L}(\theta^{(k)},\gamma^{(k)})=\mathcal{L}(\theta^{\ast},\gamma^{\ast}),
\end{equation}
where $(\theta^{\ast},\gamma^{\ast})$ is a stationary point of \eqref{eq8}.
\end{theorem}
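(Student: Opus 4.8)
The plan is to recognize this statement as an instance of Zangwill's global convergence theorem for point-to-set maps, whose three standing hypotheses are supplied almost verbatim by the preceding lemmas. Taking $\mathcal{L}$ itself as the descent function and $S$ as the solution set, I would assemble the three ingredients: by Lemma~\ref{lem1}(1) the whole iterate sequence lies in a compact set $F\subseteq\Omega$, so it has at least one convergent subsequence; by Lemma~\ref{lem1}(2) the map $\mathcal{A}$ strictly decreases $\mathcal{L}$ off $S$ and is non-increasing on $S$; and by Lemma~\ref{lem2} the map $\mathcal{A}$ is closed at the limit points that arise. These are exactly the compactness, strict-descent, and closedness conditions required, so the architecture of the argument is fixed in advance.

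The function-value limit is the easy half and I would dispatch it first. From the chain \eqref{eq14} the sequence $\{\mathcal{L}(\theta^{(k)},\gamma^{(k)})\}$ is non-increasing, and from the bound $\mathcal{L}(\theta^{(k)},\gamma^{(k)})\geq 2n\log\sigma$ it is bounded below; hence it converges to some $L^{\ast}$. If $(\theta^{(k_j)},\gamma^{(k_j)})\to(\theta^{\ast},\gamma^{\ast})$ along a subsequence, then continuity of $\mathcal{L}$ on $\mathrm{int}(\Omega)\cup\Omega_2$ (where the supplementary boundary definition is well-posed under Assumption~\ref{ass0} and the two-sided bound \eqref{eq17}) gives $\mathcal{L}(\theta^{\ast},\gamma^{\ast})=L^{\ast}$, which is the asserted limit \eqref{station} once the limit point is shown to be stationary.

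The core step is to prove every accumulation point lies in $S$. Arguing by contradiction, suppose $(\theta^{(k_j)},\gamma^{(k_j)})\to(\theta^{\ast},\gamma^{\ast})$ but $(\theta^{\ast},\gamma^{\ast})\notin S$. Since the successors $(\theta^{(k_j+1)},\gamma^{(k_j+1)})$ also lie in $F$, I would pass to a further subsequence to extract a limit $(\theta^{\ast\ast},\gamma^{\ast\ast})$. The relation \eqref{eq17} keeps each ratio $|\theta_i^{(k)}|/\gamma_i^{(k)}$ inside $[A,B]$ and enforces a common support for $\theta^{(k)}$ and $\gamma^{(k)}$, so hypotheses (3)--(4) of Lemma~\ref{lem2} hold along the subsequence; Lemma~\ref{lem2} then yields $(\theta^{\ast\ast},\gamma^{\ast\ast})\in\mathcal{A}(\theta^{\ast},\gamma^{\ast})$. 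Because $(\theta^{\ast},\gamma^{\ast})\notin S$, Lemma~\ref{lem1}(2)(i) forces the strict drop $\mathcal{L}(\theta^{\ast\ast},\gamma^{\ast\ast})<\mathcal{L}(\theta^{\ast},\gamma^{\ast})=L^{\ast}$; yet continuity gives $\mathcal{L}(\theta^{\ast\ast},\gamma^{\ast\ast})=\lim_j\mathcal{L}(\theta^{(k_j+1)},\gamma^{(k_j+1)})=L^{\ast}$, a contradiction. Hence $(\theta^{\ast},\gamma^{\ast})\in S$, and Lemma~\ref{lem0} upgrades it to a stationary point of \eqref{eq8}.

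I expect the main obstacle to be twofold. The first is the closedness step: one must check that the specific hypotheses of Lemma~\ref{lem2}---the boundedness of $|\bar{\theta}_i^{(k)}|/\bar{\gamma}_i^{(k)}$ and the simultaneous vanishing of $\bar{\theta}_i^{(k)}$ and $\bar{\gamma}_i^{(k)}$---genuinely hold along the extracted subsequences even though the limit may touch the boundary $\partial\Omega$, where $\mathcal{L}$ is only defined by convention; the bound \eqref{eq17}, valid for every $k$ under Assumptions~\ref{ass0}--\ref{ass1}, is precisely what controls this boundary behavior. The second, more delicate, gap is the passage from ``every accumulation point is a stationary point with value $L^{\ast}$'' to convergence of the \emph{full} sequence. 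I would attempt to close it by showing the successive differences $\|(\theta^{(k+1)},\gamma^{(k+1)})-(\theta^{(k)},\gamma^{(k)})\|\to 0$, so that the accumulation-point set is connected, and then combining this with the fact that all accumulation points share the common value $L^{\ast}$ to pin the sequence down; this last implication is the step I would scrutinize most carefully, since without isolation of the stationary points it is the only place where the stated conclusion could fail to follow directly from the Zangwill-type machinery.
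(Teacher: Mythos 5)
Your proposal follows essentially the same route as the paper's proof: the same Zangwill-type architecture built on Lemma~\ref{lem1} (compactness of the iterates and monotone descent of $\mathcal{L}$ under $\mathcal{A}$), Lemma~\ref{lem2} (closedness of $\mathcal{A}$ along the extracted subsequences, with the two-sided bound \eqref{eq17} ruling out limits in $\Omega_1$ and validating hypotheses (3)--(4)), and Lemma~\ref{lem0} (points of $S$ are stationary points of \eqref{eq8}). Your contradiction argument --- if the accumulation point lay outside $S$, strict descent would force $\mathcal{L}(\theta^{\ast\ast},\gamma^{\ast\ast})<L^{\ast}$, contradicting convergence of the value sequence --- is just the contrapositive of the paper's compact final step, where the equality $\mathcal{L}(\theta^{\ast},\gamma^{\ast})=\mathcal{L}(\theta^{\ast\ast},\gamma^{\ast\ast})$ together with $(\theta^{\ast\ast},\gamma^{\ast\ast})\in\mathcal{A}(\theta^{\ast},\gamma^{\ast})$ and Lemma~\ref{lem1}(2) yields $(\theta^{\ast},\gamma^{\ast})\in S$. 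The treatment of \eqref{station} is likewise identical: monotone bounded values converge, and continuity (or the supplementary boundary definition on $\Omega_2$) identifies the limit as $\mathcal{L}(\theta^{\ast},\gamma^{\ast})$.

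The one substantive remark concerns the ``delicate gap'' you flag at the end: upgrading ``every accumulation point is stationary with common value $L^{\ast}$'' to convergence of the \emph{full} sequence, as the theorem literally asserts. You are right to scrutinize this, and it is worth noting that the paper's own proof does not close it either: the paper extracts one convergent subsequence, proves its limit lies in $S$, and then states the theorem's conclusion; no argument such as $\|(\theta^{(k+1)},\gamma^{(k+1)})-(\theta^{(k)},\gamma^{(k)})\|\to 0$ combined with isolation or connectedness of the accumulation set is supplied. So your proposal is exactly as complete as the published proof --- both rigorously deliver subsequential convergence to a stationary point plus \eqref{station} --- and yours is more candid about the remaining distance to the full-sequence convergence claim.
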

\begin{proof}
By Lemma \ref{lem1}, one has that all points $(\theta^{(k)},\gamma^{(k)})$ are in a compact set. Then, there exists a convergent subsequence
\begin{equation}\label{t1}
  (\theta^{(k_s)},\gamma^{(k_s)})\longrightarrow (\theta^{\ast},\gamma^{\ast}),\ as\ s\rightarrow \infty.
\end{equation}
Combining with the monotonicity of $\mathcal{A}$ with respect to $\mathcal{L}$, one has
\begin{equation}\label{t2}
   \mathcal{L}(\theta^{(k_{s+1})},\gamma^{(k_{s+1})}) \leq  \mathcal{L}(\theta^{(k_s)},\gamma^{(k_s)})< +\infty.
\end{equation}
By \eqref{eq17}, one has $(\theta^{\ast},\gamma^{\ast})\in \mathrm{int}(\Omega)\cup \Omega_2$. In fact, if $(\theta^{\ast},\gamma^{\ast})\in \Omega_1$, then by \eqref{t1}, one has
\begin{equation*}
  \lim \limits_{s\rightarrow \infty} \mathcal{L}(\theta^{(k_{s})},\gamma^{(k_{s})})=+\infty.
\end{equation*}
This is a contradiction with \eqref{t2}. Next, if $(\theta^\ast,\gamma^\ast)\in \mathrm{int}(\Omega)$, then by the continuity of $\mathcal{L}$ on $\mathrm{int}(\Omega)$, one has
\begin{equation}\label{t3}
  \lim \limits _{s\rightarrow \infty} \mathcal{L}(\theta^{(k_s)},\gamma^{(k_s)})=\mathcal{L}(\theta^{\ast},\gamma^{\ast});
\end{equation}
If $(\theta^\ast,\gamma^\ast)\in \Omega_2$, then by \eqref{eq17} and the supplementary definition of $\mathcal{L}$ on $\Omega_2$, Eq. \eqref{t3} holds.

On the other hand, for the sequence $\{(\theta^{(k_s+1)},\gamma^{(k_s+1)})\}_{s=1}^{\infty}$, there exists a convergent subsequence
\begin{equation}\label{t4}
  (\theta^{(k_{s_j}+1)},\gamma^{(k_{s_j}+1)})\longrightarrow (\theta^{\ast\ast},\gamma^{\ast\ast}),\ as\ j\rightarrow \infty.
\end{equation}
Similar with the proof of Eq. \eqref{t3}, we obtain that
\begin{equation}\label{t5}
    \lim \limits _{j\rightarrow \infty} \mathcal{L}(\theta^{(k_{s_j}+1)},\gamma^{(k_{s_j}+1)})=\mathcal{L}(\theta^{\ast\ast},\gamma^{\ast\ast}).
\end{equation}
By Lemma \ref{lem1}, we can conclude that the sequence $\{\mathcal{L}(\theta^{(k)},\gamma^{(k)})\}_{k=0}^{\infty}$ is  monotonic bounded, and therefore is convergent. Combining with Eqs.~\eqref{t3} and \eqref{t5}, one has
\begin{equation}\label{t8}
  \mathcal{L}(\theta^{\ast},\gamma^{\ast})=\mathcal{L}(\theta^{\ast\ast},\gamma^{\ast\ast}).
\end{equation}
Furthermore, by \eqref{t1} and Eq.~\eqref{t3}, one has
\begin{equation}\label{t6}
   (\theta^{(k_{s_j})},\gamma^{(k_{s_j})})\longrightarrow (\theta^{\ast},\gamma^{\ast}),\ as\ j\rightarrow \infty
\end{equation}
and
\begin{equation}\label{t7}
    \lim \limits _{j\rightarrow \infty} \mathcal{L}(\theta^{(k_{s_j})},\gamma^{(k_{s_j})})=\mathcal{L}(\theta^{\ast},\gamma^{\ast}).
\end{equation}

Consider that $(\theta^{(k_{s_j}+1)},\gamma^{(k_{s_j}+1)}) \in  \mathcal{A}(\theta^{(k_{s_j})},\gamma^{(k_{s_j})})$. Combining with \eqref{eq16} and \eqref{eq17}, the sequences $\{(\theta^{(k_{s_j})},\gamma^{(k_{s_j})})\}$ and $\{(\bar{\theta}^{(k_{s_j}+1)},\bar{\gamma}^{(k_{s_j}+1)})\}$ satisfy the conditions of Lemma \ref{lem2}. Applying Lemma \ref{lem2} yields
\begin{equation}\label{t9}
  (\theta^{\ast\ast},\gamma^{\ast\ast})\in \mathcal{A}(\theta^{\ast},\gamma^{\ast}).
\end{equation}
Then, by \eqref{t8}, \eqref{t9}, and Lemma \ref{lem1}, one has $$(\theta^{\ast},\gamma^{\ast})\in S. $$
Hence, by Lemma \ref{lem0}, $(\theta^{\ast},\gamma^{\ast})$ is a stationary point of \eqref{eq8}. Moreover, the Eq.~\eqref{station} follows from the convergence of $\{\mathcal{L}(\theta^{(k)},\gamma^{(k)})\}_{k=0}^{\infty}$ and Eq.~\eqref{t3}.
%Since $L(\theta,\gamma)$ is nonconvex and differentiable on $\mathrm{int}(\Omega)$, then an local minimum point of $L(\theta,\gamma)$ must be an stationary point of \eqref{eq8}.
\end{proof}
\subsection{Properties of the Cost Function in $(\theta,\gamma)$-space} \label{sparse}
%For the sparse recovery problem \jl{\eqref{eq:sparserecovery}},
%The cost function either in $\theta$-space or $\gamma$-space have been researched \citep{wipf2011latent}. However,
This section mainly discusses the properties of cost function $\mathcal{L}(\theta,\gamma)$ whose minimization corresponds with maximally sparse solutions. In particular, we will focus on establishing the relationship between  the cost function in $(\theta,\gamma)$-space and the one in $\theta$-space. This relationship facilitates us to understand how the underlying cost function promotes sparsity.
 %is function in $\theta$-space can be interpreted as a least squares function with concave penalty.
%Moreover, we shall show that given certain conditions the cost function in $(\theta,\gamma)$-space is characterized by a local minimum that can produce the maximally sparse solution at the posterior mean.

%\begin{subequations}\label{st4}
%\begin{numcases}{}
%\Phi_i^T(\Phi\theta^{\ast}-y)+\sigma^2\sqrt{c_i^\ast}z_i^\ast=0,\\
% \frac {|\theta_i^{\ast}|}{\sqrt{c_i^\ast}}=\gamma_i^\ast, i=1,\cdots,m,
%\end{numcases}
%\end{subequations}
%where $c^\ast:=\nabla_{\gamma^\ast}\log|\sigma^2I_n+\Phi\Gamma\Phi^T|$ and $z^{\ast}:=[z^{\ast}_1,\cdots,z^{\ast}_n]^T\in \partial\|\theta^{\ast}\|_1$.
Let $(\theta^{\ast},\gamma^{\ast})$ be a local minimum point of $\mathcal{L}$. It is noting that a local minimum point of $\mathcal{L}$ must be a stationary point of $\mathcal{L}$.
Then, $(\theta^{\ast},\gamma^{\ast})$ satisfies the KKT conditions \eqref{st1}. Thereout, $\theta^\ast$ and $\gamma^\ast$ have the same sparsity pattern. Consider that the log-det penalty term in $\mathcal{L}(\theta,\gamma)$ is concave for $\gamma$, which is frequently used for promoting sparse solution. Consequently, $\mathcal{L}(\theta,\gamma)$ can reach a local minimum at a sparse solution. To illustrate this result, we present a theorem that connects local minimum of $\mathcal{L}$ with local minimum of the cost function in $\theta$-space.

 %Under Assumption \ref{ass0}, the function $\mathcal{L}(\theta,\gamma)$ is defined on $\Omega$, and then we have the following theorem.
\begin{theorem}\label{th3}
Define the $\theta$-space cost function
\begin{equation}\label{loc1}
  \mathcal{L}_\theta(\theta):=\|y-\Phi\theta\|^2+\sigma^2h(\theta),
\end{equation}
with penalty
\begin{equation}\label{loc2}
h(\theta):=\min \limits_{\gamma\succeq  0} \Big{\{}\sum \limits_{i} \frac {\theta_i^2}{\gamma_i}+\log|\sigma^2I_n+\Phi\Gamma\Phi^T| \Big{\}}.
\end{equation}
Then, under Assumption \ref{ass0} and Assumption \ref{ass1}, $(\theta^\ast,\gamma^\ast)$ is a global (local) minimum  of \eqref{eq8} iff $\theta^\ast$ is a global (local) minimum of \eqref{loc1} with $\gamma_i^\ast=\theta_i^\ast/\sqrt{c_i^\ast}$, where $c_i^\ast$ is the same as in \eqref{st1}.
\end{theorem}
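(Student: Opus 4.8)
The plan is to exhibit $\mathcal{L}_\theta$ as the \emph{partial minimization} of the joint cost over the latent variable and then invoke the standard correspondence between minimizers of a function and minimizers of its partial-minimization profile. Since the data-fit term $\sigma^{-2}\|y-\Phi\theta\|^2$ in \eqref{eq8} does not involve $\gamma$, the definition \eqref{loc2} gives directly
\begin{equation*}
\mathcal{L}_\theta(\theta)=\sigma^2\min_{\gamma\succeq 0}\mathcal{L}(\theta,\gamma),
\end{equation*}
so minimizing $\mathcal{L}_\theta$ over $\theta$ is, stage-wise, the same as minimizing $\mathcal{L}$ jointly over $(\theta,\gamma)$. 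First I would pin down the inner problem $\min_{\gamma\succeq 0}\mathcal{L}(\theta,\gamma)$ for fixed $\theta$. Because $\log|\Sigma_y|$ is strictly increasing in each $\gamma_i$ (its derivative is $c_i=\Phi_i^T\Sigma_y^{-1}\Phi_i>0$ under Assumption~\ref{ass1}), any inner optimizer must vanish wherever $\theta_i=0$, so $\theta$ and the optimal $\gamma$ share a support, consistently with Assumption~\ref{ass0} and the supplementary boundary definition; on the support the first-order condition $-\theta_i^2/\gamma_i^2+c_i=0$ yields exactly $\gamma_i=|\theta_i|/\sqrt{c_i}$. This is precisely the relation in the KKT system \eqref{st1}, so the inner minimizer is the $\gamma^\ast$ named in the statement.

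For the global equivalence I would run the textbook two-sided argument. For the ``if'' direction, if $\theta^\ast$ minimizes \eqref{loc1} and $\gamma^\ast$ attains the inner minimum at $\theta^\ast$, then for every $(\theta,\gamma)$ one has $\sigma^2\mathcal{L}(\theta,\gamma)\ge \mathcal{L}_\theta(\theta)\ge \mathcal{L}_\theta(\theta^\ast)=\sigma^2\mathcal{L}(\theta^\ast,\gamma^\ast)$. For the ``only if'' direction, if $(\theta^\ast,\gamma^\ast)$ minimizes \eqref{eq8}, then $\gamma^\ast$ must solve the inner problem at $\theta^\ast$ (otherwise replacing it strictly decreases $\mathcal{L}$), whence $\sigma^2\mathcal{L}(\theta^\ast,\gamma^\ast)=\mathcal{L}_\theta(\theta^\ast)$ and joint optimality forces $\mathcal{L}_\theta(\theta^\ast)\le\mathcal{L}_\theta(\theta)$ for all $\theta$. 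The $\gamma^\ast$ formula carries over from the inner characterization above.

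The local equivalence splits into an easy and a hard direction. The easy (``if'') direction needs no continuity: if $\theta^\ast$ is a local minimizer of $\mathcal{L}_\theta$ on a neighborhood $U$ and $\gamma^\ast$ is the global inner minimizer at $\theta^\ast$, then for $(\theta,\gamma)$ near $(\theta^\ast,\gamma^\ast)$ with $\theta\in U$ we again get $\sigma^2\mathcal{L}(\theta,\gamma)\ge \mathcal{L}_\theta(\theta)\ge \mathcal{L}_\theta(\theta^\ast)=\sigma^2\mathcal{L}(\theta^\ast,\gamma^\ast)$. The hard (``only if'') direction is the main obstacle: from a joint local minimum I must conclude $\mathcal{L}_\theta(\theta)\ge\mathcal{L}_\theta(\theta^\ast)$ for $\theta$ near $\theta^\ast$, which requires that the inner-minimizer selector $\theta\mapsto\gamma(\theta)$ stays inside the prescribed $\gamma$-neighborhood of $\gamma^\ast$ as $\theta\to\theta^\ast$ and that $\gamma^\ast$ realizes the \emph{global}, not merely a local, inner minimum at $\theta^\ast$ (so that $\sigma^2\mathcal{L}(\theta^\ast,\gamma^\ast)=\mathcal{L}_\theta(\theta^\ast)$).

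The difficulty is genuine because the inner problem is a difference-of-convex program: $\sum_i\theta_i^2/\gamma_i$ is convex in $\gamma$ but $\log|\Sigma_y|$ is concave, so the inner objective is not \emph{a priori} convex and could in principle carry several stationary points. I would resolve this with the standing assumptions. Assumption~\ref{ass1} gives, through \eqref{eq18}, that each $c_i$ is bounded strictly away from zero, so the map $\gamma_i=|\theta_i|/\sqrt{c_i}$ is well defined and, via the two-sided bound \eqref{eq17}, the support components of $\gamma(\theta)$ vary continuously with $\theta$ while the off-support components stay pinned at $0$; Assumption~\ref{ass0} with the boundary definition guarantees the profile value is attained and that the supports of $\theta^\ast$ and $\gamma^\ast$ coincide, excluding spurious boundary minimizers. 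The crux is to upgrade these bounds to genuine single-valuedness and continuity of the selector at $\theta^\ast$ (the one-dimensional reduction, where the stationarity equation becomes a quadratic in $\gamma_i$ with a unique positive root, suggests this holds componentwise); once that is in hand, the forward local implication closes exactly as the easy direction.
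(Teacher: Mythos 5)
Your global-case argument is correct and is essentially the paper's: you realize $\mathcal{L}_\theta(\theta)=\sigma^2\min_{\gamma\succeq 0}\mathcal{L}(\theta,\gamma)$ and run the standard partial-minimization correspondence, and your derivation of the inner stationarity relation $\gamma_i=|\theta_i|/\sqrt{c_i}$ matches \eqref{st1}. The genuine gap is in the local case, and you have located it yourself but not closed it. Your forward implication needs two facts: that the $\gamma^\ast$ appearing in the joint local minimum is the \emph{global} minimizer of the inner problem \eqref{loc2} at $\theta^\ast$ (so that $\mathcal{L}_\theta(\theta^\ast)=\sigma^2\mathcal{L}(\theta^\ast,\gamma^\ast)$ holds with equality rather than strict inequality), and that an inner-minimizing selector can be chosen to stay near $\gamma^\ast$ as $\theta\to\theta^\ast$. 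Your proposed resolution --- reducing the inner stationarity condition to a scalar quadratic in each $\gamma_i$ with a unique positive root --- does not work in general: $c_i=\Phi_i^T(\sigma^2I_n+\Phi\Gamma\Phi^T)^{-1}\Phi_i$ depends on the entire vector $\gamma$, so the stationarity system is coupled and only decouples in the orthogonal-design setting of Section~\ref{sec:selection}. As your argument stands, the inner problem is a difference-of-convex program in $\gamma$, and you have no a priori exclusion of non-global stationary points, so the identity $h(\theta^\ast)=\sum_i (\theta_i^\ast)^2/\gamma_i^\ast+\log|\sigma^2I_n+\Phi\Gamma^\ast\Phi^T|$ is unproved.

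The ingredient the paper uses, and which you are missing, is the reparameterization $\upsilon_i=\log\gamma_i$: the inner objective $\sum_i e^{-\upsilon_i}\theta_i^2+\log|\sigma^2I_n+\Phi\,\mathrm{diag}(e^{\upsilon_1},\ldots,e^{\upsilon_m})\Phi^T|$ is convex in $\upsilon$ (the first term obviously, the log-det term by the standard convexity of $\upsilon\mapsto\log\det(B+\sum_i e^{\upsilon_i}\Phi_i\Phi_i^T)$ for positive semidefinite summands). Hence every stationary point of the inner problem is a global minimizer and the minimizer set is connected, which supplies exactly the fact that $\gamma^\ast$ attains $h(\theta^\ast)$ and rules out spurious inner local minima; with that in hand your ``easy direction'' template closes both implications. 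To be fair, the paper's own treatment of the selector-continuity issue is also terse, but the log-domain convexity is the one idea the argument cannot do without, and your componentwise substitute for it is not valid beyond $\Phi^T\Phi=I_m$.
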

 \begin{proof}
Obviously, under Assumption \ref{ass0}, the term  $\theta_i^2/\gamma_i$ is definable. If $(\theta^*,\gamma^*)$ is a global minimum of \eqref{eq8}, then it satisfies the KKT conditions \eqref{st1}, and hence $\theta^*$ is a global minimum of \eqref{loc1} with $\gamma_i^\ast=\theta_i^\ast/\sqrt{c_i^\ast}$. In turn, let  $\theta^*$ be a global minimum of \eqref{loc1}. With $\theta$ fixed, global minimum $\gamma$ of the optimization problem from \eqref{loc2} satisfies the equation: $\gamma_i=\theta_i/\sqrt{c_i}$, where $c:=\nabla_\gamma\log|\sigma^2I_n+\Phi\Gamma\Phi^T|$. Then, there must exist some $\gamma^*$ that minimizes $\mathcal{L}(\theta^*,\gamma)$ such that $\gamma_i^*=\theta_i^*/\sqrt{c_i^*}$. It leads to that $(\theta^*,\gamma^*)$ satisfies the KKT condition of \eqref{eq8}. Note that $\mathcal{L}_\theta(\theta)$ is a strict upper bound on $\mathcal{L}(\theta,\gamma)$ with $\mathcal{L}_\theta(\theta)=\min \limits_{\gamma\succeq 0}\mathcal{L}(\theta,\gamma)$. Hence, $(\theta^*,\gamma^*)$ is a global minimum of \eqref{loc1}.

The relationship between global solutions to \eqref{eq8} and \eqref{loc1} can be extended to local solutions as well. In fact, since the optimization problem from \eqref{loc2} is convex with respect to the reparameterization of $\gamma$ given by $\upsilon_i:=\log\gamma_i$, all of its minima (if multiple exist) are connected. In other words, the minimization problem
\begin{equation*}
h(\theta):=\min \limits_{\upsilon} \Big{\{}\sum \limits_{i} e^{-\upsilon_i}\theta_i^2+\log|\sigma^2I_n+\Phi e^\Upsilon\Phi^T| \Big{\}},
\end{equation*}
where $\Upsilon :=\mathrm{diag}(\upsilon)$, is convex for $\upsilon$, and therefore all local minima are connected. Hence, the above result for global minima is also hold for local minima.
\end{proof}
%
%The proof of Theorem \ref{th3} can be briefly summarized using the relationships between multiple limit and iterative limit.

 Theorem \ref{th3} shows that global (local) minimum of \eqref{eq8} can be obtained by minimizing a least squares with concave penalty. Note that concave, non-decreasing regularization functions are well-known to enforce sparsity. Since $h(\theta)$ is such a function, it can induce sparsity to some extent. Furthermore, $h(\theta)$ provides a tighter approximation of $\ell_0$ norm than $\ell_1$ norm while
generating many fewer local minimum than using $\ell_0$ norm. However, usually $h(\theta)$ is non-factorable, i.e., $h(\theta)\neq\sum \limits_{j}h_i(\theta_i)$, unlike traditional penalty, for example, $\ell_1$-norm penalty and $\ell_2$-norm penalty. It leads to the difficulty of studying the behaviors of the optimal solution of \eqref{loc1}.

\subsection{Selection Consistency and Error Bounds} \label{sec:selection}% (Separable Case)
In this subsection, we will compare the sparsity pattern of the local minimum of \eqref{eq8} with that of real coefficient vector in \eqref{regre}. For convenience, we denote the real coefficient vector $\theta$ in \eqref{eq:sparserecovery} by $\theta^{\mathrm{real}}$. Theorem \ref{th1} showed that the iterative sequence $\{(\theta^{(k)},\gamma^{(k)})\}_{k=0}^{\infty}$ generated from Algorithm \ref{alg::conjugateGradient} converges to a stationary point $(\theta^\ast,\gamma^\ast)$ of \eqref{eq8}. And, each local minimum of \eqref{eq8} must be stationary point. Therefore, it is necessary to explore the connections between $\theta^\ast$ and $\theta^{\mathrm{real}}$.
  For the iterative
sequence $\{(\theta^{(k)},\gamma^{(k)})\}_{k=0}^{\infty}$ generated from Algorithm \ref{alg::conjugateGradient}, one has
\begin{subequations}\label{st2}
\begin{numcases}{}
\Phi_i^T(\Phi\theta^{(k+1)}-y)+\sigma^2\sqrt{c_i^{(k)}}z_i^{(k+1)}=0,\\
 \frac {|\theta_i^{(k+1)}|}{\sqrt{c_i^{(k)}}}=\gamma_i^{(k+1)}, i=1,\cdots,m,
\end{numcases}
\end{subequations}
where $c^{(k)}:=\mathrm{diag}[\Phi^T(\sigma^2I_n+\Phi\Gamma^{(k)}\Phi^T)^{-1}\Phi]$ and $z^{(k+1)}:=[z^{(k+1)}_1,\cdots,z^{(k+1)}_n]^T\in \partial\|\theta^{{k+1}}\|_1$.

To  simplify the Algorithm \ref{alg::conjugateGradient}, we consider a special case (i.e., an orthonormal design $\Phi^T\Phi=I_m$).
In such case, one has
\begin{align*}
   &(\sigma^2I_n+\Phi\Gamma^{(k+1)}\Phi^T)^{-1}  \\
   =& \sigma^{-2}I_n-\sigma^{-4}\Phi\Gamma^{(k+1)}\left[I_m+\sigma^{-2}\Gamma^{(k+1)}\Phi^T\Phi\right]^{-1}\Phi^T \\
   = & \sigma^{-2}I_n-\sigma^{-2}\Phi\Gamma^{(k+1)}\left[\sigma^2 I_m+\Gamma^{(k+1)}\right]^{-1}\Phi^T,
\end{align*}
and hence,
\begin{align}\notag
   c^{(k+1)}_i&= \sigma^{-2}\Phi_i^T\Phi_i- \sigma^{-2}\Phi_i^T\Phi\Gamma^{(k+1)}\left[\sigma^2  I_m+\Gamma^{(k+1)}\right]^{-1}\Phi^T\Phi_i\\  \notag
   & =\sigma^{-2}-\sigma^{-2}e_i^T\Gamma^{(k+1)}\left[\sigma^2 I_m+\Gamma^{(k+1)}\right]^{-1}e_i\\  \notag
   &=\sigma^{-2}-\sigma^{-2}\frac{\gamma_i^{(k+1)}}{\sigma^{2}+\gamma_i^{(k+1)}}\\ \label{hence1}
   &=\frac {1}{\sigma^2+\gamma_i^{(k+1)}}=\frac {\sqrt{c_i^{(k)}}}{\sigma^2\sqrt{c_i^{(k)}}+|\theta_i^{(k+1)}|},
\end{align}
where the first equality follows from the Woodbury equation. By Theorem 1, we see that $\{(\theta^{(k)},\gamma^{(k)})\}$ converges to the stationary
point $(\theta^{\ast},\gamma^{\ast})$ of  \eqref{eq8}. And, all stationary points of \eqref{eq8} satisfy the KKT conditions \eqref{st1}. Hence, $c^{(k)}$ converges to
$c^\ast$ as $k$ tends to infinity. Next, take the limit of Eq.~\eqref{hence1}, we
obtain an implicit equation:
\begin{equation}\label{st3}
\sigma^2c_i^\ast+|\theta_i^\ast|\sqrt{c_i^\ast}-1=0,
\end{equation}
which can be viewed as a quadratic equation with respect to $\sqrt{c_i^\ast}$. It can be easily calculated
\begin{equation}\label{st4}
\sqrt{c_i^\ast}=\frac {-|\theta_i^\ast|+\sqrt{|\theta_i^\ast|^2+4\sigma^2}}{2\sigma^2}=\frac {2}{|\theta_i^\ast|+\sqrt{|\theta_i^\ast|^2+4\sigma^2}}.
\end{equation}
At this time, the weighted update rule is similar with the one in \citep{candes2008enhancing}. Similarly, one has from \eqref{st2}
\begin{equation}\label{st5}
  \Phi_i^T(\Phi\theta^{\ast}-y)+\sigma^2\sqrt{c_i^{\ast}}z_i^{\ast}=0,
\end{equation}
where $z^\ast:=[z_1^\ast,z_2^\ast,\cdots,z_m^\ast]^T$. Substituting Eqs.~\eqref{eq:sparserecovery} and \eqref{st4} into Eq.~\eqref{st5} yields
\begin{equation}\label{st6}
  \theta^\ast_i-\theta_i^{\mathrm{real}}+\frac {-|\theta_i^\ast|+\sqrt{|\theta_i^\ast|^2+4\sigma^2}}{2}z_i^\ast=\Phi_i^Tv.
\end{equation}
Obviously, $\theta_i^\ast=\theta_i^{\mathrm{real}}$ if $\sigma^2=0$ and $v=0$. To compare the true coefficient vector $\theta^{\mathrm{real}}$ and the estimated one $\theta^\ast$, we introduce the support as a performance measure. Here, the support of a given vector $\theta$ is defined as
\[
\mathbb{S}(\theta):=\{i: |\theta_i|>\xi\},
\]
where $\xi>0$ is a fixed threshold that allows us to neglect some very small nonzero coefficients compared to those of coefficients. Let $\theta_{\min}$ be the minimum absolute value of all nonzero entries of $\theta^{\mathrm{real}}$. Then, when $\xi< \theta_{\min}$, one has $\mathbb{S}(\theta^{\mathrm{real}})=\{i :\theta^{\mathrm{real}}_i\neq 0\}$, and hence $\theta_{\min}:=\min \limits _{i\in \mathbb{S}(\theta^{\mathrm{real}})}|\theta^{\mathrm{real}}_i|$. Moreover, we tackle the related issues with noise, demonstrating that the estimator $\theta^\ast$ is selection consistent in recovering sparsity pattern.
\begin{definition}
 A estimator $\theta^\ast$ is selection consistent if $\mathbb{P}(\mathbb{S}(\theta^\ast)=\mathbb{S}(\theta^{\mathrm{real}}))\rightarrow1$, as $\sigma^2\rightarrow 0$.
 \end{definition}
\begin{theorem}\label{th2}
Assume that $ \Phi^T\Phi= I_m$. If $2\sigma\sqrt{\log 2m}\leq \xi < \frac {1}{2}\theta_{\min}-\sigma$, then $\mathbb{S}(\theta^{\mathrm{real}})=\mathbb{S}(\theta^\ast)$
with probability at least $1-e^{-\frac {\xi^2}{4\sigma^2}}$, i.e., $\theta^\ast$ is selection consistent. Moreover,
\[
|\theta_i^{\mathrm{real}}-\theta^\ast_i|\leq \xi+\frac {\sigma^2}{\xi},\ \forall i\in \mathbb{S}(\theta^{\mathrm{real}}).
\]
\end{theorem}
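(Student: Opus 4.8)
The plan is to start from the stationarity relation \eqref{st6}, which (via \eqref{eq:sparserecovery} and $\Phi^T\Phi=I_m$) already expresses the per-coordinate discrepancy between estimator and truth in closed form. Writing $w_i:=\Phi_i^T\nu$ and abbreviating the shrinkage factor as $\beta_i:=\tfrac12\big(-|\theta_i^\ast|+\sqrt{|\theta_i^\ast|^2+4\sigma^2}\big)$, relation \eqref{st6} reads $\theta_i^\ast-\theta_i^{\mathrm{real}}+\beta_i z_i^\ast=w_i$ with $z_i^\ast\in\partial|\theta_i^\ast|$. The first thing I would record is that, since $\Phi^T\Phi=I_m$ forces $\|\Phi_i\|=1$, each $w_i$ is a scalar Gaussian $\mathcal{N}(0,\sigma^2)$. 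A standard Gaussian tail bound together with a union bound over the $m$ coordinates gives $\mathbb{P}(\max_i|w_i|>\xi)\le 2m\,e^{-\xi^2/(2\sigma^2)}$, and the hypothesis $\xi\ge 2\sigma\sqrt{\log 2m}$ is precisely what turns $2m\,e^{-\xi^2/(2\sigma^2)}$ into $e^{-\xi^2/(4\sigma^2)}$. Hence on the event $E:=\{\max_i|w_i|\le\xi\}$, which holds with probability at least $1-e^{-\xi^2/(4\sigma^2)}$, the rest of the argument is deterministic.

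Next I would establish $\mathbb{S}(\theta^{\mathrm{real}})=\mathbb{S}(\theta^\ast)$ on $E$ by a two-way coordinatewise comparison, using that $0<\beta_i\le\sigma$ always (the rationalized form below shows $\beta_i$ decreases in $|\theta_i^\ast|$, and $\beta_i=\sigma$ at $\theta_i^\ast=0$). For $i\notin\mathbb{S}(\theta^{\mathrm{real}})$, i.e. $\theta_i^{\mathrm{real}}=0$: if $\theta_i^\ast\ne0$ then $\beta_i z_i^\ast$ carries the sign of $\theta_i^\ast$, so taking absolute values in \eqref{st6} gives $|\theta_i^\ast|\le|\theta_i^\ast|+\beta_i=|w_i|\le\xi$, while the case $\theta_i^\ast=0$ is immediate; either way $i\notin\mathbb{S}(\theta^\ast)$. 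For $i\in\mathbb{S}(\theta^{\mathrm{real}})$, i.e. $|\theta_i^{\mathrm{real}}|\ge\theta_{\min}$: the triangle inequality applied to \eqref{st6} yields $|\theta_i^\ast|\ge|\theta_i^{\mathrm{real}}|-\beta_i-|w_i|\ge\theta_{\min}-\sigma-\xi$, and the hypothesis $\xi<\tfrac12\theta_{\min}-\sigma$ (equivalently $\theta_{\min}>2\xi+2\sigma$) forces this to exceed $\xi$, so $i\in\mathbb{S}(\theta^\ast)$. The two inclusions give the claimed support recovery.

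For the error bound I would exploit the rationalized shrinkage factor $\beta_i=2\sigma^2\big/\big(|\theta_i^\ast|+\sqrt{|\theta_i^\ast|^2+4\sigma^2}\big)$, obtained by multiplying numerator and denominator of $\beta_i$ by the conjugate. Once a coordinate is known to lie in $\mathbb{S}(\theta^\ast)=\mathbb{S}(\theta^{\mathrm{real}})$ we have $|\theta_i^\ast|>\xi$, and since $\sqrt{|\theta_i^\ast|^2+4\sigma^2}\ge|\theta_i^\ast|$ the denominator is at least $2|\theta_i^\ast|>2\xi$, giving $\beta_i<\sigma^2/\xi$. On such coordinates $z_i^\ast=\mathrm{sign}(\theta_i^\ast)$, so \eqref{st6} yields $|\theta_i^{\mathrm{real}}-\theta_i^\ast|=|\beta_i z_i^\ast-w_i|\le\beta_i+|w_i|\le\xi+\sigma^2/\xi$ on $E$, which is exactly the stated bound.

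The routine ingredients are the Gaussian tail estimate and the two triangle inequalities; the step demanding the most care is making the two threshold constraints on $\xi$ interlock. The lower bound $\xi\ge2\sigma\sqrt{\log 2m}$ must be spent entirely on the concentration step to absorb the union-bound factor $2m$, whereas the upper bound $\xi<\tfrac12\theta_{\min}-\sigma$ must simultaneously keep the true-nonzero coordinates above $\xi$ in the estimate and keep $\beta_i$ small enough to close the $\xi+\sigma^2/\xi$ estimate. I would therefore verify with particular attention that $\xi<\tfrac12\theta_{\min}-\sigma$ indeed yields $\theta_{\min}-\sigma-\xi>\xi$, and that a nonempty admissible window for $\xi$ exists, i.e. that $2\sigma\sqrt{\log 2m}<\tfrac12\theta_{\min}-\sigma$ is consistent with the regime $\sigma^2\to0$ in which selection consistency is asserted.
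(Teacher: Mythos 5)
Your proposal is correct and follows essentially the same route as the paper's proof: both start from the stationarity relation \eqref{st6}, condition on the event $\{\max_i|\Phi_i^T\nu|<\xi\}$ whose probability is controlled by the Gaussian tail bound and the hypothesis $\xi\geq 2\sigma\sqrt{\log 2m}$, establish the two support inclusions coordinatewise using $\beta_i\leq\sigma$ and the gap condition $\xi<\tfrac12\theta_{\min}-\sigma$, and bound the error via the shrinkage factor $\beta_i\leq\sigma^2/\xi$ on the recovered support. The only differences are cosmetic (a union bound in place of the independence product, the conjugate-rationalized form of $\beta_i$ --- which the paper itself records in \eqref{st4} --- in place of a Taylor expansion, and direct inclusions in place of arguments by contradiction), and you make the role of the lower bound on $\xi$ more explicit than the paper does.
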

\begin{proof}
For any $i \in \mathbb{S}(\theta^{\mathrm{real}})$, one has $i\in \mathbb{S}(\theta^\ast)$ if $|\Phi_i^Tv|<\frac {1}{2}\theta_{\min}$. In fact, if $i\not \in \mathbb{S}(\theta^\ast)$, then by Eq.~\eqref{st6}, one has
\begin{equation}\label{st7}
\theta_i^\ast+\frac {-|\theta_i^\ast|+\sqrt{|\theta_i^\ast|^2+4\sigma^2}}{2}z_i^\ast=\theta^{\mathrm{real}}_i+\Phi_i^Tv.
\end{equation}
When $|\Phi_i^Tv|<\frac {1}{2}\theta_{\min}$, one has
\begin{equation*}
  |\theta^{\mathrm{real}}_i+\Phi_i^Tv|\geq |\theta^{\mathrm{real}}_i|-|\Phi_i^Tv|\geq\frac{1}{2}\theta_{\min},
\end{equation*}
and
\[
\left|\theta_i^\ast+\frac {-|\theta_i^\ast|+\sqrt{|\theta_i^\ast|^2+4\sigma^2}}{2}z_i^\ast\right|\leq \xi+\sigma.
\]
 By assumption, it is contradictory to Eq.~\eqref{st7}. For any $i \not \in \mathbb{S}(\theta^{\mathrm{real}})$, one has $i\not \in \mathbb{S}(\theta^\ast)$ if $|\Phi_i^Tv|<\xi$. In fact, if $i\in \mathbb{S}(\theta^\ast)$, then from Eq.~\eqref{st6} and assumption
\begin{equation}\label{st9}
|\Phi_i^Tv|=\frac { \left|\theta_i^\ast+\sqrt{|\theta_i^\ast|^2+4\sigma^2}z_i^\ast\right|}{2}>\xi,
\end{equation}
which is a contradiction. Hence, $\mathbb{S}(\theta^{\mathrm{real}})=\mathbb{S}(\theta^\ast)$ holds on $\{\max \limits_{i}|\Phi_i^Tv|   <\xi\}$. Since $\Phi^T\Phi=I_m$, then $\Phi_i^Tv$, $i=1,\cdots,m$, are independent Gaussian random variables with zero mean and variance $\sigma^2$. By the Hoeffding inequality, one has
\[
\mathbb{P}\left[| \Phi_i^Tv| \geq\xi\right]\leq 2e^{-\frac {\xi^2}{2\sigma^2}}.
\]
Thereby,
\begin{align*}
\mathbb{P}\left[\max \limits_{i}| \Phi_i^Tv| <\xi\right]&= \prod _i\mathbb{P}\left[| \Phi_i^Tv| <\xi\right]\\
                                                        & \geq \left(1-2e^{-\frac {\xi^2}{2\sigma^2}}\right)^m\\
                                                        &\geq 1-e^{-\frac {\xi^2}{4\sigma^2}},
\end{align*}
where the last inequality follows from the fact that $(1-x)^m\geq 1-mx$, $\forall x\in [0,1]$. Next, we establish the bounds of error $\theta_i^{\mathrm{real}}-\theta^\ast_i$. For any $i\in \mathbb{S}(\theta^{\mathrm{real}})$, by Taylor formula, there exists an real number $\eta\in (0,4\sigma^2)$ such that
\[
\sqrt{|\theta_i^\ast|^2+4\sigma^2}=|\theta_i^\ast|+\frac {2\sigma^2}{\sqrt{|\theta_i^\ast|^2+\eta}}.
\]
Substituting it into Eq.~\eqref{st6} yields
\begin{align*}
 |\theta_i^{\mathrm{real}}-\theta^\ast_i|=\left|\Phi_i^Tv-\frac {\sigma^2}{\sqrt{|\theta_i^\ast|^2+\eta}}\right|
                                         \leq \xi+\frac {\sigma^2}{\xi}.
\end{align*}
This completes the proof. \end{proof}

\begin{remark}
Under the scaling $\sigma^2\rightarrow 0$, the condition of Theorem \ref{th2} holds, and moreover that the probability of success converges to one. Additionally, from the proof process, we see that the estimator $\theta^\ast$ is sign-consistent, i.e., $\mathbb{P}(\mathrm{sign}(\theta^\ast)=\mathrm{sign}(\theta^{\mathrm{real}}))\rightarrow1$, as $\sigma^2\rightarrow 0$, with the convention: $\mathrm{sign}(\theta_i)=0$, $|\theta_i|\leq \xi$.
\end{remark}

The above analysis implies that the log-det term in \eqref{eq8} is factorable in the case of orthogonal design and can be expressed in the form independently of $\Phi$. More generally, if $\Phi^T\Phi$ is diagonal, the log-det term is still factorable and has similar results.
%Furthermore, \eqref{eq8} has no local minimum in the limit as $\sigma^2\rightarrow 0$ due to only a single feasible solution with $\theta=\Phi^ Ty$.
}

\section{Experiments}\label{sec:experiment}
This section performs several experiments to demonstrate and validate our presented method. All codes to reproduce the results are publicly available at {\it https://github.com/HAIRLAB/S3d}. First, in $\S$\ref{sec:simuexample}, we illustrate the discovery process with $\text{S}^3\text{d}$ on eight prototypical PDEs used in various scientific fields and had several different mathematical forms. We seek
to show that the proposed $\text{S}^3\text{d}$ method handles model diversity and typical errors or difficulties that arise in modeling. Secondly, in $\S$\ref{sec:subexperiment}, we demonstrate the applicability of our $\text{S}^3\text{d}$ method to real data collected from an experiment on convection in an ethanol-water mixture in a long, narrow, annular container heated from below~\citep{Voss}. We reconstruct the underlying PDE known as the  Complex Ginzburg-Landau Equation (CGLE) without any \textit{a priori} information.
Finally in $\S$\ref{sec:comparison}, we compare the performance of $\text{S}^3\text{d}$, STRidge and DR in a identification task for the original dataset generated in \citep{Rudy}. It has been observed that $\text{S}^3\text{d}$ achieves better performance in terms of smaller parametric error. In addition, the number of samples that is required by $\text{S}^3\text{d}$ for training is smaller than the STRidge method.

\subsection{Discovery of Prototypical PDEs}\label{sec:simuexample}
In this section, we collect our synthetic data by generating datasets from numerical solutions to canonical PDEs. The
data collected on each PDE is composed into snapshot matrix $U\in \mathbb{R}^{n_x\times n_t}$. We denote  the $i$th row and $j$th column of $U$ by $u(x_i,t_j)$.
To illustrate the robustness of our method in more realistic settings, we add 1\% Gaussian noise into the synthetic data. We write our synthetic measurements with added Gaussian noise as
$$
y(x_j,t_i)=u(x_j,t_i)+\eta_{j,i},
$$
where $\eta_{j,i}$ takes a normal distribution $\mathcal{N}(\mu, \sigma_u^2)$ with mean $\mu=0$ and standard deviation $\sigma_u$. The error from the noise-contained data is  quantified by means of the root mean square error (RMSE),
\begin{eqnarray*}
	Err(y)=\sqrt{\frac{1}{n_xn_t}\sum\limits_{j=1}^{n_x}\sum\limits_{i=1}^{n_t}(u(x_j,t_i)-y(x_j,t_i))^2}.
\end{eqnarray*}
We also use different error norms for assessing our results. We compute the RMSE for evaluating the impact of noise and the mean-square error (MSE) and Standard Deviation (STD) for assessing our discovery results.

\subsubsection{Sine-Gordon equaiton}
We begin our examples with the $\text{S}^3\text{d}$ method by considering the propagation of a slip
in an infinite chain of elastically bound atoms lying over a fixed lower chain of similar atoms~\citep{Baro}, whose dynamics are governed by the sine-Gordon equation~\citep{Ablo},
\begin{eqnarray}\label{SGE-1}
u_{tt}=u_{xx}-\sin(u),
\end{eqnarray}
where $u_{xx}$ represents the elastic interaction energy between neighboring
atoms, $u_{tt}$ represents neighboring atoms' kinetic energy, and $\sin(u)$ represents the
potential energy due to the fixed lower chain. Eq.~\eqref{SGE-1} admits various types of special solutions, such as small-amplitude solutions, traveling-wave solutions, and envelope wave solutions \citep{Baro}. Our objective is to extract the dynamical regime from a breather solution to Eq.~\eqref{SGE-1} in the following form:
\begin{eqnarray*}
	u(x,t)=4\arctan\left(\frac{\sin(t/\sqrt{2})}{\cosh(x/\sqrt{2})}\right).
\end{eqnarray*}
For our analysis using $\text{S}^3\text{d}$, we use a dataset consisting of $256$ measurements of the breather solution. The measurements are equally sampled in time with an interval of $\triangle t=0.0784$. We uniformly discretize the spatial domain $[-12.4,12.4]$ of interest into $511$ equal spaces (so that $m=512$) using a grid step of $\triangle x=0.0485$. We show a visualization of the discrete breather solution in Fig.~\ref{SGE-2}. It is seen that the solution is not only symmetric but also periodic (a solitary wave propagates with periodicity $2\sqrt{2}\pi$). We organize the data into snapshot matrix, $U\in \mathbb{R}^{512\times 256}$, and focus on discovering the governing equation with the nonlinear term $\sin(u)$ from our small periodic dataset.
\begin{figure}[h]
	\centering \subfigure{\includegraphics[scale=.3]{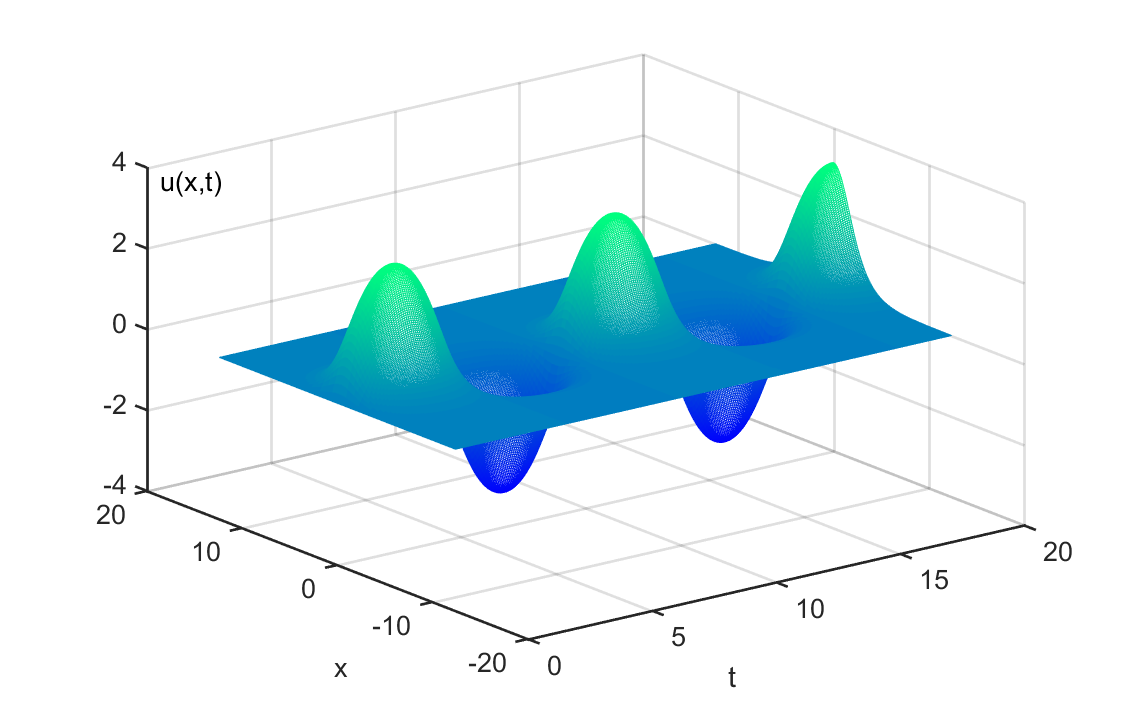}}
	%\captionsetup{justification=centering}
	\caption{Numerical
		solution for Sine-Gordon equation \eqref{SGE-1}.}\label{SGE-2}
\end{figure}

First, we test the $\text{S}^3\text{d}$ method on a small dataset selected randomly from the synthetic data. We construct a dictionary consisting of $m=25$ basis functions. The basis functions contain any combination of the state, $u(x,t)$, and the corresponding derivatives up to the third-order. In addition, we include periodic trigonometric functions such as, $\sin(u), ~\cos(u),~ \sin(u)\sin(u), \sin(u)\cos(u),~\cos(u)\cos(u)$ in our dictionary. From the full dataset, the derivatives we include among the candidate terms can be approximated using the second-order finite difference method. Then, we randomly sub-sample 10 data points from the domain $[-2.6935, 2.1112]$  $\times$ $[0.3922, 4.2353]$, in which a wave package forms. Finally, when applying $\text{S}^3\text{d}$, we correctly discover the Sine-Gordon equation with an accuracy of MSE=0.0706\%, STD = 0.0895\%, which is shown in Table~\ref{SGE-3}. To check the criticality of the subsample domain for our discovery results, we conduct the same experiments on other domains where the wave package does not form. Our results show that, indeed, the subsampled domain is essential for Eq.~\eqref{SGE-1}.

Next, we perform a sensitivity analysis of $\text{S}^3\text{d}$ on noisy data. We choose to add 1\% Gaussian
noise into our synthetic data and compare this noisy data with the original one, and find a RMSE of $Err= 0.0084$. It is observed from Table \ref{SGE-3} that more data points are required to correctly identify the underlying dynamics. In contrast with the noise-free case, we consider a domain in which the second wave package forms and randomly sub-sample 50 data points out of the $131072$ ($512\times 256=131072$) data points. Our results show that disturbance from noise results in a small loss of precision.

This example with the Sine-Gordon equation highlights the $\text{S}^3\text{d}$ method's ability to discover PDEs with only a small sample of the system's states. This example also shows that small samples including essential information regarding the system dynamics play key roles in detecting the underlying trigonometric function ($\sin(u)$).

\begin{table}[ht]
	\centering
	\caption{$\text{S}^3\text{d}$ Method for Sine-Gordon equation.}\label{SGE-3}
	\begin{tabular}{!{\vrule width1.0pt}c|!{\vrule width1.0pt}c|!{\vrule width1.0pt}c|!{\vrule width1.0pt}c|!{\vrule width1.0pt}c|}
		\Xhline{1.0pt}
		& points &\tabincell{c}{$u_{tt}=u_{xx}-\sin(u)$}&mean(err)$\pm$std(err)\\
		\Xhline{1.0pt}
		\tabincell{c}{Identified PDE\\(no noise)}&10 &\tabincell{c}{$u_{tt}=0.9999u_{xx}-0.9987\sin(u)$}&  0.0706\%$\pm$0.0895\%\\
		\Xhline{1.0pt}
		\tabincell{c}{Identified PDE\\(with noise)}&50 &\tabincell{c}{$u_{tt}=0.9920u_{xx}-0.9982\sin(u)$}& 0.4874\%$\pm$0.4407\%\\
		\Xhline{1.0pt}
	\end{tabular}
\end{table}

\subsubsection{Fisher's equation}

For our second example, we consider the Fisher's equation:
\begin{eqnarray}\label{fish}
u_t&=&\alpha u-\beta u^2+d u_{xx},
\end{eqnarray}
which governs bacterial population dynamics \citep{Ozis} as an one-dimensional reaction diffusion model.
The positive constant, $\alpha$, denotes growth rate, $\beta$ is the competition parameter and $d$ is the diffusion coefficient. Fisher's equation has a long-standing history in mathematical modeling of propagation phenomena in
distributed dissipative systems.

We consider the initial condition with a flat roof in the middle \citep{Mit}
\begin{eqnarray*}
	u_0(x)=\begin{cases}
		e^{10(x+1)},~~~x<-1,\\
		1,~~~~~~~~~-1\leq x \leq 1,\\
		e^{-10(x-1)},~~~x>1,
	\end{cases}
\end{eqnarray*}
and set coefficients $\alpha=\beta=1$ and $d=0.1$ in Eq.~\eqref{fish}. The numerical solution, $u$, is computed using a difference scheme
with spatial step length, $\triangle x=0.06$, and time step length, $\triangle t=0.01$.
\begin{figure}[!h]
	\centering \subfigure{\includegraphics[scale=.4]{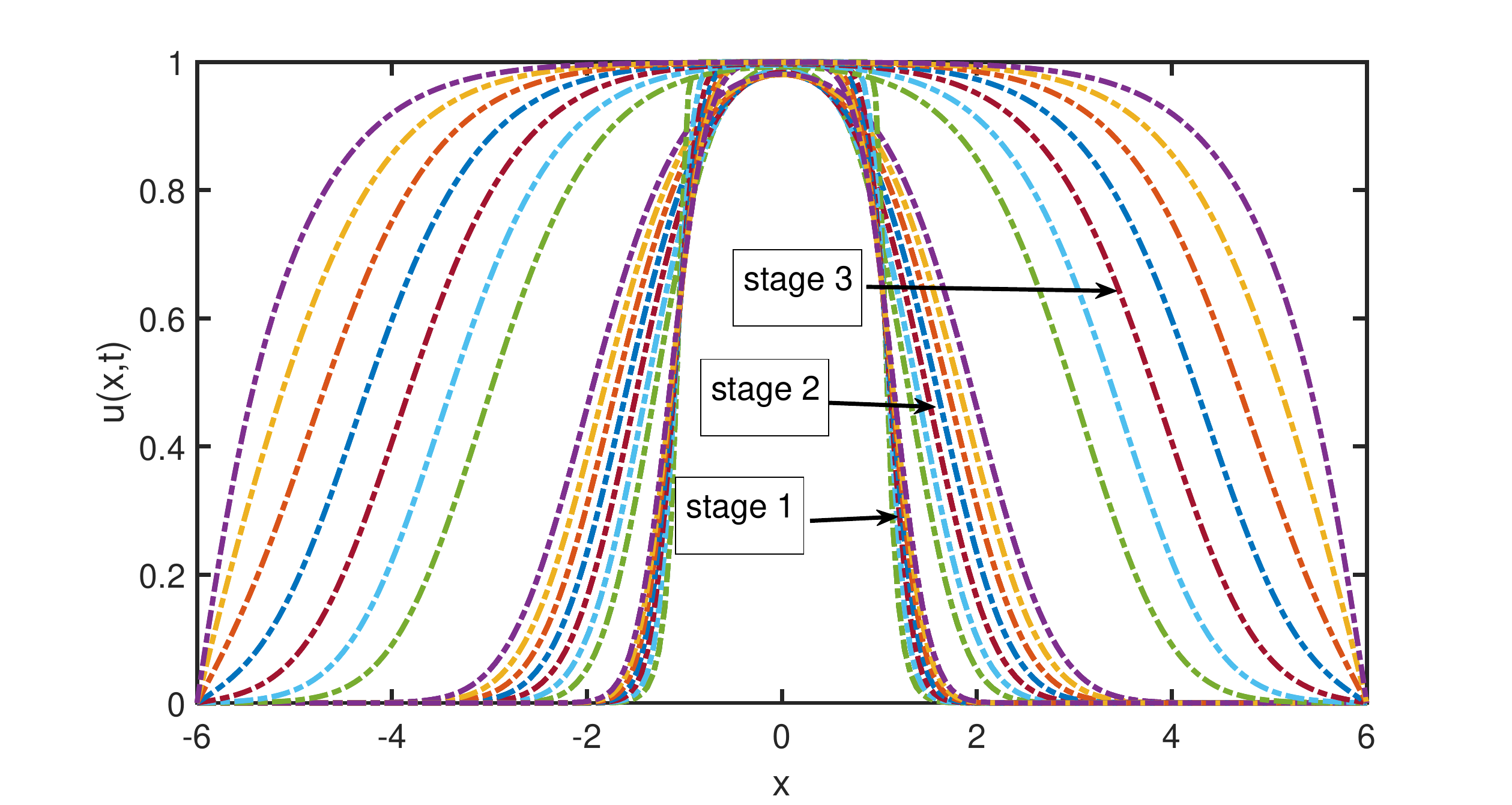}}
	%\captionsetup{justification=centering}
	\caption{Numerical solution to Eq.~\eqref{fish}. Shown is the contour plots of Stage $1$ from $t=0$ to $t=0.4$ with $\triangle t=0.06$; the contour plots of Stage $2$ from $t=1$ to $t=3$ with $\triangle t=0.3$; the contour plots of Stage $3$ from $t=5$ to $t=10$ with $\triangle t=0.8$.}\label{fish1}
\end{figure}
In Fig.~\ref{fish1}, we plot the evolution of the solution, $u$. We observe that, initially, both the reaction term, $u(1-u)$, and the diffusion term, $u_{xx}$, have contours that round around the edges. As time goes on, the whole contour reduces and at the finial stage, the limiting contour appears. The collected data is appropriate for identification since the limiting wave fronts and the limiting speed strictly depend on the system. We store the data  into snapshot matrix $U\in \mathbb{R}^{201\times 1000}$.

With results shown in Table \ref{ifish},
\begin{table}[ht]
	\centering
	\caption{$\text{S}^3\text{d}$ Method for Fisher's equation.}\label{ifish}
	\begin{tabular}{!{\vrule width1.0pt}c|!{\vrule width1.0pt}c|!{\vrule width1.0pt}c|!{\vrule width1.0pt}c|!{\vrule width1.0pt}c|}
		\Xhline{1.0pt}
		& points &\tabincell{c}{$u_t=u-u^2+0.1u_{xx}$}&mean(err)$\pm$std(err)\\
		\Xhline{1.0pt}
		\tabincell{c}{Identified PDE \\(no noise)} & 10000&\tabincell{c}{$u_t=0.9998u -1.0002u^2+0.0997u_{xx}$}&0.0960\%$\pm$0.1342\%\\
		\Xhline{1.0pt}
		\tabincell{c}{Identified PDE \\(with 1 \% noise)}& 10000 &\tabincell{c}{$u_t=1.0053u -1.0079u^2+0.0971u_{xx}$}& 1.4104\%$\pm$1.3007\%\\
		\Xhline{1.0pt}
	\end{tabular}
\end{table}
we chose $m=16$ basis functions as our model class, specifically and attempted to discover the few active terms in the underlying dynamics from our chosen basis functions, $1,	u,	u^2,	u^3,	u_{x},	,u_{xx},$ $	u_{xxx},	uu_{x},	uu_{xx}, uu_{xxx},	u^2u_{x},	u^2u_{xx},	u^2u_{xxx},	$ $u^3u_{x},	u^3u_{xx},	u^3u_{xxx}$. We compute the derivatives using the explicit difference scheme. Furthermore, instead of specifying
the re-sampling region, we randomly take $10000$ out of total $201000$ data points. Our method exactly identifies the main features $u, u^2, u_{xx}$ from the random samples. The computed MSE and STD between the identified parameter and the true parameters is very small, as shown in Table \ref{ifish}.

We continue to test our method with noisy data. In this case, we must handle the error due to the numerical solution, in addition to the measurement error due to noise. We compute the RMSE, $Err=0.0042$ from the noisy data. The RMSE is enhanced when computing the derivatives using the second-order difference method. For the added noise level, we measure the RMSE for the derivatives $u_t, u_{xx}$ as $Err=0.3023$ for term $u_t$, $Err=2.96$ for term $u_{xx}$. To reduce the RMSE, we choose to, instead, approximate the derivatives using the polynomial interpolation method. Compared to the FD method, the measured RMSE for polynomial interpolation method is much smaller, with $Err= 0.0523$ for term $u_t$, $Err=0.0888$ for term $u_{xx}$.
We choose to randomly sub-sample $10000$ data points from the noisy data, just as we did in the noiseless case. As shown in Table \ref{ifish}, with these design choices, we discover Eq.~\eqref{fish} with relatively good accuracy.

This example with Eq.~\eqref{fish} shows that both having an initial condition that stimulates the important features of a dynamical equation and having an accurate method for estimating the derivatives play important roles in the discovery process. In these two experiments, our method permits the use of random samples from the full dataset, similar to the Sine-Gordon equation. However, for Eq.~\eqref{fish}, we randomly sub-sample the data from the entire domain.

\subsubsection{Korteweg-de Vries equation}
Our third example is with the KdV equation with the form,
\begin{eqnarray}\label{KDV-1}
u_t&=&-uu_x-\epsilon u_{xxx},
\end{eqnarray}
where $u$ represents the height of wave at position $x$ and time $t$, the nonlinear term $uu_x$ represents steepening of the wave, the linear dispersive term $u_{xxx}$ represents the spreading of the wave, and $\epsilon$ is a non-zero real constant. The KdV equation describes the unidirectional propagation of shallow water waves over a flat bottom, which theoretically explains the stability of the solution in the experiments of Scott Russell in $1834$ \citep{KdV}. In this example, we show the $\text{S}^3\text{d}$ method on the two interacting soliton case. The interaction between different solitary waves yields particle-like behavior, which is an interesting phenomenon in the shallow water wave problem. We set $\epsilon=0.000484$ and try to validate $\text{S}^3\text{d}$'s ability to discover the KdV equation with small $\epsilon$.

We use the high-order compact difference method in \citep{Lichen} to simulate the collision of a double soliton.
The corresponding initial values are specified as follows:
\begin{eqnarray*}
	u_0(x)=3c_1\text{sech}^2(a_1(x-x_1))+3c_2\text{sech}^2(a_2(x-x_2)),
\end{eqnarray*}
with $c_1=0.3, c_2=0.1, x_1=0.4,x_2=0.8,a_1=0.5\sqrt{c_1/\epsilon}, a_2=0.5\sqrt{c_2/\epsilon}$. Here, the physical domain is  $x\in [0, 2]$, which is uniformly discretized into 255 equal spaces with a spatial grid, $\triangle x=0.0078$. At the outset ($t=0$), we place two solitary waves of different amplitudes along the x-axis, and both of them move in the left direction. We run the simulation until  $t=2$ with time steps of $\triangle t=0.0015$. Fig.~\ref{KDV-2} shows the time evolution of $u(x,t)$ over a range of time values $[0,2]$. We observe that the soliton with higher amplitude travels at a faster speed before interacting with the other soliton. After the two solitons separate, the two solitons still preserve their original amplitude. We organize the data into the snapshot matrix, $U\in \mathbb{R}^{256\times 1301}$.
\begin{figure}[h]
	\centering \subfigure{\includegraphics[scale=.3]{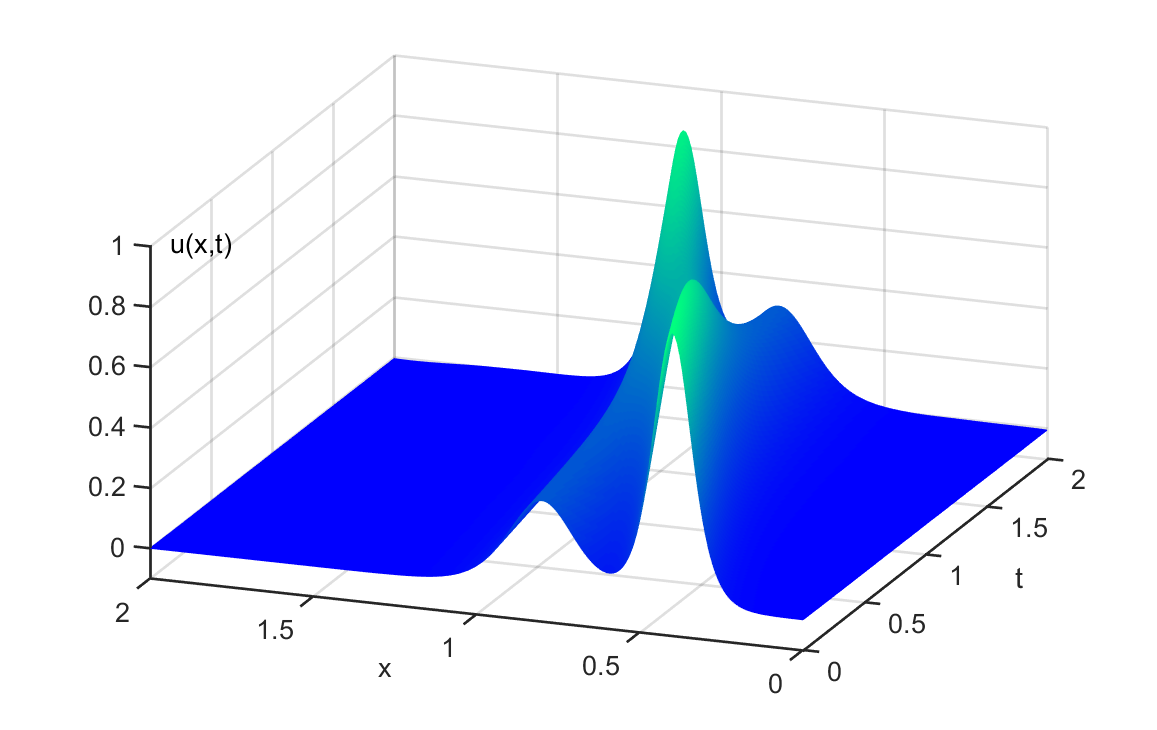}}
	%\captionsetup{justification=centering}
	\caption{Numerical solution for Korteweg-de Vries equation \eqref{KDV-1}.}\label{KDV-2}
\end{figure}

Using snapshot matrix, $U$, we first approximate the time derivatives and the spatial derivatives in a dictionary matrix consisting of $m=12$ basis functions of the form, $1,u,u^2,u_x,u_{xx},u_{xxx}, uu_x,uu_{xx},uu_{xxx},u^2u_{x},u^2u_{xx},u^2u_{xxx}$. We approximate the time derivatives using the second order central difference scheme. To test $\text{S}^3\text{d}$ on a small dataset, we subsample 10000 data points out of the full 333056 data points.
We show the interaction between these two solitons in Fig.~\ref{KDV-2}. To guarantee simultaneous, equal representation of the two solitons in the subsampled data, we subsample the 10000 data points on the domain, $[0.3922,0.7765]\times[0,0.3062]$. Table \ref{KDV-3} lists the discovery results. We see that the $\text{S}^3\text{d}$ method, only using a small portion of data, can effectively identify the KdV equation with $MSE=0.0855\%$ and $STD=0.0145\%$.

In Table \ref{KDV-3}, we also show our results in the noisy case. After adding 1\% Gaussian noise into the original dataset, we compute a $RMSE=0.002$ between the synthetic noisy data and the original one. This error
makes it difficult to approximate the derivatives. To address this error, we use the polynomial approximation with good noise immunity to estimate our derivatives. Then, by increasing the number of data points (18750 of full data points), we can correctly discover the KdV equation and are able to detect parameters that highly coincide with the true values, yielding $MSE=0.9982\%$ and $STD=0.9995\%$.
\begin{table}[ht]
	\centering
	\caption{$\text{S}^3\text{d}$ Method for Korteweg-de Vries equation.}\label{KDV-3}
	\begin{tabular}{!{\vrule width1.0pt}c|!{\vrule width1.0pt}c|!{\vrule width1.0pt}c|!{\vrule width1.0pt}c|!{\vrule width1.0pt}c|}
		\Xhline{1.0pt}
		& points &\tabincell{c}{$u_t=-0.000484u_{xxx}-uu_x$}&mean(err)$\pm$std(err)\\
		\Xhline{1.0pt}
		\tabincell{c}{Identified PDE\\(no noise)}&10000 &\tabincell{c}{$u_{t}=-0.000484u_{xxx}-0.999247uu_x$}& 0.0855\%$\pm$ 0.0145\%\\
		\Xhline{1.0pt}
		\tabincell{c}{Identified PDE\\(with noise)}&18750 &\tabincell{c}{$u_{t}=-0.000483u_{xxx}-0.982950uu_x$}&  0.9982\%$\pm$ 0.9995\%\\
		\Xhline{1.0pt}
	\end{tabular}
\end{table}

\subsubsection{FitzHugh-Nagumo equation}
For our fourth example, we discover the FitzHugh-Nagumo model with two components in the following form:
\begin{eqnarray}\label{FN}
\begin{aligned}
u_t&=d\cdot u_{xx}+f(u,\alpha)-w,\cr
w_t&=\beta u-\gamma w,
\end{aligned}
\end{eqnarray}
where, $u(x,t)$ denotes the transmembrane electrical potential at position $x$ and time $t$, and $w$ denotes the likelihood that a particular class of ion channel is open \citep{Hoffman}. The diffusion coefficient, $d$, represents the electrical conductivity. The function, $f(u,\alpha) \triangleq u(u-\alpha)(1-u)$, represents the reactive properties of the medium with reactive coefficient, $\alpha=0.2$. We are particularly interested in the minimally reactive coefficients, $\gamma=0.001 \ll 1$, $\beta=0.002 \ll 1$, and diffusion coefficient, $d=1$. Eq.~\eqref{FN} is a simplified version to the Hodgkin-Huxley model for nerve conduction, which generally can be used to model activation and deactivation dynamics of a spiking neuron \citep{FN-1, FN-2}.

First, we collect the data from a numerical solution to Eq.~\eqref{FN} with the following initial condition: $u_0=\exp(-x^2),~w_0=0.2\exp(-(x+2)^2)$ and zero boundary conditions. We compute the numerical solution using the finite element method. We create a uniform spatial grid from the interval of interest, $x\in [-15, 15]$, by dividing the interval into $512$ equal subintervals with a fine mesh size of $\triangle x=0.0587$. Then, we discretize the spatial derivative terms in Eq.~\eqref{FN} \citep{Ske}. We integrate the resulting ODE to obtain a finite element solution to Eq.~\eqref{FN}. Fig.~\ref{figure:FN-1} shows the evolution of state variables $u$ and $w$. The sampled data is organized in two data matrices, $U, W\in \mathbb{R}^{512 \times 401}$, with the time step size, $\triangle t=0.05$.
\begin{figure}[h]
	\centering
	\subfigure{\includegraphics[scale=.28]{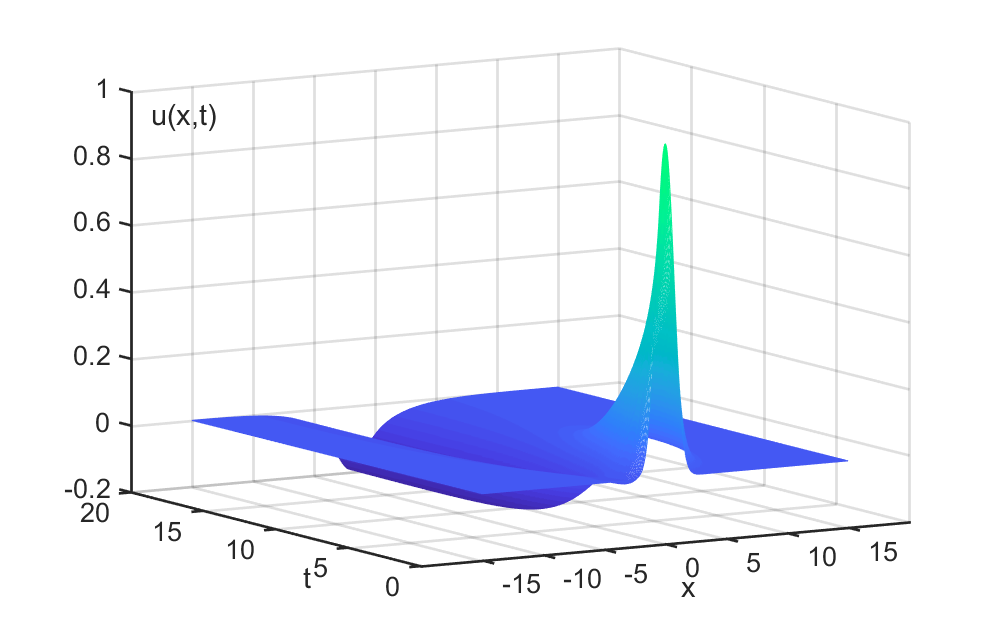}}
	\subfigure{\includegraphics[scale=.28]{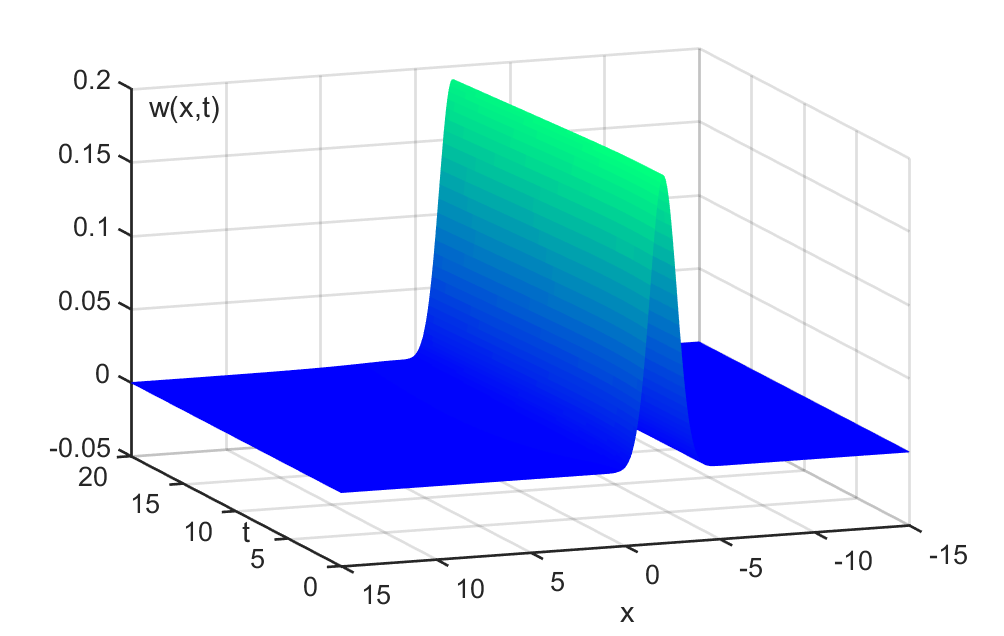}}
	%\captionsetup{justification=centering}
	\caption{The numerical solution to the FitzHugh-Nagumo equation \eqref{FN} with $\gamma=0.001$, $\beta=0.002$.}  \label{figure:FN-1}
\end{figure}

Next, we test the ability of the $\text{S}^3\text{d}$ method to identify  Eq.~\eqref{FN} from small datasets randomly selected from the full synthetic dataset. For this, we sub-sample the collected data from time $t_1=0.25$, to time $t_2=5.2$, with $100$ spatial points. We use a model class consisting of $m=25$ basis functions for our candidate terms, including the derivative of $u$ up to the third order. We approximate the derivative terms in the candidate dictionary using an explicit scheme. By applying the $\text{S}^3\text{d}$ algorithm on the subsampled $10000$ data points, we correctly identify Eq.~\eqref{FN} with a relatively good accuracy.
The results is summarized in Table \ref{iFN}.
Note that the dynamics in the selected region incorporate the interaction between the reactive term and the diffusion term in Eq.~\eqref{FN}, enabling our method to discover the Eq.~\eqref{FN} from the subsampled data points.
\begin{table}[ht]
	\centering
	\caption{$\text{S}^3\text{d}$ discovers the FitzHugh-Nagumo equation.}\label{iFN}
	\begin{tabular}{!{\vrule width1.0pt}c|!{\vrule width1.0pt}c|!{\vrule width1.0pt}c|!{\vrule width1.0pt}c|!{\vrule width1.0pt}c|}
		\Xhline{1.0pt}
		& points &\tabincell{c}{$u_t=d\cdot \bigtriangleup u+ u(u-\alpha)(1-u)-\omega$\\$\omega_t=\beta u-\gamma \omega$}&mean(err)$\pm$std(err)\\
		\Xhline{1.0pt}
		\tabincell{c}{Identified PDE \\(no noise)}& \tabincell{c}{10000~(u)\\10000~(w)} &\tabincell{c}{$u_t=-0.1991u+1.2004u^2$\\$-1.0031u^3+1.0023u_{xx}-1.0005\omega$\\$\omega_t=-0.000993\omega+0.001999u$}&0.2649\%$\pm$0.2462\%\\
		\Xhline{1.0pt}
		\tabincell{c}{Identified PDE\\ (with 1\% noise)} & \tabincell{c}{18000~(u)\\27000~(w)} &\tabincell{c}{$u_t= -0.1981u+1.1762u^2$\\$-0.9499u^3+1.0025u_{xx}-0.9920\omega$\\$\omega_t=-0.001004\omega+0.001952u$}& 1.6795\%$\pm$1.6690\%\\
		\Xhline{1.0pt}
	\end{tabular}
\end{table}

To illustrate the robustness of the $\text{S}^3\text{d}$ method, we add $1\%$ Gaussian noise to the simulated data. The synthetic noisy data is compared with the original simulation data at each of the grid points, and the measured RMSE is $Err(y)=0.000735$ for $u$ and $Err(y)=0.000382$ for $w$. With the noisy data, we perform the same experiment as we did for the data without noise. In the experiment, we sub-sample the noise-contained data from time $t_1=0$ to time $t_2=8.95$ with $100$ spatial points.
%since we need more data to discover Eq.~\eqref{FN} from noisy data.
In addition, we find that the finite difference scheme is sensitive to the noise. Instead, we propose using polynomial interpolation to approximate the derivatives. For different data points and the same dictionary, the proposed $\text{S}^3\text{d}$ method correctly identifies Eq.~\eqref{FN} (see Table \ref{iFN}).

Finally, for an even more intensive study of our method, we further test the proposed $\text{S}^3\text{d}$ method on very small parameters. We take $\gamma=0.0001 $, $\beta=0.0002 $ and
plot the evolution of the computed solution $u$  and $w$ in Fig.~\ref{figure:FN-2}.
\begin{figure}[ht]
	\centering
	\subfigure{\includegraphics[scale=.28]{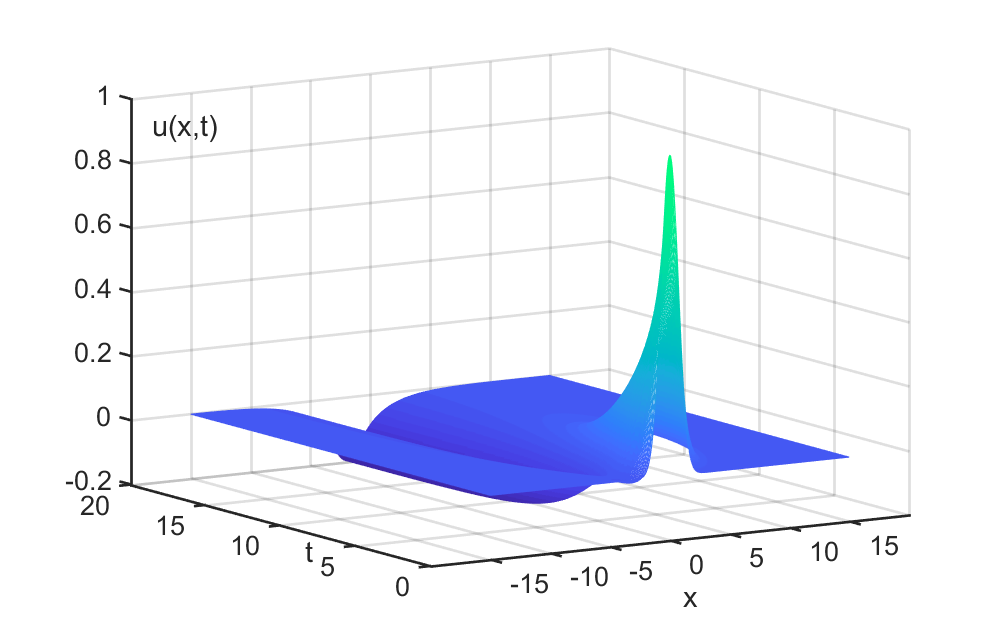}}
	\subfigure{\includegraphics[scale=.28]{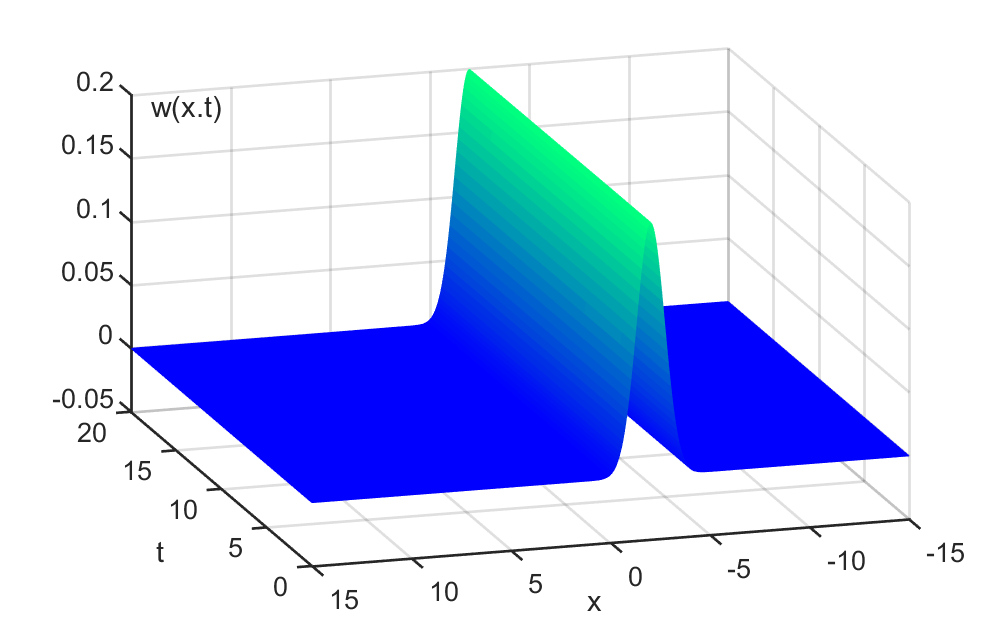}}
	%\captionsetup{justification=centering}
	\caption{The numerical solution to the FitzHugh-Nagumo equation \eqref{FN} with $\gamma=0.0001$, $\beta=0.0002$.}  \label{figure:FN-2}
\end{figure}
We observe that compared with the dynamics in Fig.~\ref{figure:FN-1}, the smaller coefficients produce a slightly different behavior. Despite the different behavior, our method is able to effectively infer the small parameters $\beta$ and $\gamma$ as $\beta=0.0002$ and $\gamma=0.000099$.

\subsubsection{Nonlinear Schr\"{o}dinger's equation}
Now, we showcase the discovery process with S$^3$d on collected data that is in complex domain. To deal with the complex data, we extend the S$^3$d method to the system with complex variables, as introduced in Appendix \ref{appendix:COM}. We consider the following nonlinear Schr\"{o}dinger's equation (also called the Gross-Pitaevskii equation) for condensed neutral atoms in an harmonic trap \citep{Mark}:
\begin{eqnarray}\label{S-1}
iu_t&=&-\frac{1}{\epsilon}(|u|^2-1) u-\epsilon u_{xx},
\end{eqnarray}
where $u(x,t)$ is the Bose-Einstein condensate (BEC) wave function. In mean field theory, Eq.~\eqref{S-1} is a classical approximation that describes a quantum mechanical nonrelativistic many body system with a two body $\delta$-function interaction, which is difficult to observe. However, we show that the $\text{S}^3\text{d}$ method can extract the dynamical regimes from the data collected from the wave function.

First, we obtain the discrete version of the wave function, $u(x,t)$, using the Fourier spectral method. We consider the initial condition, $u(x,0) = 4x^2\exp(-2x^2)\exp(1i)$, and choose parameter $\epsilon=0.3$. We perform the simulation on the domain, $[-\pi, \pi]\times[0, 8]$, with time step, $\triangle t=0.016$, and spatial step, $\triangle x=0.0123$, such that $n_t=501$ is the number of measurements form the snapshot matrices, $U\in \mathbb{R}^{512\times 501}$. Fig.~\ref{S-2} shows the evolution of the (condensate) wave function. Note, the solution has sufficiently variability in both space and time.
\begin{figure}[h]
	\centering \subfigure{\includegraphics[scale=.35]{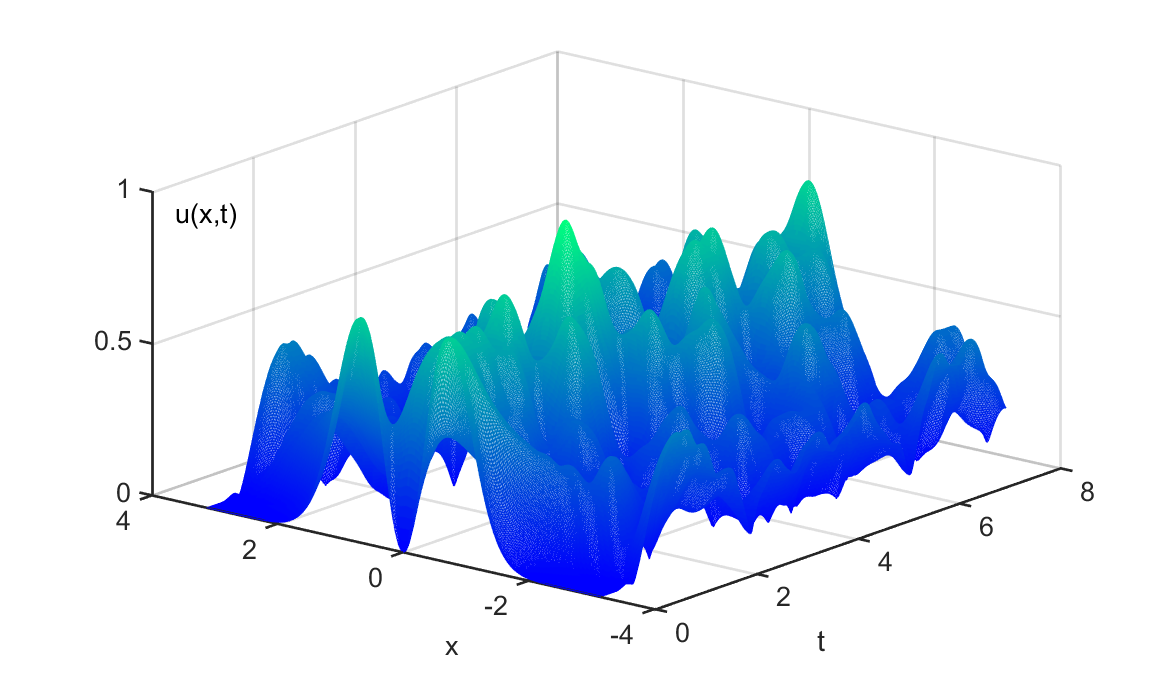}}
	%\captionsetup{justification=centering}
	\caption{Numerical solution for nonlinear  Schr\"{o}dinger's equation \eqref{S-1}.}\label{S-2}
\end{figure}
The identified results from the synthetic data and synthetic noise data is shown in Table \ref{S-3}. We use $m=40$ basis functions for our candidate terms; these terms include $u, |u|, u_x, u_{xx}, u_{xxx}$ and their  combinatorial terms. Since the wave is variable, we choose to use a high precision method (i.e. the Pade scheme \citep{Lichen} for noise free data and the polynomial interplant for the noisy data case) for estimating the derivatives in the example. Additionally, our experiment shows that the method to estimate the derivatives is critical for the success of discovery process.

\begin{table}[ht]
	\centering
	\caption{$\text{S}^3\text{d}$ Method for  Schr\"{o}dinger's equation.}\label{S-3}
	\begin{tabular}{!{\vrule width1.0pt}c|!{\vrule width1.0pt}c|!{\vrule width1.0pt}c|!{\vrule width1.0pt}c|!{\vrule width1.0pt}c|}
		\Xhline{1.0pt}
		& points &\tabincell{c}{$u_{t}=i\frac{3}{10}u_{xx}+i\frac{10}{3}|u|^2u-i\frac{10}{3}u$}&mean(err)$\pm$std(err)\\
		\Xhline{1.0pt}
		\tabincell{c}{Identified PDE\\(no noise)}&10000 &\tabincell{c}{$u_{t}=i0.3u_{xx}+i3.3333u|u|^2$\\$-i3.3333u$}&  0.0011\%$\pm$0.0007\%\\
		\Xhline{1.0pt}
		\tabincell{c}{Identified PDE\\(with noise)}&10000 &\tabincell{c}{$u_{t}=i0.3013u_{xx}+i3.2947u|u|^2$\\$-i3.3187u$}& 0.6775\%$\pm$0.4162\%\\
		\Xhline{1.0pt}
	\end{tabular}
\end{table}

\subsubsection{Klein-Gordon equation}
Consider the Klein-Gordon equation with a cubic nonlinearity
\begin{eqnarray}\label{Kg}
u_{tt}=u_{xx}-u-u^3,
\end{eqnarray}
which occurs as a relativistic wave equation. The state variable, $u(x,t)$, represents the wave displacement at position, $x$, and time, $t$. The Klein-Gordon equation is related to the Schr\"{o}dinger equation and has many applications such as spin waves or nonlinear optics \citep{Klein-Gordon-1}. We rewrite Eq.~\eqref{Kg} as a first-order system,
\begin{eqnarray*}
	\mathbf{u}_t=A\mathbf{u}+F(\mathbf{u}),
\end{eqnarray*}
with $\mathbf{u}=\left[ \begin{array}{cc}u&u_t\\\end{array}\right]^T$, $F(\mathbf{u})=\left[  \begin{array}{cc} 0& -u-u^3\\\end{array}\right]^T$, and
$
A=\left[ \begin{array}{cc}
0& 1\\
\partial_{xx} & 0
\end{array}
\right]
$.
This first-order system is similar to the Fisher's model in Eq.~\eqref{fish}, but with a different nonlinearity. To identify the Klein-Gordon equation from measurement data, we regard the second derivative of $u$ in time as the output, $Y$.

To generate the data, we use the finite difference method to solve Eq.~\eqref{Kg} with the initial condition, $u_0=x^2$, and with zero boundary condition. We divide the domain, $[0,1]\times [0.003,3.003]$, into a $101\times 1001$ mesh with the step size, $\triangle x=0.01$, and time step, $\triangle t=0.003$. In Fig.~\ref{KG},
\begin{figure}[h]
	\centering \subfigure{\includegraphics[scale=.4]{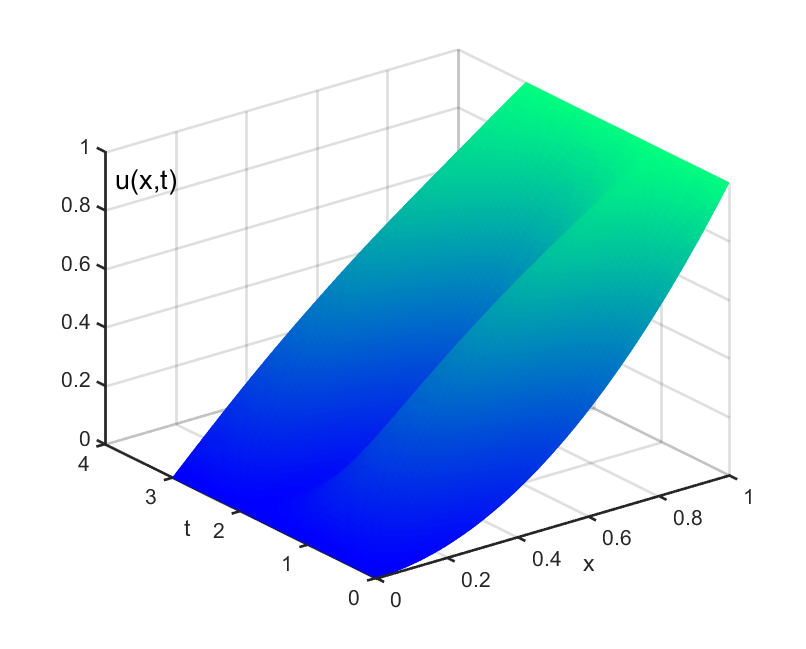}}
	%\captionsetup{justification=centering}
	\caption{Numerical solution for Klein-Gordon equation \eqref{Kg}. }\label{KG}
\end{figure}
we depict the evolution of the numerical solution, $u$. The discrete version of the numerical solution forms the snapshot matrix, $U\in \mathbb{R}^{101 \times 1001}$.

Table \ref{iKG}
\begin{table}[ht]
	\centering
	\caption{$\text{S}^3\text{d}$ Method results for Klein-Gordon equation.}\label{iKG}
	\begin{tabular}{!{\vrule width1.0pt}c|!{\vrule width1.0pt}c|!{\vrule width1.0pt}c|!{\vrule width1.0pt}c|!{\vrule width1.0pt}c|}
		\Xhline{1.0pt}
		& points &\tabincell{c}{$u_{tt}=u_{xx}-u-u^3$}&mean(err)$\pm$std(err)\\
		\Xhline{1.0pt}
		\tabincell{c}{Identified PDE\\(no noise)}&10000 &\tabincell{c}{$u_{tt}=0.9994u_{xx} -0.9995u-0.9998u^3$}& 0.0426\%$\pm$ 0.0220\%\\
		\Xhline{1.0pt}
		\tabincell{c}{Identified PDE\\(with 1\% noise)} & 15500&\tabincell{c}{$u_{tt}=0.9987u_{xx} -0.9833u-1.0303u^3$}& 1.61\%$\pm$ 1.4513\% \\
		\Xhline{1.0pt}
	\end{tabular}
\end{table}
lists our identified results for the noise-free and the noisy cases, respectively. In the noise-free case, we randomly sample $10000$ out of a total of $101\times 1001$ data points in the original dataset. The candidate terms consist of $m=20$ basis functions, including the derivatives of $u$ up to the third order. We use the explicit difference scheme to compute the derivative terms in the dictionary matrix, $\Phi$.
The MSE and STD in Table \ref{iKG} confirm the accuracy of the identified results.

In the noisy case, we add $1\%$ Gaussian noise into the original dataset and compare the synthetic noisy data with the original one. The computed RMSE is $Err(y) =0.003$. The noisy experiment demonstrates that Eq.~\eqref{Kg} is sensitive to noise.

We use the Proper Orthogonal Decomposition (POD) method to denoise our data. In Appendix \ref{appendix:POD}, we introduce the POD method in detail. By taking the first $4$ POD modes to construct the POD basis, i.e., $\Psi\in \mathbb{R}^{101\times 4}$, we obtain the de-noised dataset, $\tilde{U}=\Psi A \in \mathbb{R}^{101\times 1000}$, with the POD coefficients, $A$, for each POD mode, $A=\Psi^T U$. With our de-noised dataset, $\tilde{U}$, we estimate the derivative in the candidate terms using the polynomial approximation, which is proven to be more robust than the explicit difference method. We pay particular attention to the region from time $t_1=0.603$ to $t_2=2.1$ with a compact spatial interval, $[0.3, 0.6]$. The results in Table \ref{iKG} confirms the effectiveness of the use de-noising and discovery methods.

\subsubsection{Kuramoto-Sivashinsky equation}
Consider the  Kuramoto-Sivashinsky (KS) equation subject to periodic boundary condition, $u(x+L, t) = u(x, t)$:
\begin{eqnarray}\label{KS-e}
u_t+uu_x+u_{xxxx}+u_{xx}=0,
\end{eqnarray}
which models a small perturbation, $u(x, t)$, of a metastable planar front or interface. Here, $L$ is the size of a typical pattern scale and all other dimensional parameters are eliminated by rescaling. The KS equation can exhibit chaotic behavior \citep{Hym} and is known to have an inertial manifold \citep{Foi,constan}. This complexity is one of the KS equation's interesting features. In the KS equation, the fluctuations generated by the instability is dissipated by the (stabilizing) fourth-order derivative, $u_{xxxx}$. However, estimating the term from the data is challenging and results in large deviation. In this KS equation example, we exhibit the S$^3$d method's ability to handle PDEs with a fourth derivative term.

To obtain the data, we simulate Eq.~\eqref{KS-e} with the Fourier spectral method in~\citep{spec}. We start with the initial condition on interval, $[0, 32\pi]$:
\begin{eqnarray*}
	u_0(x)=\cos(x/L)(1+\sin(x/L)),
\end{eqnarray*}
with $L=16$. Note, the sampling period is crucial for estimating the derivatives of the candidate terms. To estimate the derivatives, we use a time step of $\triangle t=0.1$ and a spatial step of $\triangle x=0.0491$
with $n_x=2048$ spatial points and $n_t=1001$ time points. We carry out the pseudo-code in \citep{spec} to solve the KS equation.
\begin{figure}[h]
	\centering \subfigure{\includegraphics[scale=.4]{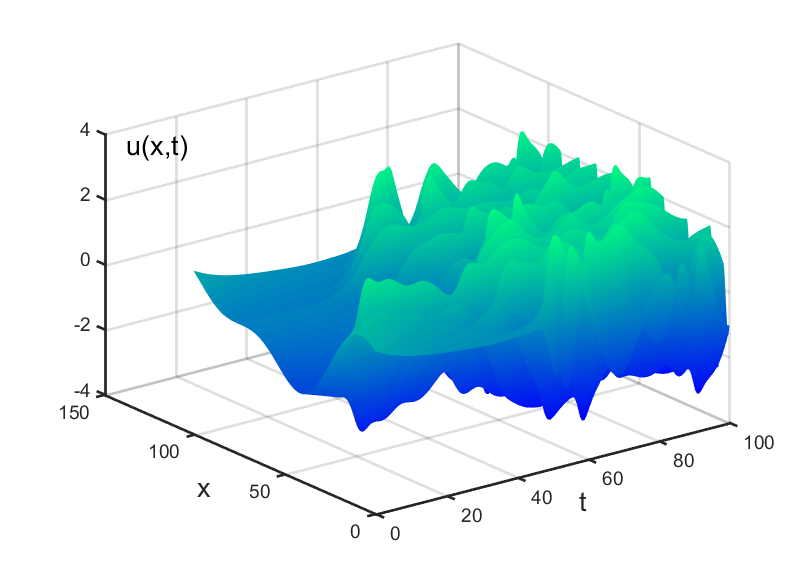}}
	%\captionsetup{justification=centering}
	\caption{Numerical solution for Kuramoto-Sivashinsky equation \eqref{KS-e}.}\label{KS}
\end{figure}
We plot the time evolution of Eq.~\eqref{KS-e} in Fig.~\ref{KS}. The data are stored into the snapshot matrix, $U\in \mathbb{R}^{2048\times 1001}$.

We choose a pool of $m=36$ basis functions, including the derivatives of solution $u$ up to the fifth order. We use the fourth-order compact Pade scheme to approximate the derivative terms. As expected, the resulting derivatives are more accurate than that are obtained using other methods. We further subsample the generated data from time $t_1=40$ to time $t_2=40.6$ using almost all of the spatial points and then we use the sub-sampled $14203$ data points for the sparse identification step. By adjusting the regularization parameter $\lambda$ in our proposed algorithm, our method correctly identifies Eq.~\eqref{KS-e}. We present the discovery results in Table \ref{iKS}.
\begin{table}[ht]
	\centering
	\caption{$\text{S}^3\text{d}$ Method for Kuramoto-Sivashinsky equation.}\label{iKS}
	\begin{tabular}{!{\vrule width1.0pt}c|!{\vrule width1.0pt}c|!{\vrule width1.0pt}c|!{\vrule width1.0pt}c|!{\vrule width1.0pt}c|}
		\Xhline{1.0pt}
		&points &\tabincell{c}{$u_t=-uu_x-u_{xx}-u_{xxxx}$}&mean(err)$\pm$std(err)\\
		\Xhline{1.0pt}
		\tabincell{c}{Identified PDE\\(no noise)}& 14203 &\tabincell{c}{$u_t=-1.0000uu_x-1.0000u_{xx}$\\$-1.0000u_{xxxx}$}&\tabincell{c}{  0.0022\%$\pm$0.0001\%}\\
		\Xhline{1.0pt}
		\tabincell{c}{Identified PDE\\(with 1\% noise)}&59210 &\tabincell{c}{$u_t=-0.9095uu_x-0.9214u_{xx}$\\$-0.9238u_{xxxx}$}&\tabincell{c}{ 8.1781\%$\pm$ 0.7678\%}\\
		\Xhline{1.0pt}
	\end{tabular}
\end{table}

We add $1\%$ Gaussian noise to the original dataset and the measured RMSE is $Err=0.0107$. In this example, we apply the POD method to de-noise our data. By projecting the data onto the $33$ POD modes obtained with the a threshold of $99.99\%$, we generate a new snapshot, $\tilde{U} \in \mathbb{R}^{2048\times 1001}$, from the original snapshot: $\tilde{U}=\Psi A$ with $\Psi$ consisting of $33$ POD modes and the corresponding POD coefficients, $A$. The measured RMSE of $Err=0.0028$ is much smaller than the noisy snapshot. Using the new snapshot, $\tilde{U}$, we use the polynomial interpolation method to estimate the derivatives. We are interested in re-sampling the time domain, $[4, 96]$. We use $59210$ out of the total $2050048$ data points for the sparse identification step. However, with the large dataset, our S$^3$d method suffers from the curse of dimensionality.

To reduce computational time, we apply the SVD technique to the dictionary matrix, $\Phi\in \mathbb{R}^{59210\times36}$, giving us a $72$ dimensional subspace denoted as $\Psi$. We project the dictionary matrix onto the $72$ dimensional subspace such that we obtain the reduced output vector, $\bar{Y}=\Psi^T Y\in \mathbb{R}^{72 \times 1}$, and the reduced dictionary matrix, $\bar{\Phi}=\Psi^T \Phi \in \mathbb{R}^{72 \times 36}$. We show in Table~\ref{iKS} that applying the $\text{S}^3\text{d}$ method to data with de-noising from the POD method and dimensionality reduction works well.

\subsubsection{Navier-Stokes equation}
Consider the incompressible Navier-Stokes (NS) equations for the unsteady two-dimensional flows on torus with vorticity/stream function formulation \citep{NS}:
\begin{eqnarray}\label{NSe}
\begin{aligned}
\omega_t+u\omega_x+v\omega_y&=(\omega_{xx}+\omega_{yy})/Re,\\
\psi_{xx}+\psi_{yy}&=-\omega,
\end{aligned}
\end{eqnarray}
where $u=\psi_y$ represents the horizontal velocity component, $v =-\psi_x $ represents the vertical velocity component, $\omega=u_y-v_x$ represents the vorticity and $Re$ is the Reynolds number (based on the radius of the cylinder and the free-stream velocity, $V_\infty$). The vorticity formulation is attractive for the accurate solution of high Reynolds number planar or axisymmetric NS equations \citep{NS-1}. In this example with the NS equations, the flow field is described by four field quantities, each of which needs to be measured for data.

We use a combination of the Fourier spectral method and the Crank-Nicolson method \citep{NS-1} to solve Eq.~\eqref{NSe} with initial condition in $x\times y\in [0, 2\pi]\times [0, 2\pi]$:
\begin{eqnarray*}
	\omega(x,y)&=&\exp\left(-\frac{1}{5}[x^2+(y+\frac{\pi}{4})^2]\right)
	+\exp\left(-\frac{1}{5}[x^2+(y-\frac{\pi}{4})^2]\right)\cr
	&&-\frac{1}{2}\exp\left(-\frac{2}{5}[(x-\frac{\pi}{4})^2+(y-\frac{\pi}{4})^2]\right).
\end{eqnarray*}
We perform the spatial discretization on a uniform grid with $\triangle x=0.0628$ and $\triangle y=0.0628$. By substituting the Fourier approximation of the solution, $u, v, \omega,$ into Eq.\eqref{NSe}, we further integrate the resultant ODEs using a time step of $\triangle t=0.1$ using the Crank-Nicolson scheme.
In Fig.~\ref{NS},
\begin{figure}[h]
	\centering
	\subfigure{\includegraphics[scale=.5]{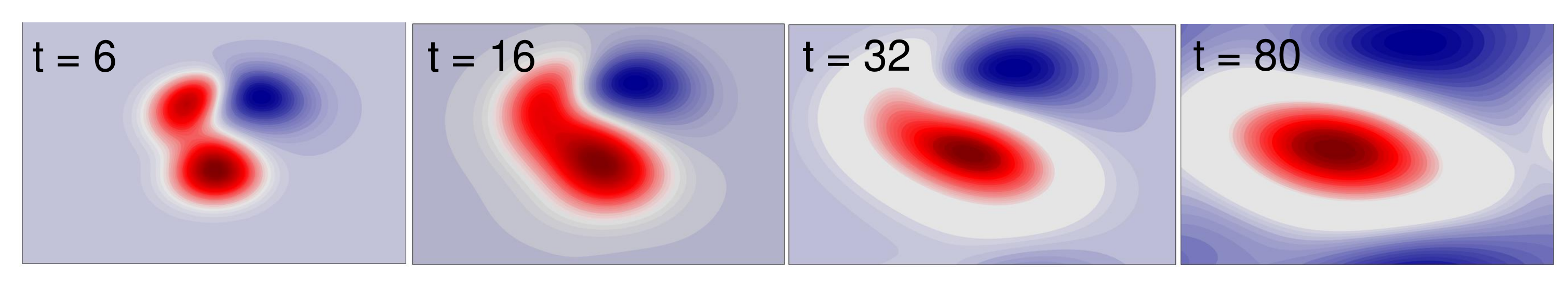}}
	%\captionsetup{justification=centering}
	\caption{Flow field for 2D NS equation \eqref{NSe} on the torus depicted in time $t=6, t=16, t=32$ and $t=80$.}
	\label{NS}
\end{figure}
we plot the 2D NS pseudo-spectral solver on the torus with Reynolds number, $Re=100$. The discrete version of the numerical solution forms the snapshot matrices, $U, V, W\in \mathbb{R}^{100\times 100\times 1001}$.

We report the discovery results in Table \ref{table:ns}.
\begin{table}[h]
	\centering
	\caption{$\text{S}^3\text{d}$ Method for NS equation.}\label{table:ns}
	\begin{tabular}{!{\vrule width1.0pt}c|!{\vrule width1.0pt}c|!{\vrule width1.0pt}c|!{\vrule width1.0pt}c|!{\vrule width1.0pt}c|}
		\Xhline{1.0pt}
		& points &\tabincell{c}{$\omega_t(x,t)=0.01\omega_{xx}+0.01\omega_{yy}$\\$-u\omega_x-v\omega_y$}&mean(err)$\pm$std(err)\\
		\Xhline{1.0pt}
		\tabincell{c}{Identified PDE\\(no noise)}& 10000 &\tabincell{c}{$\omega_t(x,t)=0.01\omega_{xx}+0.01\omega_{yy}$\\$-1.0068u\omega_x-0.9987v\omega_y$}&0.37\%$\pm$0.2305\%\\
		\Xhline{1.0pt}
		\tabincell{c}{Identified PDE\\(with 1\% noise)}& 20000&\tabincell{c}{$\omega_t(x,t)=0.0101\omega_{xx}+0.0097\omega_{yy}$\\$-1.0051u\omega_x-1.0013v\omega_y$}&0.9447\%$\pm$1.3498\%\\
		\Xhline{1.0pt}
	\end{tabular}
\end{table}
The dictionary matrix, $\Phi$, consists of $m=60$ basis functions. The underlying dynamics is described with the three state variables, $u, v, w$. Thus, our dictionary of basis functions include the linear combinations of the variables, $u, v, \omega,u_{x},v_{x},u_{xx}, v_{xx}$. With snapshots, $U, V, W$, we use the explicit difference scheme to approximate the first or second order derivatives in the noise-free case. We see from the discovery results that our method has strong performance with the noise-free dataset, a small MSE, and a small STD. We enumerate the re-sampling region in Table \ref{table:ns}.

In the noisy case, we use the 4th-order compact Pade scheme with the $\text{S}^3\text{d}$ method. We add $1\%$ Gaussian noise to the original snapshot and the resulting RMSE is $Err=0.4823$ for $u$, $Err=0.4338$ for $v$, and $Err=1.7111$ for $w$. We apply the POD method to de-noise our data. We reshape the snapshot $U$ into a $10000\times 1001$ matrix, $\bar{U}$; we do the same shaping for $W$ and $V$. By projecting the data onto 33 POD modes obtained using a threshold of 99.99 fixed, we generate a new snapshot, $\tilde{U} \in \mathbb{R}^{10000\times 1001}$, from the matrix, $\bar{U}$: $\tilde{U}=\Psi A$ with $\Psi$ consisting of 33 POD modes and the corresponding POD  coefficients, $A$. Then, we reverse the process such that snapshots, $\bar{U}, \bar{V}, \bar{W}\in \mathbb{R}^{100\times 100\times 1001}$. The computed RMSE is $0.0056$ for $u$, $0.0051$ for $v$, and $0.0412$ for $w$. We pay special attention to the Pade scheme. With the de-noised dataset, we approximate the first-order derivatives in each grid point and then we compute the second-order derivatives using the Pade scheme. After we compute the dictionary matrix, we correctly discover the NS equation as shown in Table \ref{table:ns}.

\subsection{Discovery of Complex Ginzburg-Landau Equations from Binary-Fluid Convection Experiment}\label{sec:subexperiment}
This section specifies the discovery of the complex Ginzburg-Landau equation (CGLE) from experimental measurements. Traveling-wave (TW) convection in binary fluids is a known, mainstream ansatz technique for studying the physical mechanism of non-equilibrium pattern-forming systems \citep{Cros}. Substantial experiments on TW convection unfolds many different dynamical states of TW. A major challenge in the scientific study of TW is quantitatively understanding and predicting the underlying dynamics. First-principle models based on the CGLE and its variants \citep{CrosPRA,New} have long been used to quantitatively explain experimental observations. We apply the proposed $\text{S}^3\text{d}$ method to see whether we can discover the CGLE  from experimental data alone.

\subsubsection{Data description}
The experimental data comes from an experiment conducted in an annular cell \citep{Voss,paul2}, detailed in Table \ref{Size1}:
\begin{table}[ht!]
	\centering
	\caption{
		Overview of the considered real-world TW data. More experimental details for each dataset can be found in \citep{Voss,paul2}.
		\label{Size1}}
	\begin{tabular}{cccccc}
		\Xhline{1pt}
		Experiment Label &  \tabincell{c}{bifurcation\\ parameter  } & Samples & \tabincell{c}{Sampling \\ interval} & \tabincell{c}{Space \\interval} & Truncation\\
		\Xhline{1pt}
		cal06172/car06172 & $\gamma=9.32\times 10^{-3}$ & $988\times 180$ & 1.5625 & 0.458167 & 228\\
		cal06182/car06182 & $\gamma=4.22\times 10^{-3}$ & $800\times 180 $  & 1.5625 & 0.458167 & 40 \\
		cal06192/car06192 & $\gamma=1.77\times 10^{-3}$ & $1000\times 180 $  & 1.5625 & 0.458167 &  240\\
		cal06212/car06212 & $\gamma=6.38\times 10^{-3}$ & $1000\times 180 $  & 1.5625 & 0.458167 & 240\\
		cal06222/car06222 & $\gamma=12.07\times 10^{-3}$ & $1031\times 180$ & 1.5625 & 0.458167 & 271\\
		cal06242/car06242 & $\gamma=14.03\times 10^{-3}$ & $1000\times 180 $  & 1.5625 & 0.458167 & 240\\
		cal06252/car06252 & $\gamma=16.28\times 10^{-3}$ & $1000\times 180 $  & 1.5625 & 0.458167 & 240\\
		\Xhline{1pt}
	\end{tabular}
\end{table}
\begin{itemize}
	\item The special letter ``cal" and ``car" respectively represent the left-going waves and right-going ones. The files record the experimental data;
	\item Seven bifurcation parameters, $\varepsilon$, scaled by the characteristic time, $\tau_0$ (i.e., $\gamma=\varepsilon\tau_0^{-1}$), represent the same experiments but conducted with different bifurcation parameters;
	\item  All samples used for the analysis of $\text{S}^3\text{d}$ are truncated from the $21^{th}$ sampling point to the $780^{th}$ sampling point, such that a total of $180\times 760$ data points are used for the analysis. Then, we form the snapshot matrices, $U\in \mathbb{C}^{180\times 760}$.
\end{itemize}
We define the measured left-going TW state (e.g. ``cal06172", ``cal06182") as the left-going complex amplitudes, $A_L$, in Eq.~\eqref{GLE}. We do the same for the data for the right-going complex amplitudes $A_R$, such as ``car06172",``cal06182", etc.. Note that the data is being collected periodically. Then, we use the circular Pade scheme to approximate the derivatives of the amplitude, $A_R$ (or $A_L$), from the full dataset (i.e., $180\times 760$).

In Fig.\ref{GLE19}, we show the dynamics of the weakly nonlinear wave packets for $\varepsilon \tau_0=1.77\times 10^{-3}$, which propagate around the system in opposite directions.
\begin{figure}[h!]
	\centering
	\includegraphics[scale=.4]{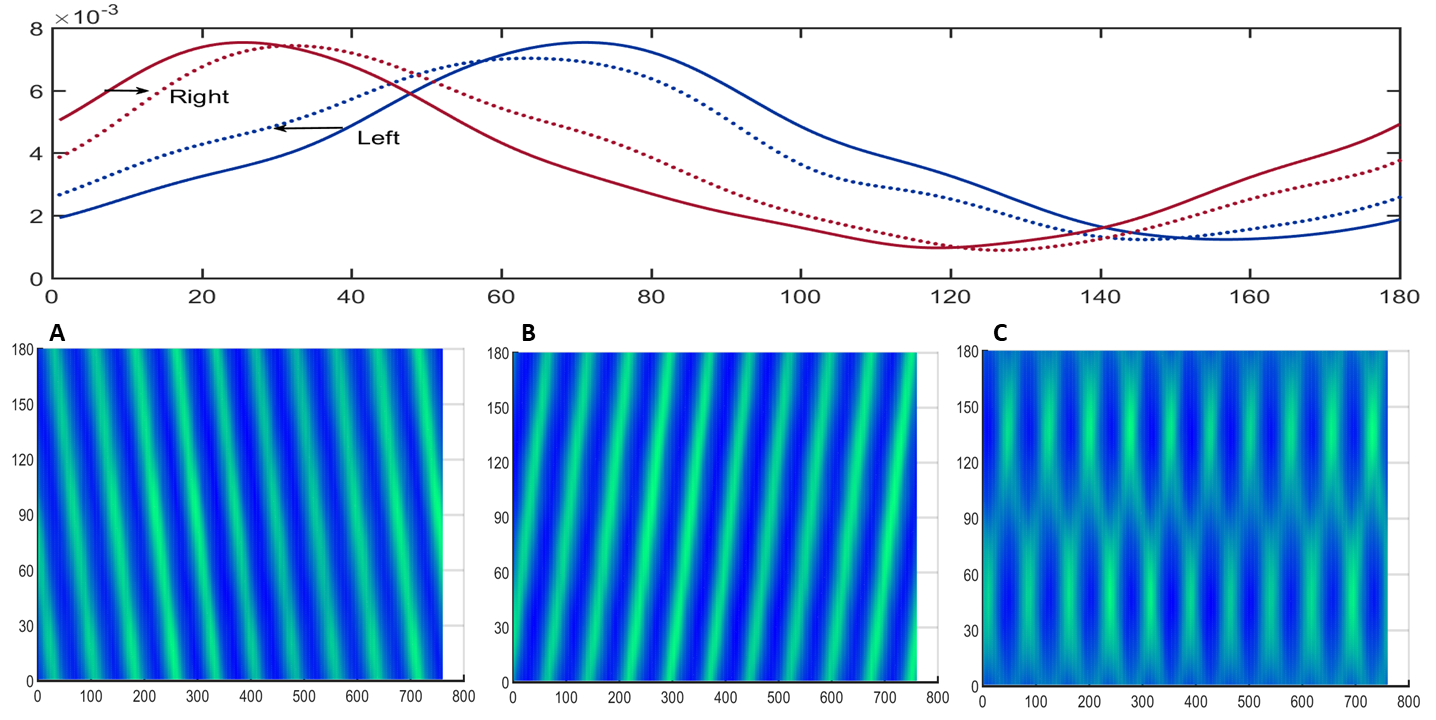}
	\caption{\textbf{The dynamics of the observed TW state for $\varepsilon \tau_0=1.77\times 10^{-3}$.} The top line shows the physical change of the amplitudes of the left-going (blue full line and dashed line) and right-going (red full curve and dashed line) waves components at a particular time and at the next time's data.
		A: The behavior of left-going amplitudes ;
		B: The behavior of right-going amplitudes;
		C: The behavior of the sum of left- and right-going amplitudes.}
	\label{GLE19}
\end{figure}

%\begin{center}
%\textbf{Complex Ginzburg-Landau equation model}
%\end{center}

\subsubsection{Discovery of CGLE}
We summarize the best-fit basis functions and parameters for each database in Table \ref{tab:results}. Here, we illustrate the performance of the $\text{S}^3\text{d}$ method in two steps: first, it determines the key nonlinearities and then recovers the coefficients from experimental data. We introduce $18$ basis functions to construct the dictionary:
$A_{L,R}$,
$|A_{L,R}|$,
$A_{L,R}^2$,
$|A_{L,R}|^2$
$A_{L,R}^3$,
$|A_{L,R}|^3$,
$A_{L,R}^2|A_{L,R}|,$
$A_{L,R}|A_{L,R}|^2$,
$\partial_{x}A_{1,2}$,
$\partial_{xx}A_{1,2}$,
$A_{L,R}\partial_{x}A_{L,R}$,
$A_{L,R}\partial_{xx}A_{L,R}$,
$A_{L,R}^2\partial_{x}A_{L,R}$,
$A_{L,R}^2\partial_{xx}A_{L,R}$,
$A_{L,R}^3\partial_{x}A_{L,R}$,
$A_{L,R}^3\partial_{xx}A_{L,R},$
$|A_{R,L}|^2A_{L,R}$. The above amounts to $3^{rd}$-order Volterra expansions of $A_{L,R}$ and $|A_{L,R}|$. We investigate the performance of the $\text{S}^3\text{d}$ algorithm on the last three sets of data with larger signal-to-noise ratio (e.g. $\varepsilon\tau_0^{-1}\geq 12.07\times 10^{-3}$ ) as discussed in \citep{Voss} in Table \ref{Size1}. We take the maximum iteration number to be $k_{max}=15$ and empirically adjust the regularization parameter, $\lambda$.

In our experiments, we observe that the $\text{S}^3\text{d}$ algorithm yields a stable result (shown in Fig.~\ref{fig-lam}):  for each data set, there exists a critical number of the basis functions before which the fitting-error slowly increases and after which the error begins to grow rapidly. The table in Fig.~\ref{fig-lam} enumerates the critical number for the bifurcation parameter, $\gamma=14.03\times 10^{-3}$; specifically, we discover $5$ key nonlinearities. In addition, the discovered key nonlinearities are consistent for two other datasets with bifurcation parameter $\gamma=12.07\times 10^{-3}$ and bifurcation parameter $\gamma=16.28\times 10^{-3}$.

The following discovered equation has the following form:
\begin{eqnarray}\label{GLE}
\begin{split}
\tau_0(\partial_t+s\partial_x)A_R&=\varepsilon(1+ic_0)A_R+\omega_0^2(1+ic_1)\partial_{x}^2A_R+g(1+ic_2)|A_R|^2A_R\\
&+h(1+ic_3)|A_L|^2A_R,\\
\tau_0(\partial_t-s\partial_x)A_L&=\varepsilon(1+ic_0)A_L+\omega_0^2(1+ic_1)\partial_{x}^2A_L+g(1+ic_2)|A_L|^2A_L\\
&+h(1+ic_3)|A_R|^2A_L.
\end{split}
\end{eqnarray}
In this model (CGLE), $A_R$ (or $A_L$) is the complex amplitude of a right-going (or left-going) wave with group velocity $s$, the parameter $\omega_0$ is a characteristic length scale and $\tau_0$ is a characteristic time which is determined experimentally by measuring the growth rate, $\gamma=\varepsilon \tau_0^{-1}$, at several values of $\varepsilon$ and fitting the slope, $c_0-c_3$ are dispersion coefficients, $g$ is a nonlinear saturation parameter, and $h$ is a nonlinear coupling coefficient which reflects the stabilizing interaction between oppositely propagating traveling-wave components \citep{Voss}. CGLE is frequently used to model non-equilibrium patterning-forming systems.

\begin{figure}[!ht]
	
	\begin{minipage}{0.25\linewidth}
		\centerline{\includegraphics[width=3.8cm]{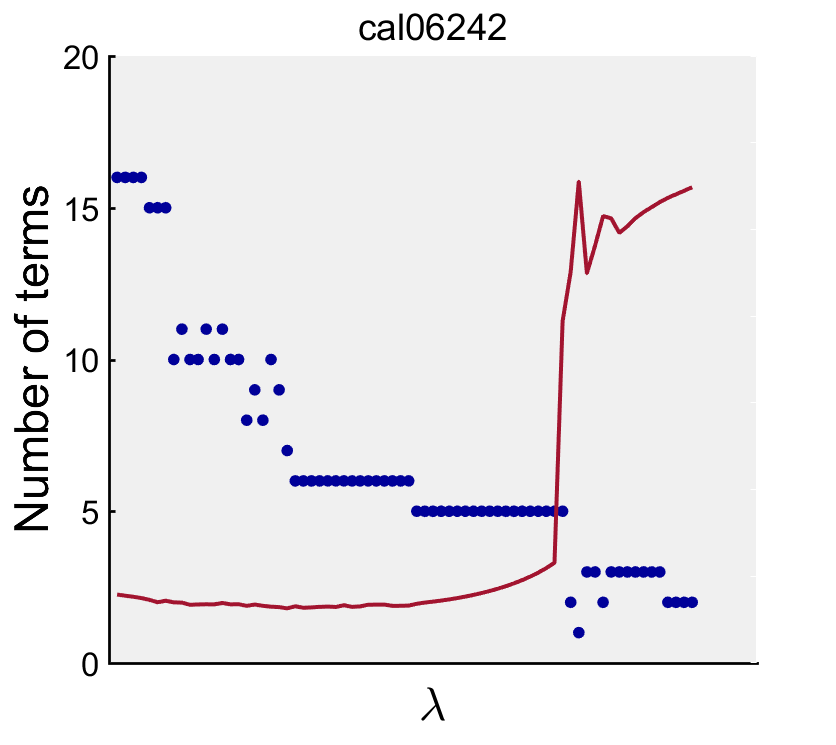}}
	\end{minipage}
	\hfill
	\begin{minipage}{0.9\linewidth}
		\begin{tiny}
			\renewcommand\arraystretch{1.1}
			\begin{tabular}{
					|p{0.0001cm}<{\centering}
					|p{0.01cm}<{\raggedleft}
					|p{0.2cm}<{\centering}
					%|p{0.1cm}<{\raggedleft}
					%|p{0.2cm}<{\centering}
					%|p{0.2cm}<{\centering}
					|p{0.4cm}<{\centering}
					|p{0.4cm}<{\centering}
					|p{0.2cm}<{\centering}
					|p{0.1cm}<{\centering}
					|p{0.2cm}<{\centering}
					|p{0.3cm}<{\centering}
					|p{0.4cm}<{\centering}
					|p{0.4cm}<{\centering}
					|p{0.5cm}<{\centering}
					|p{0.4cm}<{\centering}
					|p{0.5cm}<{\centering}
					|p{0.5cm}<{\centering}
					|p{0.001cm}<{\centering}}
				
				\Xhline{1pt}
				
				1
				& $A$
				& $\cdots$
				%& $|A|$
				%& $A^2$
				%& $|A|^2$
				%& $A^3$
				& $A^2|A|$
				& $A|A|^2$
				& $|A|^3$
				& $A_{x}$
				&$A_{xx}$
				& $AA_{x}$
				& $AA_{xx}$
				&$A^2A_{x}$
				& $A^2A_{xx}$
				&  $A^3A_{x}$
				& $A^3A_{xx}$
				& $|A_R|^2A$
				& \\ \Xhline{1pt}
				
				& \multicolumn{1}{>{\columncolor{bluebell}}l|}{} & &\multicolumn{1}{>{\columncolor{bluebell}}l|}{}&&&\multicolumn{1}{>{\columncolor{bluebell}}l|}{}&\multicolumn{1}{>{\columncolor{bluebell}}l|}{}
				&\multicolumn{1}{>{\columncolor{bluebell}}l|}{}&&\multicolumn{1}{>{\columncolor{bluebell}}l|}{}&&&&
				\multicolumn{1}{>{\columncolor{bluebell}}l|}{} & $7$ \\ \Xhline{1pt}
				
				& \multicolumn{1}{>{\columncolor{bluebell}}l|}{} & &&\multicolumn{1}{>{\columncolor{bluebell}}l|}{}&&\multicolumn{1}{>{\columncolor{bluebell}}l|}{}&\multicolumn{1}{>{\columncolor{bluebell}}l|}{}&&&&&&
				\multicolumn{1}{>{\columncolor{bluebell}}l|}{}&
				\multicolumn{1}{>{\columncolor{bluebell}}l|}{} & $6$\\ \Xhline{1pt}
				
				& \multicolumn{1}{>{\columncolor{frenchlilac}}l|}{} & &&\multicolumn{1}{>{\columncolor{frenchlilac}}l|}{}&&\multicolumn{1}{>{\columncolor{frenchlilac}}l|}{}&\multicolumn{1}{>{\columncolor{frenchlilac}}l|}{}&&&&&&&
				\multicolumn{1}{>{\columncolor{frenchlilac}}l|}{} & $5$\\ \Xhline{1pt}
				
				& \multicolumn{1}{>{\columncolor{frenchlilac}}l|}{} & &&\multicolumn{1}{>{\columncolor{frenchlilac}}l|}{}&&\multicolumn{1}{>{\columncolor{frenchlilac}}l|}{}&\multicolumn{1}{>{\columncolor{frenchlilac}}l|}{}&&&&&&&
				\multicolumn{1}{>{\columncolor{frenchlilac}}l|}{} & $\vdots$ \\ \Xhline{1pt}
				
				& \multicolumn{1}{>{\columncolor{frenchlilac}}l|}{} & &&\multicolumn{1}{>{\columncolor{frenchlilac}}l|}{}&&\multicolumn{1}{>{\columncolor{frenchlilac}}l|}{}&\multicolumn{1}{>{\columncolor{frenchlilac}}l|}{}&&&&&&&
				\multicolumn{1}{>{\columncolor{frenchlilac}}l|}{} & $5$ \\ \Xhline{1pt}
				
				&  & &&\multicolumn{1}{>{\columncolor{bluebell}}l|}{}&&\multicolumn{1}{>{\columncolor{bluebell}}l|}{}&&&&&&&&
				\multicolumn{1}{>{\columncolor{bluebell}}l|}{} & $3$ \\ \Xhline{1pt}
				
				& &&&&&\multicolumn{1}{>{\columncolor{bluebell}}l|}{}&&&&&&&&
				\multicolumn{1}{>{\columncolor{bluebell}}l|}{} & $2$ \\ \Xhline{1pt}
				
			\end{tabular}
		\end{tiny}
	\end{minipage}

	\begin{minipage}{0.25\linewidth}
		\centerline{\includegraphics[width=3.8cm]{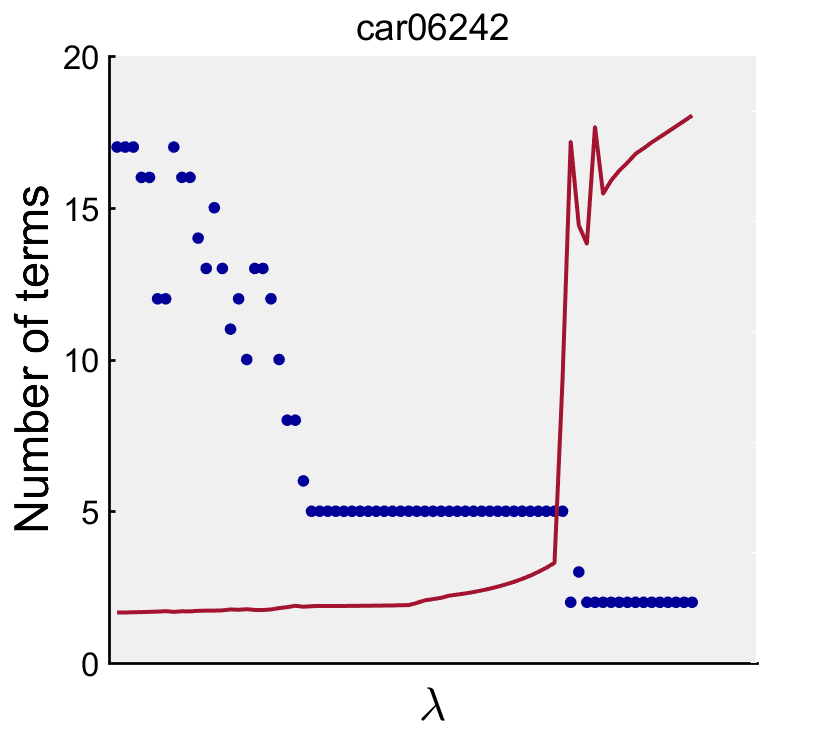}}
	\end{minipage}
	\hfill
	\begin{minipage}{.9\linewidth}
		\begin{tiny}
			\renewcommand\arraystretch{1.1}
			\begin{tabular}{
					|p{0.0001cm}
					|p{0.0001cm}<{\raggedleft}
					|p{0.1cm}<{\raggedleft}
					%|p{0.2cm}<{\centering}
					%|p{0.1cm}<{\raggedleft}
					%|p{0.2cm}<{\centering}
					%|p{0.2cm}<{\centering}
					|p{0.4cm}<{\centering}
					|p{0.4cm}<{\centering}
					|p{0.2cm}<{\centering}
					|p{0.1cm}<{\centering}
					|p{0.2cm}<{\centering}
					|p{0.3cm}<{\centering}
					|p{0.4cm}<{\centering}
					|p{0.4cm}<{\centering}
					|p{0.5cm}<{\centering}
					|p{0.4cm}<{\centering}
					|p{0.5cm}<{\centering}
					|p{0.5cm}<{\centering}
					|p{0.001cm}<{\raggedleft}}
				
				\Xhline{1pt}
				
				1
				& $A$
				& $\cdots$
				%& $|A|$
				%& $A^2$
				%& $|A|^2$
				%& $A^3$
				& $A^2|A|$
				& $A|A|^2$
				& $|A|^3$
				& $A_{x}$
				&$A_{xx}$
				& $AA_{x}$
				& $AA_{xx}$
				&$A^2A_{x}$
				& $A^2A_{xx}$
				&  $A^3A_{x}$
				& $A^3A_{xx}$
				& $|A_L|^2A$
				& \\ \Xhline{1pt}

				& \multicolumn{1}{>{\columncolor{bluebell}}l|}{} & &\multicolumn{1}{>{\columncolor{bluebell}}l|}{}&\multicolumn{1}{>{\columncolor{bluebell}}l|}{}&&\multicolumn{1}{>{\columncolor{bluebell}}l|}{}
				&\multicolumn{1}{>{\columncolor{bluebell}}l|}{}&&\multicolumn{1}{>{\columncolor{bluebell}}l|}{}&&&& \multicolumn{1}{>{\columncolor{bluebell}}l|}{}&
				\multicolumn{1}{>{\columncolor{bluebell}}l|}{} & $8$ \\ \Xhline{1pt}
				
				& \multicolumn{1}{>{\columncolor{bluebell}}l|}{} & &&\multicolumn{1}{>{\columncolor{bluebell}}l|}{}&&\multicolumn{1}{>{\columncolor{bluebell}}l|}{}&\multicolumn{1}{>{\columncolor{bluebell}}l|}{}&
				&\multicolumn{1}{>{\columncolor{bluebell}}l|}{}&&&&&
				\multicolumn{1}{>{\columncolor{bluebell}}l|}{} & $6$\\ \Xhline{1pt}
				
				& \multicolumn{1}{>{\columncolor{frenchlilac}}l|}{} & &&\multicolumn{1}{>{\columncolor{frenchlilac}}l|}{}&&\multicolumn{1}{>{\columncolor{frenchlilac}}l|}{}&\multicolumn{1}{>{\columncolor{frenchlilac}}l|}{}&&&&&&&
				\multicolumn{1}{>{\columncolor{frenchlilac}}l|}{} & $5$\\ \Xhline{1pt}
				
				& \multicolumn{1}{>{\columncolor{frenchlilac}}l|}{} & &&\multicolumn{1}{>{\columncolor{frenchlilac}}l|}{}&&\multicolumn{1}{>{\columncolor{frenchlilac}}l|}{}&\multicolumn{1}{>{\columncolor{frenchlilac}}l|}{}&&&&&&&
				\multicolumn{1}{>{\columncolor{frenchlilac}}l|}{} & $\vdots$ \\ \Xhline{1pt}
				
				& \multicolumn{1}{>{\columncolor{frenchlilac}}l|}{} & &&\multicolumn{1}{>{\columncolor{frenchlilac}}l|}{}&&\multicolumn{1}{>{\columncolor{frenchlilac}}l|}{}&\multicolumn{1}{>{\columncolor{frenchlilac}}l|}{}&&&&&&&
				\multicolumn{1}{>{\columncolor{frenchlilac}}l|}{} & $5$ \\ \Xhline{1pt}
				
				&  & &&&&\multicolumn{1}{>{\columncolor{bluebell}}l|}{}&\multicolumn{1}{>{\columncolor{bluebell}}l|}{}&&&&&&&
				\multicolumn{1}{>{\columncolor{bluebell}}l|}{} & $3$ \\ \Xhline{1pt}
				
				& &&&&&\multicolumn{1}{>{\columncolor{bluebell}}l|}{}&&&&&&&&
				\multicolumn{1}{>{\columncolor{bluebell}}l|}{} & $2$ \\ \Xhline{1pt}
				
			\end{tabular}
		\end{tiny}

	\end{minipage}

	\begin{minipage}{0.52\linewidth}
		\centerline{\includegraphics[width=8.8cm]{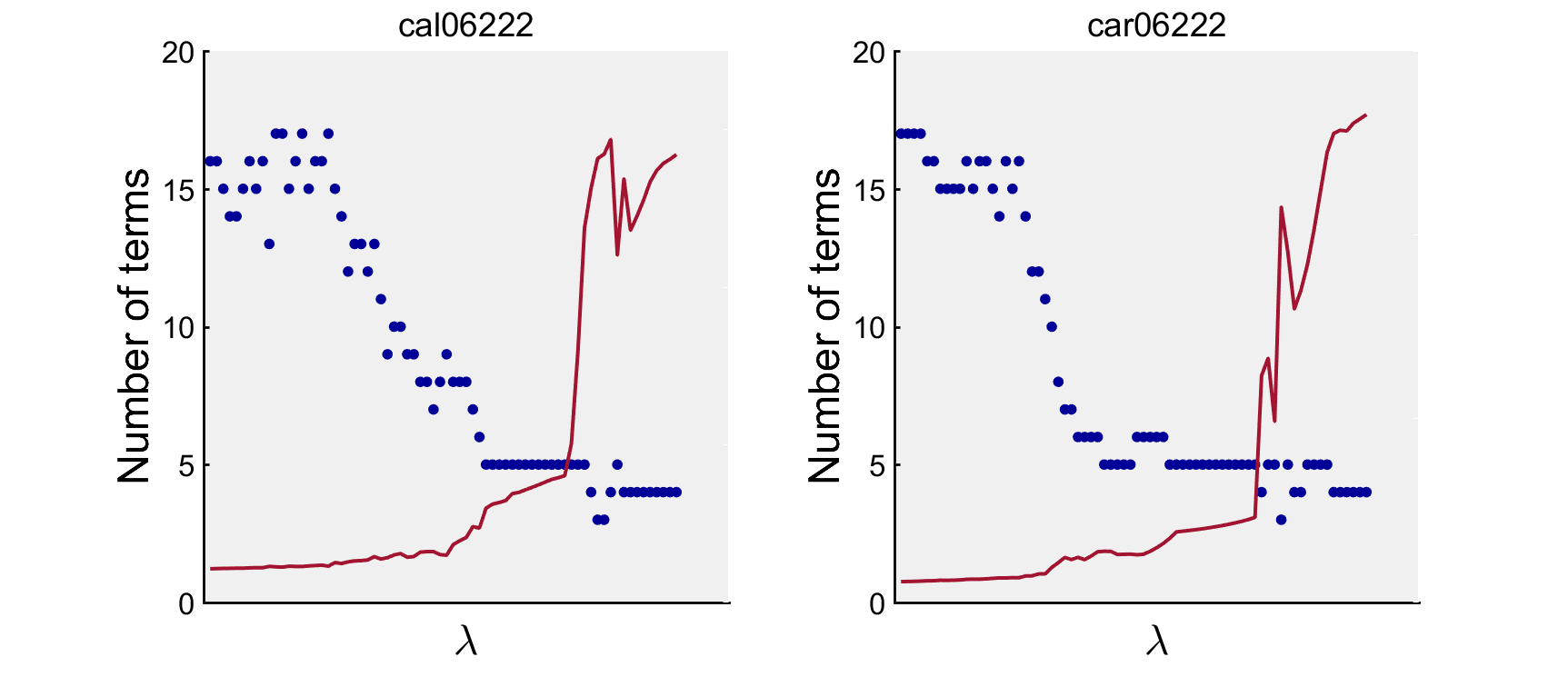}}
	\end{minipage}
	\hfill
	\begin{minipage}{.5\linewidth}
		\centerline{\includegraphics[width=8.8cm]{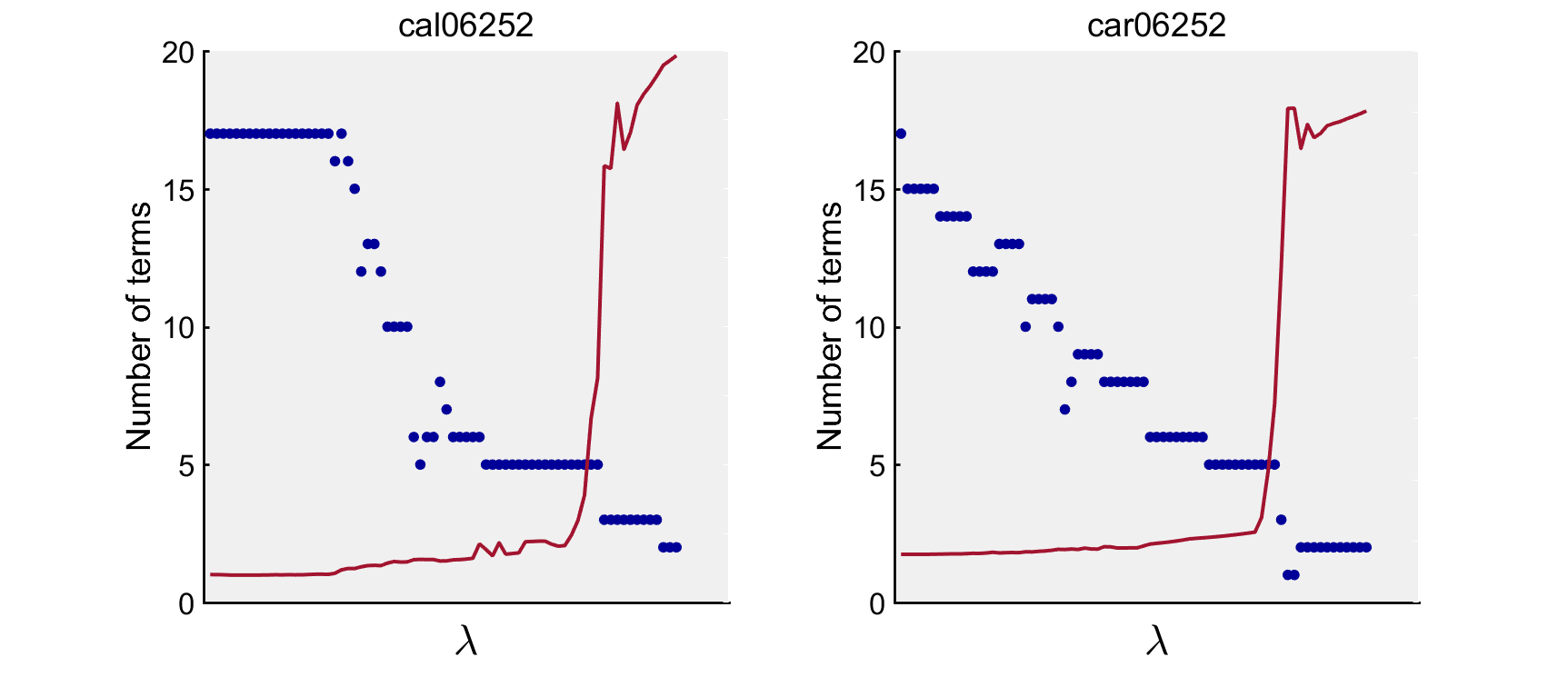}}
	\end{minipage}
	\caption{\textbf{The accelerations of fitting error due to change in the number of feature terms.} We empirically adjust the regulation parameter, $\lambda$, while observing the change of the fitting error. Three sets of data were chosen to test $\text{S}^3\text{d}$: car/cal06242, car/cal06222, and car/cal06252 from top to bottom. For each dataset, the fitting error slowly increased and tended to a relatively stable value (5 features in Eq.~\eqref{GLE} emerge), then that error began  to rapidly grow. In the table above, we show the selected basis functions for different $\lambda$.}
	\label{fig-lam}
\end{figure}

Thus, the structural analysis of the CGLE model further allows us to quantitatively explain the weakly nonlinear dynamics of TW convection. To further fine-tune the coefficients of CGLE, we apply the $\text{S}^3\text{d}$ algorithm to each data set in Table \ref{tab:results} with the discovered nonlinearities. To show our identified results in an accessible form, we introduce the concept of the leave-one-out in machine learning to discuss the best method to choose the regularization parameters $\lambda$ and obtain the best-fit coefficients. We accept the parameters that lead to the smallest fitting error on the test sets:
$err= \frac{\|Y-\Phi \omega\|^2 }{\|Y\|^2},$ by which, the identified coefficients are reported in Table~\ref{tab:results}. We can see that the identified results resemble the theoretical values and experimental ones \citep{Voss,paul2,paul3} and reference therein, except for some obscure parameters in theory and experiment.

\begin{table}[!h]
	\renewcommand\arraystretch{1.8}
	\caption{\textbf{Summary of the identified parameters of seven sets of real data:}
		The data are labeled according to ``right"(car) and ``left"(cal). Each row represents the identified coefficients
		by $\text{S}^3\text{d}$ .}\label{tab:results}
	\begin{center}
		\def\temptablewidth{1\textwidth}
		{\rule{\temptablewidth}{1pt}}
		\begin{tabular*}{\temptablewidth}{ccccccccccccccc}
			&&&&&\multicolumn{5}{c}{\bf Coefficients in the discovered complex Ginzburg-Landau equation}
		\end{tabular*}
		{\rule{\temptablewidth}{1pt}}
		\begin{scriptsize}
			\begin{tabular*}{\temptablewidth}{@{\extracolsep{\fill}}c||cccccccccccccc}
				\centering
				\tabincell{c}{\bf Experiment\\ \bf Label }&
				%\tabincell{c}{\bf fitting error}&
				\bm{$s$}&
				\bm{$ \varepsilon\tau_0^{-1} $} &
				\bm{$\xi_0^2\tau_0^{-1} $} &
				\bm{$g\tau_0^{-1}$}&
				\bm{$h\tau_0^{-1}   $}&
				\bm{$ \varepsilon\tau_0^{-1}c_0$}&
				\bm{$\xi_0^2c_1\tau_0^{-1}$}&
				\bm{$gc_2\tau_0^{-1}$}&
				\bm{$hc_3\tau_0^{-1}$}&
				\\
				\Xhline{0.8pt}
				\tabincell{c}{\bf cal06172cb}&
				
				0.5895&
				0.0065&
				0.3392&
				-17.1619&
				-6.4595&
				0.0443&
				-0.2185&
				-89.0851&
				-139.7500&
				
				\\
				\hline
				\tabincell{c}{\bf car06172cb}&
				
				0.5982&
				0.0079&
				0.9210&
				17.7800&
				-25.2965&
				0.0640&
				-0.1243&
				-206.3442&
				-150.6041&
				
				\\
				\hline
				\tabincell{c}{\bf cal06182cb}&
				
				0.6268&
				0.0057&
				0.0000&
				-78.0836&
				-0.8076&
				-0.0094&
				-0.3774&
				0.0000&
				-154.4844&

				\\
				\hline
				\tabincell{c}{\bf car06182cb}&
				
				0.6308&
				0.0081&
				0.0935&
				-90.2362&
				-35.0632&
				-0.0059&
				-0.4397&
				-55.8157&
				-157.3323&

				\\
				\hline
				\tabincell{c}{\bf cal06192cb}&
				
				0.5930&
				0.0054&
				0.0148&
				-139.4494&
				-2.5230&
				-0.0066&
				-0.6457&
				-37.6346&
				-152.7833&
				
				\\
				\hline
				\tabincell{c}{\bf car06192cb}&
				
				0.5916&
				0.0087&
				0.0158&
				-198.6799&
				-34.9115&
				-0.0066&
				-0.9189&
				-68.1182&
				-141.3105&
				
				\\
				\hline
				\tabincell{c}{\bf cal06212cb}&
				
				0.6063&
				0.0057&
				0.0639&
				-45.0095&
				-7.9602&
				0.0033&
				-0.2732&
				-35.3580&
				-154.5495&
				
				\\
				\hline
				\tabincell{c}{\bf car06212cb}&
				
				0.6131&
				0.0042&
				0.0202&
				-26.1553&
				-33.0555&
				0.0019&
				-0.1489&
				-17.7879&
				-168.7832&
				
				\\
				\hline
				\tabincell{c}{\bf cal06222cb}&
				
				0.5741&
				0.0173&
				1.7492&
				40.1657&
				-15.1882&
				0.0281&
				-0.0424&
				-379.5853&
				-136.5903&
				
				\\
				\hline
				\tabincell{c}{\bf car06222cb}&
				
				0.5771&
				0.0127&
				1.3000&
				29.9928&
				-16.6945&
				0.0114&
				-0.0894&
				-272.3181&
				-161.5083&
				
				\\
				\hline
				\tabincell{c}{\bf cal06242cb}&
				
				0.5509&
				0.0138&
				1.0701&
				16.2302&
				-17.3846&
				0.0555&
				-0.1505&
				-234.8376&
				-144.0718&
				
				\\
				\hline
				\tabincell{c}{\bf car06242cb} &
				
				0.5597&
				0.0150&
				1.1368&
				13.1359&
				-4.5687&
				0.0567&
				-0.2108&
				-237.9784&
				-149.1559&

				\\
				\hline
				\tabincell{c}{\bf cal06252cb}&
				
				0.5634&
				0.0199&
				1.1690&
				2.1061&
				-5.2084&
				0.0867&
				-0.2690&
				-267.3739&
				-169.0035&
				
				\\
				\hline
				\tabincell{c}{\bf car06252cb} &
				
				0.5481&
				0.0131&
				1.1075&
				22.3429&
				-3.9619&
				0.0794&
				-0.1245&
				-234.6182&
				-179.6777&

			\end{tabular*}
			{\rule{\temptablewidth}{1pt}}
		\end{scriptsize}
	\end{center}
\end{table}

Previous works \citep{Voss} developed nonlinear regression analysis using prior knowledge in physics to infer the CGLE model for the description of binary-fluid convection near onset. We employ the proposed $\text{S}^3\text{d}$ to discover CGLE from experimental data and lead to comparable parameters of the oscillatory behavior in the experimental system. The proposed method advances the theory without any prior knowledge in physics and can consistently discover the underlying mechanistic PDE from multiple datasets with different bifurcation parameters.

\subsubsection{Validation}
To further validate our method, we suggest simulating the identified CGLE to reconstruct the experimental chaos observed in the one-dimensional TW convection. We employ the Fourier spectral method \citep{spec} to simulate the identified CGLE model
\begin{eqnarray}\label{GLEi}
\begin{split}
\partial_t A_R+s\partial_x A_R&=\varepsilon\tau_0^{-1}(1+ic_0)A_R+\xi_0^2\tau_0^{-1}(1+ic_1)\partial_{x}^2A_R\\
&+g\tau_0^{-1}(1+ic_2)|A_R|^2A_R+h\tau_0^{-1}(1+ic_3)|A_L|^2A_R,~~~0<x<L,t>0,\\
\partial_t A_L-s\partial_x A_L&=\varepsilon\tau_0^{-1}(1+ic_0)A_L+\xi_0^2\tau_0^{-1}(1+ic_1)\partial_{x}^2A_L\\
&+g\tau_0^{-1}(1+ic_2)|A_L|^2A_L+h\tau_0^{-1}(1+ic_3)|A_R|^2A_L,~~~0<x<L,t>0,
\end{split}
\end{eqnarray}
where parameter values are given in Table~\ref{tab:results}. We consider system length of $L=82.0119$ and computational time of $T=625$ ($t_0=0, t_f=625$), which are determined based on a sampling interval with spatial step, $\triangle x=0.458167$, and time step, $\triangle t=1.5625$. We discretize the left-going complex wave amplitude, $A_L$, and the right-going complex wave amplitude, $A_R$, in space on a uniform $180$ mesh with spectral approximations of the spatial derivatives, and integrated in time using an explicit Runge-Kutta formula with $401$ time points.

Note, we conducted the experiments in an annular container where the TW system comprises periodic boundary conditions \citep{Voss}. This corresponds to the boundary condition:
$$
A_L(x,t)=A_L(x+L,t),A_R(x,t)=A_R(x+L,t).
$$
Here, we take any column of the snapshot matrix, $U$, as initial conditions. Unequivocally, we take the $622^{th}$ column of the snapshot matrix, $U$, as initial values for data labelled ``cal/car06172",  the $600^{th}$ column for  data labelled ``cal/car06192", the $600^{th}$ column for  data labelled ``cal/car06212", and the $1^{st}$ column for data labelled ``cal/car06182". Given the initial and boundary conditions, we obtain the discretized version of the solution, $A_L, A_R$, to Eq.~\eqref{GLEi} and plot their time evolution.

Fig.~\ref{fig-r} presents the comparison between the simulation and the experiments on TW convection. The simulations in the first three row of Fig.~\ref{fig-r} have a resolution of $180\times 400$ samples and the forth row has a resolution of $180\times 200$. Both resolutions correspond to reconstruction times $T=625$ and $T=312.5$, respectively. It is observed that the simulations agree well with the experiments. In addition, we also find that the reconstruction results are different under different initial values. In particular, we take  the $658^{th}$ column taken from data labelled ``cal/car06192" as the initial value. The regular bursts of left- and right-going TW are seen with the simulation time up to $T=1187$, and the amplitudes of wave approach the real values, as shown in Fig.~\ref{fig-all}.
Thus, the reconstruction from our identified CGLE model bears a striking resemblance to the experimentally observed nonlinear states of TW convection on binary-fluid convection.
\begin{figure}[!]
	\begin{minipage}{0.4\linewidth}
		\rightline{\includegraphics[width=4.6cm]{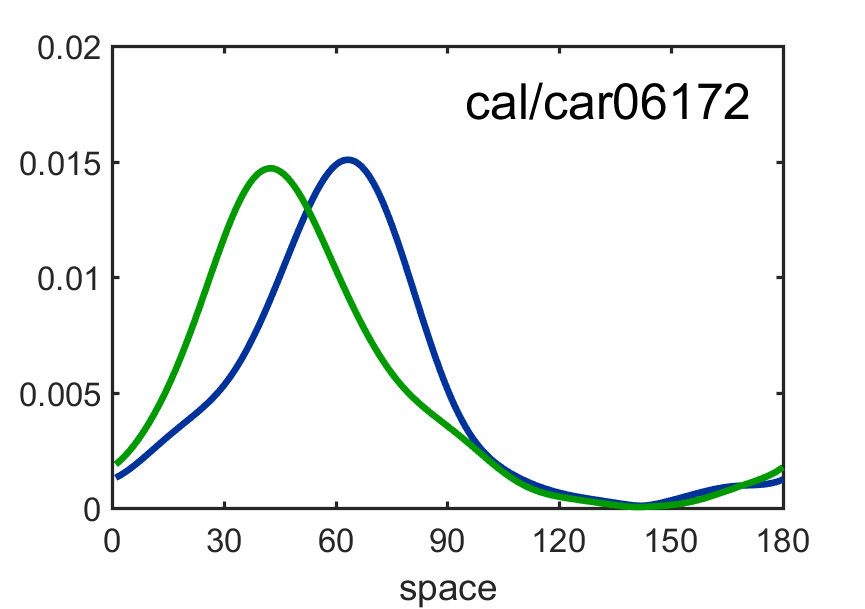}}
	\end{minipage}
	\hfill
	\begin{minipage}{.6\linewidth}
		\leftline{\includegraphics[width=9cm]{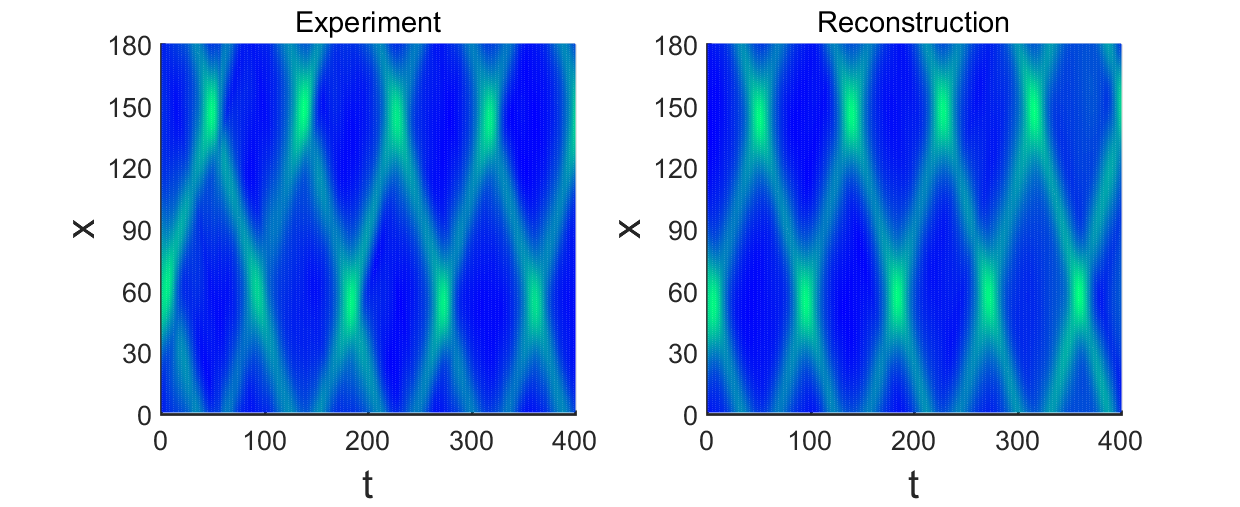}}
	\end{minipage}

	\begin{minipage}{0.4\linewidth}
		\rightline{\includegraphics[width=4.6cm]{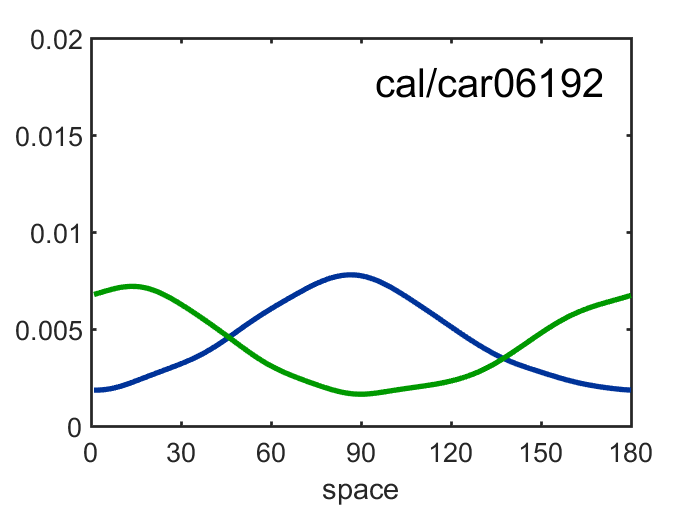}}
	\end{minipage}
	\hfill
	\begin{minipage}{.6\linewidth}
		\leftline{\includegraphics[width=9cm]{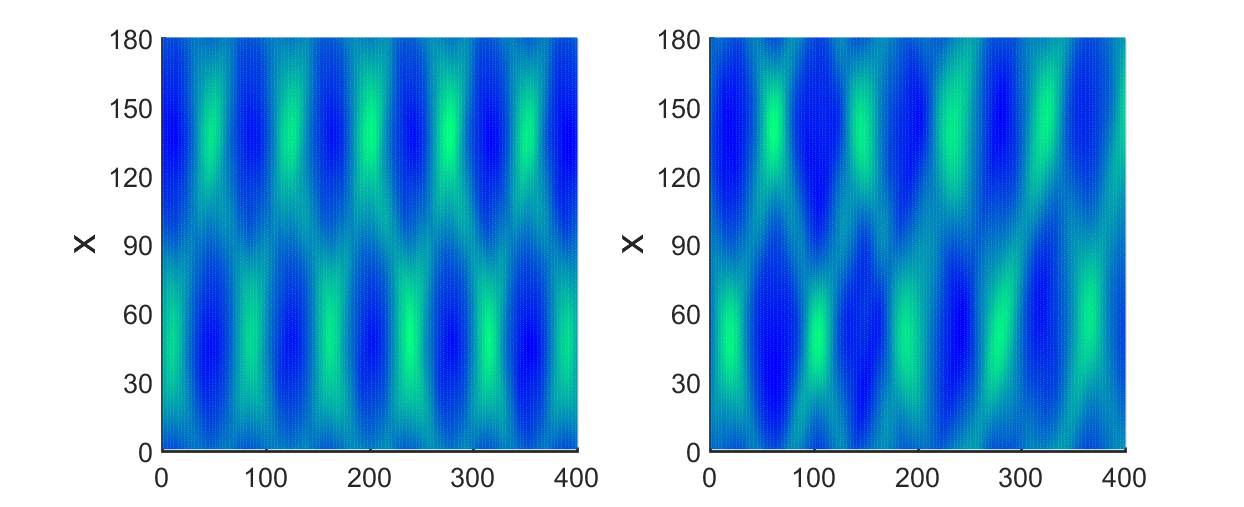}}
	\end{minipage}
	
	\begin{minipage}{0.4\linewidth}
		\rightline{\includegraphics[width=4.6cm]{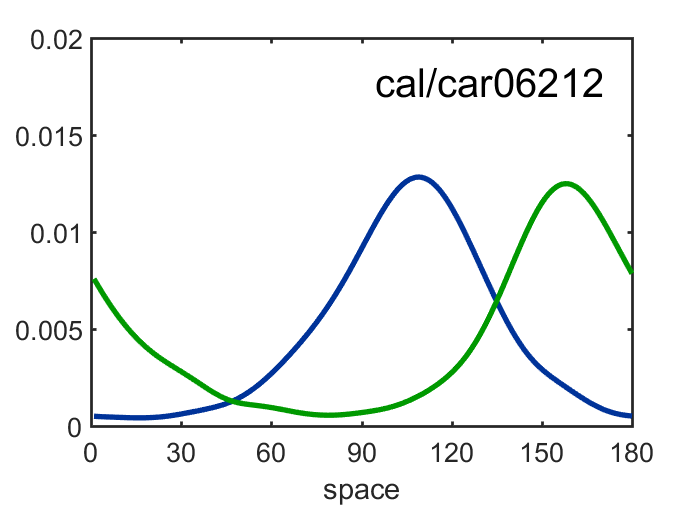}}
	\end{minipage}
	\hfill
	\begin{minipage}{.6\linewidth}
		\leftline{\includegraphics[width=9cm]{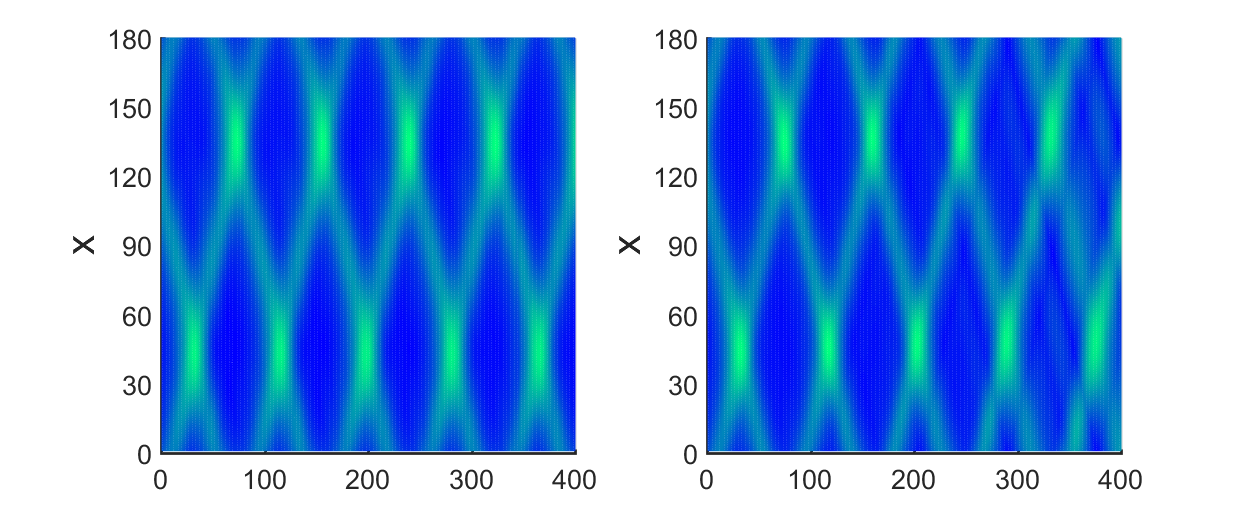}}
	\end{minipage}

	\begin{minipage}{0.4\linewidth}
		\rightline{\includegraphics[width=4.6cm]{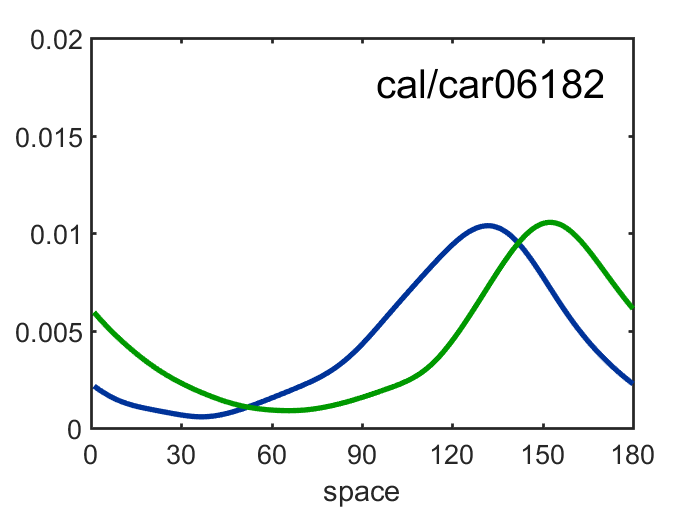}}
	\end{minipage}
	\hfill
	\begin{minipage}{.6\linewidth}
		\leftline{\includegraphics[width=9cm]{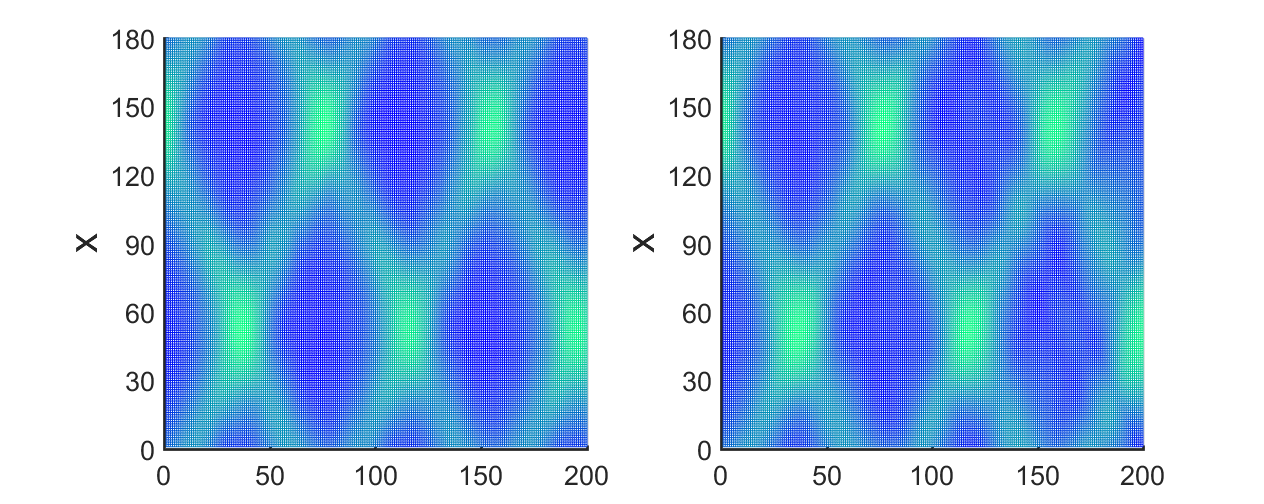}}
	\end{minipage}

	\caption{\textbf{Comparison of spatiotemporal evolution of the superposition of the left- and right- TW amplitude between the simulation and the experiment.}
		The different initial values: the $622^{th}$ column of the snapshot matrix $U$ for data labelled ``cal/car06172",  the $600^{th}$ column for data labelled ``cal/car06192", the $600^{th}$ column for  data labelled ``cal/car06212" and the $1^{st}$ column for data labelled ``cal/car06182", are selected for simulation (the first column). The first 400 (or 200) snapshots of the real data and the simulation data are shown with a resolution of $180\times 400$ (column 2 and column 3). }
	\label{fig-r}
\end{figure}

\begin{figure}[!]
	%\begin{minipage}{0.4\linewidth}
	% \rightline{\includegraphics[width=4.6cm]{182i.png}}
	%\end{minipage}
	%\hfill
	%\begin{minipage}{.6\linewidth}
	%\leftline{\includegraphics[width=9cm]{182r.png}}
	%\end{minipage}
	
	%\begin{minipage}{0.4\linewidth}
	% \rightline{\includegraphics[width=4.6cm]{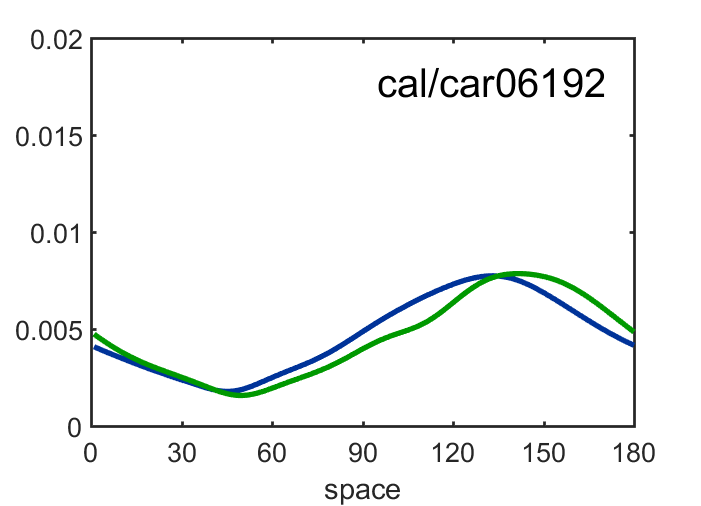}}
	%\end{minipage}
	%\hfill
	\begin{minipage}{.6\linewidth}
		\leftline{\includegraphics[width=15cm]{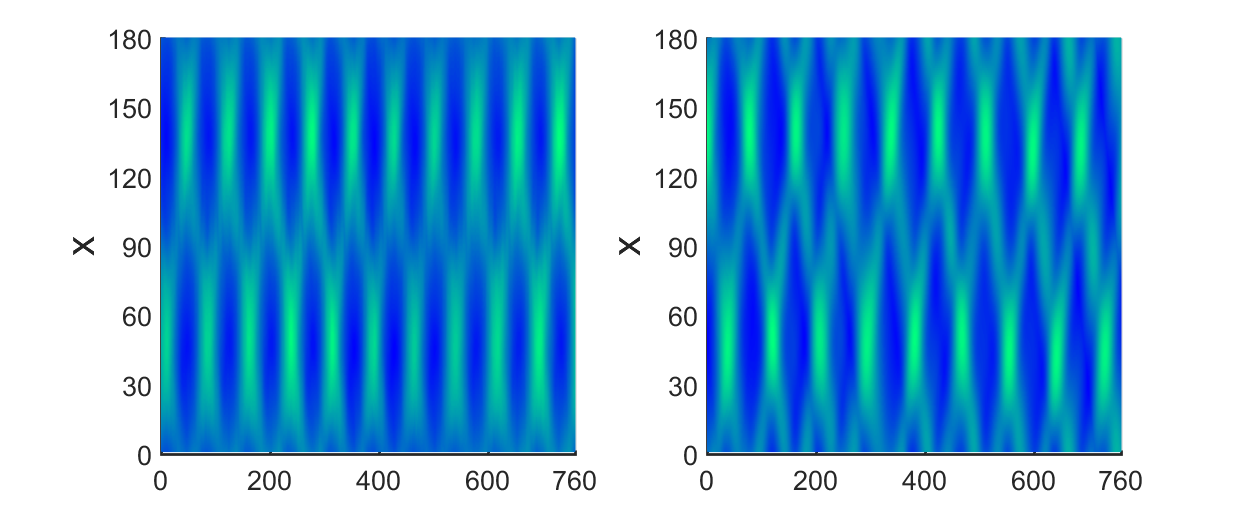}}
	\end{minipage}
	
	\caption{\textbf{Comparison of spatiotemporal evolution of the superposition of the left- and right- TW amplitude between the simulation and the experiment.}
		Using the extended time $T=1187$ and initial value (i.e., $658^{th}$column of snapshot matrix), a regular busts of TW is seen . }
	\label{fig-all}
\end{figure}

\subsection{Comparison of $\text{S}^3\text{d}$, STRidge and  Douglas-Rachford}\label{sec:comparison}
We first compare the $\text{S}^3\text{d}$ method with the STRidge regression algorithm or PDE functional identification of nonlinear dynamics (PDE-FIND) proposed in \citep{Rudy}. We consider the original dataset in \citep{Rudy}. These dataset are generated by simulating the KdV equation with single- and double- soliton solution, the Burger's equation, the Quantum Harmonic Oscillator and the nonlinear Schr\"{o}dinger equation. Here we neglect the simulation details and refer readers to \citep{Rudy}.  Table \ref{Tab:com-1} shows the identified results of applying the  $\text{S}^3\text{d}$ method to the dataset. We use a much smaller number of samples for a better discovery of these PDEs in terms of smaller parametric error.
We further investigate the impact of noise on both methods, $\text{S}^3\text{d}$ and PDE-FIND, using data generated from the Quantum Harmonic Oscillator  and the nonlinear Schr\"{o}dinger equation \citep{Rudy}. In particular, we consider data with  1\% Gaussian noise, as shown in Table~\ref{Tab:com-2}. $\text{S}^3\text{d}$ is able to discover both equations from such datasets.
\begin{table}[!hb]
	\caption{Comparison of identified results  using
		PDE-FIND  method and $\text{S}^3\text{d}$ method with the original database in \citep{Rudy}.}\label{Tab:com-1}
	\centering
	\begin{footnotesize}
		\begin{tabular}{lllllllllllllllll}
			
			\Xhline{0.5pt}
			\hline
			\multicolumn{3}{c}{\bf Example}  &\multicolumn{2}{c}{ \bf $\text{S}^3\text{d} $ \bf method }&\multicolumn{2}{c}{\bf PDE-FIND method }  \\
			\hline
			\bf Terms & \multicolumn{2}{c}{ \bf Identified results } & \bf  \tabincell{c}{No. of\\ samples} & \tabincell{c}{\bf MSE(err)-STD(err)} &\bf  \tabincell{c}{No. of\\ samples} & \tabincell{c}{\bf MSE(err)-STD(err)} \\
			\hline
			
			\multicolumn{7}{l}{\bf Korteweg-de ~Vries (Single-soliton)}\\
			
			\multicolumn{3}{c}{\tabincell{c}{$u_t=-4.9892u_{x}(c=5)$\\$ u_t=-1.0004u_{x}(c=1)$}} & 2560 & \tabincell{c}{ 0.2159 \%$\pm$0.0\%\\0.0375\%$\pm$0.0\%}
			& 12800 &\tabincell{c}{ 0.3745 \%$\pm$0.0\%\\0.0820\%$\pm$0.0\%}\\
			\multicolumn{3}{c}{\tabincell{c}{$u_t=-6.0552uu_{x}-1.0312u_{xxx}$}} &5120  & \tabincell{c}{  2.0221\%$\pm$1.558\%}  & 25600 &\tabincell{c}{  2.5931\%$\pm$1.3601\%}\\
			
			\multicolumn{7}{l}{\bf Korteweg-de ~Vries (Double-soliton)}\\
			
			\multicolumn{3}{c}{\tabincell{c}{$u_t=-5.9853uu_{x}-0.9978u_{xxx}$}} & 9000 & 0.2339\%$\pm$0.0151\% & 102912  & 0.9572\%$\pm$0.2322\% \\
			\multicolumn{3}{c}{\tabincell{c}{$u_t=-5.9250uu_{x}-0.9867u_{xxx}$}} & 16600  & 1.2894\%$\pm$0.0548\%  & 102912 & 7.4727\%$\pm$4.9306\%\\
			
			\multicolumn{7}{l}{\bf Burgers'~ equation}\\
			
			\multicolumn{3}{c}{\tabincell{c}{$u_t=-0.9999uu_x+ 0.1000u_{xx}$}} & 2000  & 0.0051\%$\pm$0.0054\% & 25856  & 0.1595\%$\pm$0.0608\%\\
			\multicolumn{3}{c}{\tabincell{c}{$u_t=-1.0010uu_x+0.1001u_{xx}$}} & 5000  & 0.0760\%$\pm$ 0.0366\%& 25856  & 1.9655\%$\pm$1.0000\%\\
			
			\multicolumn{7}{l}{\bf Quantum~ Harmonic~ Oscillator}\\
			
			\multicolumn{3}{c}{\tabincell{c}{$u_t=0.5001iu_{xx}-0.9999i\frac{x^2}{2}u$}}  & 2000  & 0.0117\%$\pm$0.0055\%  & 205312  & 0.2486\%$\pm$0.0128\%\\
			\multicolumn{3}{c}{\tabincell{c}{$u_t=0.4996iu_{xx}-1.0005i\frac{x^2}{2}u$}} & 2000 & 0.0685\%$\pm$0.0273\%  & 205312   & 9.6889\%$\pm$6.9705\%\\
			
			\multicolumn{7}{l}{\bf Nonlinear~ Schr\"{o}dinger~ equation}\\
			
			\multicolumn{3}{c}{\tabincell{c}{$u_t=0.4999iu_{xx}+0.9999i|u|^2u$}} & 7000  & 0.0199\%$\pm$0.0090\% & 256512&  0.0473\%$\pm$0.0147\%\\
			\multicolumn{3}{c}{\tabincell{c}{$u_t=0.4997iu_{xx}+0.9961i|u|^2u$}} & 7200 & 0.2250\%$\pm$0.2383\%& 256512  &3.0546\%$\pm$1.2193\%\\
			
			\hline
			\Xhline{0.5pt}
		\end{tabular}
	\end{footnotesize}
	
\end{table}

\begin{table}[htbp]
	\begin{center}
		\caption[MAhis]{Discovering the Schr\"{o}dinger~ equation and the nonlinear Schr\"{o}dinger equation from  local dynamics using $\text{S}^3\text{d}$.}\label{Tab:com-2}
		\begin{tabular}{@{}>{\sf }lllll@{}}
			\midrule
			\multicolumn{1}{>{\columncolor{mypink}}l}{Data~1(QH)} & \mgape{\includegraphics[scale=.15]{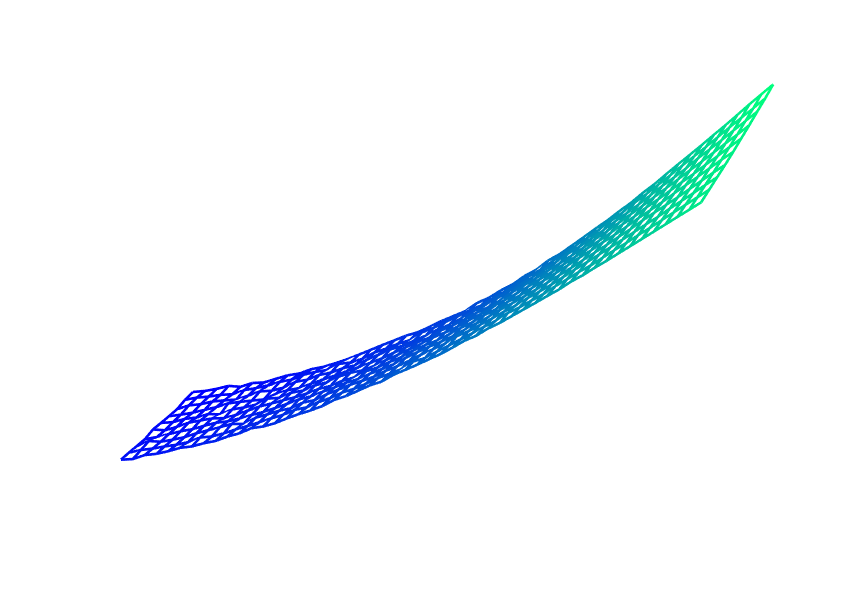}} & 500
			& \footnotesize \tabincell{c}{$u_t(x,t)=0.4732iu_{xx}(x,t)-0.9451i\frac{x^2}{2}u(x,t)$} \\
			\multicolumn{1}{>{\columncolor{mycyan}}l}{Data~2(NLS)} & \mgape{\includegraphics[scale=.15]{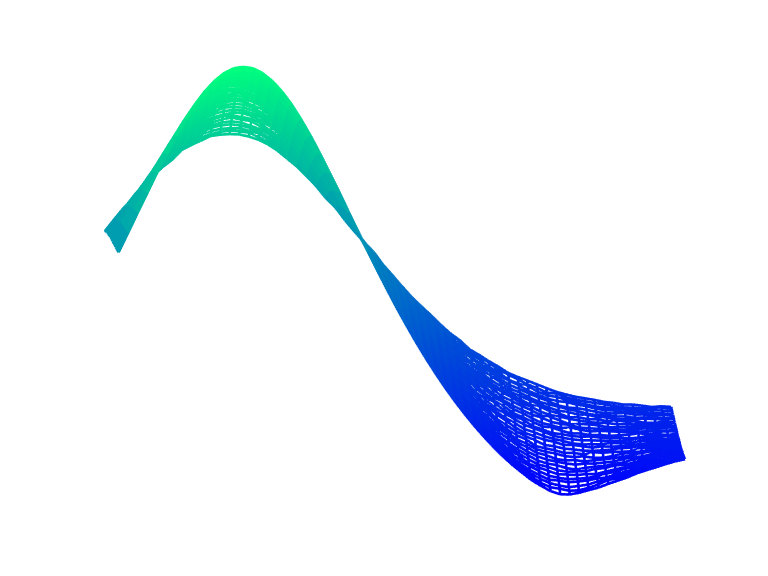}} & 1800
			& \footnotesize  \tabincell{c}{$u_t(x,t)=0.5005iu_{xx}(x,t)+0.9958iu|u|^2(x,t)$} \\
			\bottomrule
		\end{tabular}
	\end{center}
\end{table}

We then apply the PDE-FIND algorithm to Data 1(QH) in Table \ref{Tab:com-2}. For PDE-FIND, we perform a two-stage grid search for the
regularization parameter $\lambda$ and tolerance value $d_{tol}$. In the first
stage, we search through a coarse grid from $\lambda=10^{-8}$ to $10^{-1}$ and $d_{tol}=10^{-5}$ to $8.5\times 10^3$, such that
a total of $1152$ sets of parameters are tested for the PDE identification in order to make a fair comparison. However, the PDE-FIND is not able to discover the QH equation for any combinations: either it discovers the true terms $u_{xx}$ and $\frac{x^2}{2}u$  together with a few redundant terms or fails to discover the true terms. To further demonstrate the results, in the second stage, we perform a more fine-grained
search in the vicinity of that tuple ($\lambda,d_{tol}$) that gives
the most closest results. More specifically, the regularization parameter $\lambda$ varies now from $10^{-4}$ to $0.0864$ with the step $10^{-4}$, such that a total of $124,352$ sets of parameters are picked. By this, the PDE-FIND method still fail in identifying the QH equation.

For nonlinear Schr\"{o}dinger equation, the same two-stage grid search are used for the identification of the PDE using PDE-FIND. A total of $14,256$ sets of parameters were exhaustively tested for the discovery. However, the PDE-FIND fails in discovering the nonlinear Schr\"{o}dinger equation (the code used for comparison was in the Github repository).

Now, we compare our method with the work in \citep{Schae2017}, which applies the Douglas-Rachford algorithm \citep{Lions1979,Combettes2011}  to learn PDEs from the data generated by the viscous Burgers' equation, inviscid Burgers' equation, the Swift-Hohenberg equation, the Cahn-Hilliard equation, et al. However, the method makes an unrealistic assumption that only time derivatives are corrupted by additive Gaussian noise (see the section ``Simulation and numerical experiments" in \citep{Schae2017}).  Because the measured variables are always associated with noise, it will not only lead to variations in the time-derivatives but also to higher-order derivatives. Such variations will further propagate through the later identification process, which significantly increases the difficulty in identifying PDEs. The proposed  $\text{S}^3\text{d}$ method weakens their assumption by allowing adding Gaussian noise to the measured variables. As a result, both time and spatial derivatives are corrupted by noise. Indeed, this is a major challenge in the discovery process.

We compared the identification performance to the datasets generated by the NLS equation and the QH equation.
All these equations were discovered by our proposed method.
Note that there exist four tunable parameters in the Douglas-Rachford algorithm (ours has one tuning parameter): $\gamma$, $\mu$, the maximum number of iterations (MaxIt), balancing parameter $\lambda$.
Starting with $\gamma=0.1$ , $\mu=0.1$, $\lambda=10^{-3}$ and  $\text{MaxIt}=5000$ (which are picked according to examples in their papers), a grid search method was used to find its best set of parameters at reasonable ranges of these parameters. In total, we checked a total of 26244 sets of parameters for the PDE identification, which took more than 24 hours to complete. The algorithm failed to identify the correct features of the NLS equation, and gave many redundant terms across all searches. Similarly, we observed a similar failure of PDE discovery for the QH equation. In summary, the proposed Douglas-Rachford algorithm failed to discover those equations with the same limited and noisy data that was used (successfully) in $\text{S}^3\text{d}$.

\section{Conclusion and Discussion}\label{sec:conclusion}

In summary, our proposed $\text{S}^3\text{d}$ algorithm discovers the dynamics underlying a state of traveling-wave convection and many canonical PDEs from measured spatiotemporal data. The merit of the proposed method is its ability to freely construct a model class with candidate functions and automatically select the key ones that reproduce the observed spatiotemporal patterns. Benefiting from sparsity, the inferred PDEs are parsimonious
and accurate, enabling interpretability. One of the most important details of applying the $\text{S}^3\text{d}$ method is the approach for estimating the derivatives. In our work, we use the finite difference method, the polynomial
interpolation method and the spectral method. In the various examples, we observe that we are additionally able to robustly handle noise in the measurement data with polynomial
interpolation and the Pade scheme.

Our proposed $\text{S}^3\text{d}$ method exhibits the ability to extract the governing amplitude equations solely from high-quality experimental data.  Although the dynamics we refer to in this work are Eulerian dynamics described by PDEs, $\text{S}^3\text{d}$, as a general method, is also able to infer ODEs and static functional relations using datasets in \citep{Schmidt,Rudy}. $\text{S}^3\text{d}$ unifies results for the discovery of natural laws \citep{Schmidt,Bon2007}. Future work will focus on two important
aspects of extending the method for a wider range of practical applications. First, the current proposed method relies on estimating temporal and spatial derivatives of the measured state variables from data; there are cases that the system variables are not necessarily observable. Second, the current method can reconstruct equations that are linear with respect to the parameters. We are extending this proposed method to cases that are nonlinear with the parameters.

We expect the $\text{S}^3\text{d}$ method to be useful for the modeling of spatiotemporal dynamics from experimental data. This framework, as demonstrated through numerous examples, could potentially accelerate the discovery of new laws and furthermore stimulate physical explanations for the discovered equations, which lead to the discovery of the underlying mechanisms.

\section*{Acknowledgments}

We wish to thank Dr. Paul Kolodner for useful discussion and allowing us to use the experimental data. All synthetic data and codes used in this manuscript are publicly available on GitHub https://github.com/HAIRLAB/S3d.

\appendix

\section{Polynomial approximation}\label{sec:Polyapp}
For noise-contaminated data, polynomial approximation is a better choice to alleviate effects due to noise. With sampled data, $u(x_j, t_i)$, at time, $t_i$, with $j=1, \ldots, n_x$, we construct an approximation of the $q$-order derivative $\frac{\partial^q u(x,t)}{\partial x^q}$ by selecting the following sequence of polynomials of degree $p\in \mathbb{N}^+$ with $q<p$:
\begin{eqnarray*}
	L_p(x)=a_0+a_1x+a_2x^2+\cdots+a_px^p,
\end{eqnarray*}
subject to $L_p(x_j)=u(x_j, t_i)$. For example,
\begin{eqnarray}\label{Vande}
&&a_0+a_1x_1+a_2x_1^2+\cdots+a_px_1^p=u(x_1,t_i),\cr
&&a_0+a_1x_2+a_2x_2^2+\cdots+a_px_2^p=u(x_2,t_i),\cr
&&\cdots\cdots\\
&&a_0+a_1x_{n_x}+a_2x_{n_x}^2+\cdots+a_px_{n_x}^p=u(x_{n_x},t_i).\nonumber
\end{eqnarray}
Further, we write the above Eq.~\eqref{Vande} into matrix form and solve for parameters using a QR factorization. Then, with $u(x, t_i) = L_p(x)$, we compute the $q$th-order derivative, $\frac{\partial^q u(x,t)}{\partial x^q}$. We demonstrate in our experiment that the error values caused by noise can be removed using polynomial approximation, leading a closer-to-real estimation of the derivative.

\section{Proper Orthogonal Decomposition} \label{appendix:POD}
Data preprocessing is a key sub-step in $\text{S}^3\text{d}$ that removes noise in our measured datasets. There are many established approaches for rejecting noise in a set of data. For example, one can employ damping or thresholding \citep{Don, Ela, Bua}. The Proper Orthogonal Decomposition (POD) \citep{Bek, Sir}, originally a compression method, is a data processing algorithm that extracts coherent structures with a single temporal frequency from a numerical or experimental data-sequence. In this work, we instead choose to use POD for de-noising our datasets since the extracted structures, called the POD modes, do not contain directions with small variance including small information signals or signals related to noise.

We determine the POD modes from the snapshot matrix, $U$, by minimizing the Frobenius norm of the difference between $U$ and $\Psi A$,
\begin{eqnarray*}
	\underset{\Psi, A}{\min}\frac{1}{2}\|U-\Psi A\|^2_{F} ~~ s.t.~ <\psi_i, \psi_j>=\delta_{ij}
\end{eqnarray*}
where each column of $\Psi$ is a POD mode and $A$ is the coefficient matrix. We solve this minimization by using the singular value decomposition (SVD) on the empirical correlation matrix $C=UU^T$, yielding:
\begin{itemize}
	\item the $n_x\times p$ matrix comprised of the POD modes: $\Psi$;
	\item the $p\times p$ diagonal matrix with $p$ eigenvalues, $\{\lambda_i\}_{i=1}^p$: $\Sigma$; and
	\item the $p\times n_t$ matrix whose rows represent the POD coefficients for each POD mode: $A$;
\end{itemize}
Then, we capture the optimal POD modes with a sufficiently high threshold. In our work, we fix our threshold at $0.9999$, allowing us to choose integer, $r\leq p$, with
\begin{eqnarray*}
	\sum\limits_{i=1}^{r}\lambda_i^2/\sum\limits_{i=1}^{p}\lambda_i^2\geq 0.9999.
\end{eqnarray*}
where $\lambda_{i+1}\leq \lambda_{i}$. We obtain the new, filtered snapshot matrix, $\tilde{\mathbf{U}}$, with $\tilde{\mathbf{U}}=\Psi A$, with $\Psi$ taking on the first $r$ columns and $A$ taking on the first $r$ rows. In the previously stated minimization problem, we compare the error between the new snapshot matrix,$\tilde{\mathbf{U}}$, and the original snapshot matrix, $U$, and choose to use $\tilde{\mathbf{U}}\in \mathbb{R}^{n_x\times n_t}$ instead of $U$ in the remaining steps.

\section{Reduce Computational Burden}\label{appendix:reduce}
Our first strategy is to identify the dynamics from subsampled data instead of the full data. Let $\mathbf{U}_s$ be a small portion of the total snapshot matrix $\mathbf{U}$. For example, we can choose the value of solution $u(x,t)$ at all space points and in time interval $[t_k, t_{k+p}]$ such that $\mathbf{U}_s\in \mathbb{R}^{n_x\times p}$. This subsampling is similar to constructing a measurement matrix $C\in \mathbb{R}^{n_xp\times n_xn_t}$ for $y$ in Eq.~\eqref{regre}, which results in linear equations similar to Eq.~\eqref{regre}
\begin{eqnarray}\label{Re1}
Cy=C\Phi \theta\rightarrow y=\Phi \theta
\end{eqnarray}
where $y\in \mathbb{R}^{n_xp\times 1}$, $\Phi \in \mathbb{R}^{n_xp\times m}$, and $\theta \in \mathbb{R}^{m\times 1}$ with $m$ representing the total number of candidate terms. Then, we solve the linear equations with the proposed algorithm in the case $mp \ll mn$.

Our next strategy is to reduce the dimension of our problem. Whether $\Phi\in \mathbb{R}^{n_xn_t\times m}$ in Eq.~\eqref{regre} or $\Phi\in \mathbb{R}^{n_xp\times m}$ in Eq.~\eqref{Re1}, we can use SVD to obtain a set of transform basis. Specifically, using the economy-size SVD of $\Phi$ in Eq.~\eqref{Re1}, we get
\begin{eqnarray*}
	\Phi
	&=&\underset{\text{Transform ~~basis}}{\underbrace{\left[ \begin{array}{cccccccc}
				\psi_1&\psi_2&\cdots&\psi_r
			\end{array}
			\right]}}
	\underset{\text{Truncated~singular~value}}{\underbrace{ \left[ \begin{array}{cccccccc}
				\lambda_1&&\\
				&\lambda_2&&\\
				&&\ddots&\\
				&&&\lambda_r\\
			\end{array} \right]}}
	\left[ \begin{array}{cccccccc}
		A_1\\
		A_2\\
		\vdots\\
		A_r\\
	\end{array}
	\right].
\end{eqnarray*}
Then, we write the optimal low-dimensional representation as
\begin{eqnarray*}
	\bar{y}=\bar{\Psi}\theta,
\end{eqnarray*}
where $\bar{y}=\Psi^T y\in \mathbb{R}^{r\times 1}$, $\bar{\Psi}=\Psi^T \Phi\in \mathbb{R}^{r\times m}$ with $r\ll n_xp$. The proposed dimensionality reduction method is especially useful in applications where there is a large number of measurements for the sparse identification algorithm.

Depending on the size of the data, either or both the above methods are selected for reducing computational burden from high dimensionality.

\section{Systems with Complex Variables}\label{appendix:COM}
In certain applications, we consider a standard matrix form, $\tilde{y}=\tilde{\Phi}\tilde{\theta}$, where $\tilde{y}, \tilde{\theta}$ are complex vectors and $\tilde{\Phi}$ is a complex matrix. Given a complex vector $\tilde{y}$, we consider the following one-to-one mapping to a real vector $y$:
\begin{equation}\label{eq:y}
y=\begin{bmatrix} \text{Re}~\tilde{y}  \cr
\text{Im}~\tilde{y} \nonumber
\end{bmatrix}.
\end{equation}
Given complex matrix, $\tilde{\Phi}$, we consider the following construction of real matrix $\Phi$:
\begin{equation}\label{eq:M}
\Phi=\begin{bmatrix} \text{Re}(\tilde{\Phi}) & -\text{Im}(\tilde{\Phi}) \cr
\text{Im}(\tilde{\Phi}) & \text{Re}(\tilde{\Phi})\nonumber
\end{bmatrix}.
\end{equation}
With the above mapping and matrix construction, we are able to write that
\begin{equation}
\tilde{y}=\tilde{\Phi}\tilde{\theta} \Leftrightarrow  y=\Phi\theta.\nonumber
\end{equation}
Once $\theta$ is determined, we can obtain $\tilde{\theta}$. Thus, we can relate the discovery of systems with complex variables with the discovery of systems with real variables and are able to use the solution presented for the discovery of systems with real variables.

\section{Proofs of Lemmas}\label{appendix:proof}
{\begin{proof}[Proof of Lemma~\ref{lem2}]
By the definition of $\widehat{\mathcal{L}}$, and the items (1), (2), (3), (4), $\lim \limits_{k\rightarrow \infty}\widehat{\mathcal{L}}(\bar{\theta}^{(k)},$ $\bar{\gamma}^{(k)};\theta^{(k)},\gamma^{(k)})$ exists, and therefore the sequence $\big{\{}\widehat{\mathcal{L}}(\bar{\theta}^{(k)},\bar{\gamma}^{(k)};\theta^{(k)},\gamma^{(k)})\big{\}}_{k=0}^{\infty}$ is bounded.

 We first show that
\begin{equation}\label{l2}
  \lim \limits_{k\rightarrow \infty}\widehat{\mathcal{L}}(\bar{\theta}^{(k)},\bar{\gamma}^{(k)};\theta^{(k)},\gamma^{(k)})=\widehat{\mathcal{L}}(\theta^{\ast\ast},\gamma^{\ast\ast};\theta^{\ast},\gamma^{\ast}).
\end{equation}
To this end, it suffices to show that
\begin{equation}\label{l3}
 \lim \limits_{k\rightarrow \infty}\widehat{\mathcal{L}}(\bar{\theta}^{(k)},\bar{\gamma}^{(k)};\theta^{\ast},\gamma^{\ast})=\widehat{\mathcal{L}}(\theta^{\ast\ast},\gamma^{\ast\ast};\theta^{\ast},\gamma^{\ast}).
\end{equation}
% Note though that, $\widehat{L}(\theta,\gamma;\theta^{\ast},\gamma^{\ast})$ is undefined on $\Omega_2$ and hence we supplement its definition just as the supplementary definition of $\widehat{L}(\theta,\gamma;\theta^{\ast},\gamma^{\ast})$.
 If $(\theta^{\ast\ast},\gamma^{\ast\ast})\in \mathrm{int}(\Omega)$, then one has from the continuity of $\widehat{\mathcal{L}}(\theta,\gamma;\theta^{\ast},\gamma^{\ast})$ on $\mathrm{int}(\Omega)$ that \eqref{l3} holds; If $(\theta^{\ast\ast},\gamma^{\ast\ast})\in \Omega_2$, then by the supplementary definition of $\widehat{\mathcal{L}}(\theta,\gamma;\theta^{\ast},\gamma^{\ast})$ on $\Omega_2$, \eqref{l3} holds. Thus, \eqref{l2} holds.

  Next, we show \eqref{l1}. To complete the proof let us assume that $(\theta^{\ast\ast},\gamma^{\ast\ast})\not \in \mathcal{A}(\theta^{\ast},\gamma^{\ast})$ and establish a contradiction. Since $\widehat{\mathcal{L}}(\theta,\gamma;\theta^{\ast},\gamma^*)$ is convex,
  $\mathcal{A}(\theta^\ast,\gamma^\ast)$ is nonempty. Let $(\tilde{\theta},\tilde{\gamma})$ be any point in $\mathcal{A}(\theta^\ast,\gamma^\ast)$. Then, one has $$\widehat{\mathcal{L}}(\tilde{\theta},\tilde{\gamma};\theta^{\ast},\gamma^{\ast}) <\widehat{\mathcal{L}}(\theta^{\ast\ast},\gamma^{\ast\ast};\theta^{\ast},\gamma^{\ast}).$$
Select a sufficiently small positive number $\varepsilon$ such that $$2\varepsilon<\widehat{\mathcal{L}}(\theta^{\ast\ast},\gamma^{\ast\ast};\theta^{\ast},\gamma^{\ast})-\widehat{\mathcal{L}}(\tilde{\theta},\tilde{\gamma};\theta^{\ast},\gamma^{\ast}).$$
By \eqref{l2}, there exists a positive number $k_1$ such that
 \[
\widehat{\mathcal{L}}(\bar{\theta}^{(k)},\bar{\gamma}^{(k)};\theta^{(k)},\gamma^{(k)})>  \widehat{\mathcal{L}}(\theta^{\ast\ast},\gamma^{\ast\ast};\theta^{\ast},\gamma^{\ast})-\varepsilon,
 \]
 for any $k>k_1$. By the definition of $\widehat{\mathcal{L}}(\tilde{\theta},\tilde{\gamma};\theta,\gamma)$, we know that $\widehat{\mathcal{L}}(\tilde{\theta},\tilde{\gamma};\theta,\gamma)$ with respect to $(\theta,\gamma)$ is continuous on $\Omega$. Then, there exists a positive number $k_2$ such that
 \[
 \widehat{\mathcal{L}}(\tilde{\theta},\tilde{\gamma};\theta^{(k)},\gamma^{(k)})<  \widehat{\mathcal{L}}(\tilde{\theta},\tilde{\gamma};\theta^{\ast},\gamma^{\ast})+\varepsilon,
 \]
  for any $k>k_2$. Putting together these pieces above yields
  \[
\widehat{\mathcal{L}}(\bar{\theta}^{(k)},\bar{\gamma}^{(k)};\theta^{(k)},\gamma^{(k)})>\widehat{\mathcal{L}}(\tilde{\theta},\tilde{\gamma};\theta^{(k)},\gamma^{(k)}),
  \]
 for any $k>\max\{k_1,k_2\}$. However, we have from the item (5) that
 \[
 \widehat{\mathcal{L}}(\bar{\theta}^{(k)},\bar{\gamma}^{(k)};\theta^{(k)},\gamma^{(k)})\leq \widehat{\mathcal{L}}(\theta,\gamma;\theta^{(k)},\gamma^{(k)}),
 \]
 for any $(\theta,\gamma)\in \Omega$. It is a contradiction, and hence, \eqref{l1} holds.
\end{proof}
}

\bibliography{xtref}

\end{document}